\newcolumntype{H}{>{\setbox0=\hbox\bgroup}c<{\egroup}@{}}
\pgfplotsset{plot coordinates/math parser=false}
\pgfplotsset{soldot/.style={color=blue,only marks,mark=*}} 
\pgfplotsset{holdot/.style={color=blue,fill=white,only marks,mark=*}}
\definecolor{dark-gray}{gray}{0.3}
\newcommand{\prob}[1]{\ensuremath{\text{Pr}\left\{#1 \right\} }}
\newcommand{\fhm}{fast hypermutation\xspace}
\newcommand{\foea}{Fast $(1 + 1)$~EA$_\beta$\xspace}
\newcommand{\fastia}{Fast~$(1+1)$~IA\xspace}
\newcommand{\fastianp}{Fast~$(1+1)$~IA$_{\gamma}$\xspace}
\newcommand{\fastoptia}{Fast~Opt-IA$_\gamma$\xspace}
\newcommand{\oneoneea}{$(1+1)$~EA}
\newcommand{\oneoneiahype}{$(1+1)$~IA\xspace}
\newcommand\T{\rule{0pt}{2.6ex}}      
\newcommand\B{\rule[-1.2ex]{0pt}{0pt}}
\newtheorem{theorem}{Theorem}
\newtheorem{definition}{Definition}
\newtheorem{lemma}[theorem]{Lemma}
\newtheorem{corollary}[theorem]{Corollary}
\newcommand{\partition}{\textsc{Partition}\xspace}
\newcommand{\jumpf}{\textsc{Jump}\xspace}
\newcommand{\clifff}{\textsc{Cliff}\xspace}
\newcommand{\hypfcm}{FCM$_\gamma$\xspace}
\newcommand{\hypheavy}{FCM$_\beta$\xspace}
\begin{document}

\title{Fast  Immune System Inspired Hypermutation Operators for Combinatorial Optimisation}


\author{\IEEEauthorblockN{Dogan Corus,
Pietro S. Oliveto, and
Donya Yazdani}\\
\IEEEauthorblockA{Department of Computer Science,
University of Sheffield, S1 4DP, UK}\\
\IEEEauthorblockN{d.corus@sheffield.ac.uk, p.oliveto@sheffield.ac.uk, dyazdani1@sheffield.ac.uk}
}

%
%
%
%



\IEEEtitleabstractindextext{%
\begin{abstract}
Various studies have shown that immune system inspired  
hypermutation operators can allow artificial immune systems (AIS) to  be very efficient at escaping local optima of multimodal optimisation problems.
However, this efficiency comes at the expense of considerably slower runtimes during the exploitation phase compared to standard evolutionary algorithms. We propose modifications to the traditional `hypermutations with mutation potential' (HMP) that allow them to be efficient at exploitation as well 
as maintaining their effective explorative characteristics.  Rather than deterministically evaluating fitness after each bit-flip of a hypermutation,  we sample the fitness function stochastically with a `parabolic' distribution which allows  the `stop at first constructive mutation' (FCM) variant of HMP to reduce the linear amount of wasted function evaluations when no improvement is found to a constant.  
The stochastic distribution also allows the removal of the FCM mechanism altogether as originally desired in the design of the HMP operators.
We rigorously prove the effectiveness of the proposed operators 
for all the benchmark functions where the performance of HMP is rigorously understood in the literature and validating the gained insights to show linear speed-ups for the identification of high quality approximate solutions to classical NP-Hard problems from combinatorial optimisation. We then show the superiority of the HMP operators to the traditional ones in an analysis of the complete 
 standard Opt-IA AIS, where the stochastic evaluation scheme allows HMP and ageing operators to work in harmony.
Through a comparative performance study of other `fast mutation' operators from the literature, we conclude that a power-law distribution for the parabolic evaluation scheme is the best compromise in black box scenarios where little problem knowledge is available.
\end{abstract}

\begin{IEEEkeywords}
Artificial immune systems, Hypermutation, Runtime analysis
\end{IEEEkeywords}}

\maketitle

\IEEEdisplaynontitleabstractindextext

%
\IEEEpeerreviewmaketitle

\section{Introduction}

%
%
%
%
Several artificial immune systems (AISs) inspired by Burnet's clonal selection principle~\cite{Burnet1959} have been developed to solve optimisation problems.
Amongst these, Clonalg \cite{DecastroVonzuben2002}, the B-Cell algorithm \cite{KelseyTimmis2003} and Opt-IA \cite{CutelloTEVC} are the most popular.
Being inspired by the immune system, a common feature of these algorithms is that they have particularly high mutation rates compared to more traditional evolutionary algorithms (EAs) which, inspired in turn by natural evolution, have traditionally used considerably lower mutation rates.

For instance, the {\it contiguous somatic hypermutations} (CHM) used by the B-Cell algorithm, choose two random positions in the genotype of a candidate solution and 
flip all the bits in between\footnote{A parameter may be used to define the probability that each bit in the region actually flips.}. This operation results in a linear number of bits 
being flipped on average in a mutation. 
The {\it hypermutations with mutation potential} (HMP) used by Opt-IA also flip a linear number of bits. 
However, it has been proved that their basic originally proposed static version, where a linear number of bits are always flipped, cannot  optimise efficiently any function with any polynomial number of optima~\cite{CorusOlivetoYazdani2019TCS}. 
On the other hand, much better performance has been shown in theory~\cite{CorusOlivetoYazdani2019TCS} and in practice~\cite{CutelloTEVC} for the version that evaluates the fitness after each bit flip in the hypermutation and stops the process if an improving solution is found (i.e., static HMP with {\it stop at first constructive mutation} (FCM)). 

Various studies have shown how these high mutation rates allow AISs to escape from local optima for which more traditional randomised search heuristics struggle.
Jansen and Zarges proved for a benchmark function called Concatenated Leading Ones Blocks (CLOB) an expected runtime of $O(n^2 \log n)$ using CHM versus the exponential time required by EAs relying on standard bit mutations (SBM) since many bits need to be flipped simultaneously to make progress~\cite{JansenZargesTCS2011}. 
Similar effects have also been shown for instances of the longest common subsequence \cite{JansenZarges2012} and vertex cover \cite{JansenOlivetoZarges2011vertex} combinatorial optimisation problems with practical applications, where CHM efficiently escapes local optima while EAs (with and without crossover) are trapped for exponential time. Also, the HMP with FCM of Opt-IA have been proven to be considerably efficient at escaping local optima such as those of the multimodal \textsc{Jump}, \textsc{Cliff}, and \textsc{Trap} benchmark functions that 
standard EAs find very difficult \cite{CorusOlivetoYazdani2019TCS}. Furthermore, their effectiveness at escaping from local optima  has been shown to guarantee arbitrarily good constant approximations for the NP-Hard \partition problem while RLS and EAs may get stuck on bad approximations~\cite{CorusOlivetoYazdaniAIJ2019}.

The efficiency on multimodal problems of these AISs comes at the expense of being 
considerably slower than EAs in the final exploitation phase of the optimisation process 
when few bits have to be flipped. For instance, CHM requires  $\Theta(n^2 \log 
n)$ expected function evaluations to optimise the easy \textsc{OneMax} and 
\textsc{LeadingOnes} unimodal benchmark functions. Indeed, it has recently  been shown that CHM 
requires at least $\Omega(n^2)$ function evaluations to optimise any function 
since its expected runtime for its easiest function is 
$\Theta(n^2)$~\cite{EasiestFunctions}. Another disadvantage of CHM is that it is {\it biased}, in the sense that it behaves differently according to the order in which the information is encoded in the bit-string.
In this sense, the {\it unbiased} HMP operators used by Opt-IA are easier and more convenient to apply as their performance does not depend on the encoding order of the bit positions.
However, the static HMP operator with FCM has also been proven to have runtimes of respectively $\Theta(n^2 \log n)$ expected fitness evaluations for \textsc{OneMax} and $\Theta(n^3)$ for \textsc{LeadingOnes}. Recently, speed-ups in the exploitation phase have been shown for the Inversely Proportional HMP variant (INV HMP), that aims to decrease the mutation rate as the local and global optima are approached~\cite{CorusOlivetoYazdaniIPH2019}. On one hand, while faster, INV HMP operators are still asymptotically slower than RLS and EAs for easy hillclimbing problems such as \textsc{OneMax} and \textsc{LeadingOnes}. On the other hand, the speed-ups at hill-climbing are achieved at the expense of losing their power at escaping from local optima via mutation. Since the mutation rates are lowest on local optima, it is unlikely that the INV HMP operator can escape quickly via hypermutation.
%


 In this paper, we propose a modification to the static HMP operator to allow it to be very efficient in the exploitation phases while  
maintaining its essential characteristics for escaping from local optima.
Rather than evaluating the fitness after each bit flip of a hypermutation as the traditional HMP with FCM requires, we propose to evaluate the fitness based on the probability that the mutation will be successful.

The probability of hitting a specific point at Hamming distance $i$ from the current point (i.e., ${n \choose i}^{-1}$) decreases exponentially with the Hamming distance for $i < n/2$ and then it increases again in the same fashion. Based on this observation, we evaluate each bit following a  parabolic distribution such that the probability of evaluating the $i$th bit flip decreases as $i$ approaches $n/2$ and then increases again. 
We call the resulting operator \hypfcm{} and embed it in an algorithm called \fastianp.

We rigorously prove that the \fastianp locates local optima asymptotically as fast as random local search (RLS) for any function where the expected runtime of RLS can be proven using the standard artificial fitness levels method (AFL). At the same time, the operator is still exponentially faster than EAs for the standard multimodal 
\textsc{Jump}, \textsc{Cliff}, and \textsc{Trap} benchmark functions. 

We also validate the insights gained from the analysis for benchmark functions on classical NP-Hard problems from combinatorial optimisation. We first derive a smaller upper bound compared to static HMP on the expected runtime required by the \fastianp to find arbitrarily good constant approximations to the \partition problem. This result is surprising because the proof requires mutations of approximately $n/2$ bits. This is exactly the range of mutations which is penalised by our proposed distribution. Nevertheless, the greater exploitative capabilities of the hypermutation operator lead to a linear factor smaller upper bound on the expected runtime because the time spent in the hillclimbing phases dominates the overall expected runtime. Thus, the utility of our modifications is proven on a problem with many real world applications. 
Recall that EAs using SBM may get stuck on bad $4/3$ approximations for exponential  time.
We also rigorously prove linear speed-ups for the NP-Hard \textsc{Vertex Cover} problem, compared to the static HMP operator. We show these both for identifying feasible solutions if a node representation is used for the bit-string, and to identify 2-approximations if an edge based representation is used. 

We then evaluate the performance of the {\it fast} hypermutation operator using the parabolic evaluation distribution in the context of complete AISs.
Indeed hypermutations with mutation potential are usually applied in conjunction with ageing operators in the standard Opt-IA AIS~\cite{CutelloTEVC}.
The power of ageing at escaping local optima has recently been enhanced by showing how, by accepting inferior solutions when stuck on local optima, it makes the difference between polynomial and exponential runtimes for the \textsc{Balance} function 
from dynamic optimisation~\cite{OlivetoSudholt2014}. For very difficult instances of \textsc{Cliff}, where standard RLS and elitist EAs require exponential time, ageing even makes RLS asymptotically as fast as any unbiased mutation based algorithm can be on any function with unique optimum~\cite{Lehre2012} i.e., by running in $O(n \ln n)$ expected time~\cite{CorusOlivetoYazdani2019TCS}.

However, the power of ageing at escaping local optima is lost when it is used in combination with static HMP.
In particular, the  FCM mechanism does not allow the operator to return solutions of lower quality apart from the complementary bit-string, thus cancelling the advantages of ageing. Furthermore, the
high mutation rates combined with FCM make the algorithm return to the previous local optimum with very high probability. 
We show how these problems are naturally solved by our newly proposed operators that do not evaluate all bit flips in a hypermutation. 
We rigorously prove that the resulting algorithm, called Opt-IA$_\gamma$, benefits from the modified operator showing that it allows the ageing operator to escape from local optima by accepting the lower quality solutions returned by the 
\hypfcm operator when it does not find improvements. 
However, to achieve this behaviour the evaluation probabilities after each bit flip have to be set to prohibitively low values such that the applied operator effectively does not mutate many bits anymore (i.e. it does not hypermutate; similarly to the INV HMP of~\cite{CorusOlivetoYazdaniIPH2019} when it is located on the best found local optimum).

To address this problem, and to further evaluate the general performance of the proposed {\it fast} HMP operator, we perform a comparative analysis with other '{\it fast} mutation' operators that have recently appeared in the evolutionary computation literature\cite{Doerretal2017,FGQWPPSN18,FQWGECCO18}. The analysis leads to the conclusion that a parabolic power-law distribution is the best compromise for the {\it fast} hypermutation operator in black box scenarios where limited problem knowledge is available. Such a distribution allows a greater balance between large and small mutations. Hence, local optima may be escaped from, by performing large or small mutations to new basins of attraction that are either of better or of worse quality (i.e., due to ageing). We show that the obtained AISs perform asymptotically either at least as well, or better, than all the considered  algorithms over the large range of unimodal and multimodal problems considered in this paper.
Due to page restrictions the proofs of the theorems are presented as supplementary material as well as a self-contained version of the paper.



\section{AISs with Probabilistic Sampling Distributions}

Hypermutations with mutation potential (HMP) differ from the standard bit mutations (SBM) used traditionally in evolutionary computation by flipping a linear number of distinct bits $M=cn$ for a constant $0<c \leq 1$.
It has been shown that in their basic static version, where they only evaluate the result of the $M$ bit flips, they are inefficient at optimising any function with up to a polynomial number of optima~\cite{CorusOlivetoYazdani2019TCS}. 
In the {\it stop at the first  constructive mutation} (FCM) variant they mutate {\it at most} $M=cn$ distinct bits 
(i.e., for this reason $M$ is called the mutation {\it potential}). 
After each of the M bit-flips, they  evaluate the fitness of the constructed solution. If an improvement over the original solution is found before the $M$th bit-flip, then the operator stops and returns the improved solution~\cite{CutelloTEVC}. 
This behaviour prevents the hypermutation operator to waste further fitness function 
evaluations if an improvement has already been found. However, for any realistic objective function, the number of 
iterations where there is an improvement constitutes an asymptotically small 
fraction of the total runtime. Hence, the fitness function evaluations saved due to 
the FCM stopping the hypermutation have a very small impact on the global performance 
of the algorithm. 
While they have been shown to be more efficient than SBM to escape from local optima, this performance comes at the expense of being up to a linear factor slower at hillclimbing in the exploitative phases of the optimisation process~\cite{CorusOlivetoYazdani2019TCS}. 

Therefore, we propose an alternative HMP operator using FCM, called \hypfcm for simplicity, that only evaluates the fitness after each bit-flip with some probability. 
Since setting the HMP parameter to $c=1$ (i.e., $M=n$) allows the operator to reach any point in the search space with positive probability, we will only consider this parameter setting throughout the paper as was also done in previous theoretical analyses~\cite{CorusOlivetoYazdani2017,CorusOlivetoYazdani2019TCS}.

We propose the use of the following parabolic probability distribution depicted in Figure~\ref{fig:one}. Let $p_i$ be the probability that the solution is evaluated after the $i$th bit has been flipped.
Then,  
\begin{align}
\label{probGamma}
p_i= 
\begin{cases}
		1/e & \text{for}\; i=1 \;\text{and}\; i=n,\\   
        \gamma/i & \text{for}\; 1<i\leq n/2,\\
        \gamma/(n-i) & \text{for}\; n/2<i<n.\\
\end{cases}
\end{align}
where the parameter $\gamma$ should be in~$0< \gamma \leq 1$ (however, any $0< \gamma < 1/e$ is an efficient choice for the results that we will present).

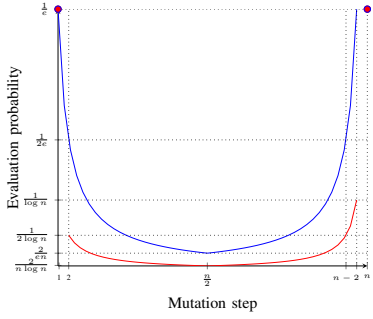
\begin{figure}[t]

\centering
\begin{tikzpicture}[ scale=0.6]
 \pgfmathsetmacro\ytkn{30} 
 \pgfmathsetmacro\ytknhalf{\ytkn/2} 
 \pgfmathsetmacro\ytkni{.22/\ytkn}
 \pgfmathsetmacro\ytknii{2/((2.7)*\ytkn)}
 \pgfmathsetmacro\ytkna{\ytkn-1} 
\pgfmathsetmacro\ytknb{\ytkn-2} 
 \pgfmathsetmacro\ytka{1/((2.7))}
 \pgfmathsetmacro\ytkb{1/(2*(2.7))}
 \pgfmathsetmacro\ytkc{1/(20)}
 \pgfmathsetmacro\ytkd{1/(10)}

\begin{axis}[
    axis x line=center, 
    axis y line=middle, 
      ytick={\ytkni, \ytknii, \ytka,\ytkb,  \ytkc, \ytkd},
yticklabels={$\frac{2}{n\log{n}}$, 
$\frac{2}{en}$,$\frac{1}{e}$,$\frac{1}{2e}$,$\frac{1}{2\log{ n } } $ , 
$\frac{1}{\log{n}}$, $$ ,$$ },
      xtick={1.1, 2,   \ytknb,\ytkna, \ytkn, \ytknhalf},
    xticklabels={1,2,$n-2$,$ $,$n$,$\frac{n}{2}$ },
    domain=1:\ytkn,
    ticklabel style = {font=\tiny},
      x label style={at={(axis description cs:0.5,-0.1)},anchor=north},
    y label style={at={(axis description cs:-0.1,.5)},rotate=90,anchor=south},
     xlabel={Mutation step},
    ylabel={Evaluation probability}]
]

\addplot[domain=1:\ytkn/2, blue] {1/((2.7)*x)};
\addplot[domain=\ytkn/2:\ytkn-1,blue] {1/(2.7*(\ytkn-x))};
\addplot[holdot, color=red] 
coordinates{(1,\ytka)(\ytkn,\ytka)};
\draw[dotted] (axis cs:1,\ytka) -- (axis cs:\ytkn,\ytka);
\draw[dotted] (axis cs:1,\ytkb) -- (axis cs:\ytkn,\ytkb);
\draw[dotted] (axis cs:1,\ytkc) -- (axis cs:\ytkn,\ytkc);
\draw[dotted] (axis cs:1,\ytknii) -- (axis cs:\ytkn,\ytknii);
\draw[dotted] (axis cs:1,\ytkd) -- (axis cs:\ytkn,\ytkd);
\draw[dotted] (axis cs:2,0) -- (axis cs:2,\ytka);
\draw[dotted] (axis cs:\ytknb,0) -- (axis cs:\ytknb,\ytka);
\draw[dotted] (axis cs:\ytkna,0) -- (axis cs:\ytkna,\ytka);
\draw[dotted] (axis cs:\ytkn,0) -- (axis cs:\ytkn,\ytka);
\addplot[holdot, color=blue,fill=red, ] 
coordinates{(1,\ytka  )(\ytkn,\ytka )};
\addplot[domain=2:\ytkn/2, red] {1/(10*x)};
\addplot[domain=\ytkn/2:\ytkn-1,red] {1/(10*(\ytkn-x))};

\end{axis}
\end{tikzpicture}
 \caption{The parabolic evaluation probabilities (\ref{probGamma}) for 
{\color{red}$\gamma=1/\log{n}$} and  {\color{blue}$\gamma=1/e$}.} \label{fig:one}
\end{figure}
%

The lower the value of $\gamma$, the fewer the expected fitness function evaluations that occur in each hypermutation. 
In particular, with a sufficiently small value for $\gamma$, the number of wasted evaluations can be dropped to the 
order of $O(1)$ per iteration instead of the linear amount wasted by the traditional operator when improvements are not found. At the same time, it still flips many bits (i.e., it hypermutates) as desired. 
The resulting hypermutation operator is formally defined as follows.

\begin{definition}[\hypfcm{}]\label{def:hyp-fcm}
The \hypfcm{} operator flips at most $n$ distinct bits selected uniformly at random. It evaluates the fitness after the \textbf{{\it i}}th bit-flip with 
probability $p_i$ (as defined in (\ref{probGamma})) and remembers the last evaluation. \hypfcm{}  stops flipping bits when it finds an improvement; if no improvement is found, it will return the last evaluated solution. If no evaluations are made, the parent will be returned.
\end{definition}

\begin{algorithm}[t]
\caption{\fastianp{} for maximisation}
\begin{algorithmic}[1]
\STATE{Initialise $x$ u.a.r (uniformly at random).}
\WHILE{the termination criterion is not met}
\STATE{create offspring $y$ using \hypfcm{};} 
\STATE{{\bf if} $f(y) \geq f(x)$, {\bf then} $x:=y$;}
\ENDWHILE
\end{algorithmic}
\label{alg:fastia}
\end{algorithm}

In the next section, we will prove the benefits of \hypfcm{} over the standard HMP with FCM, when incorporated into a $(1+1)$ framework. 
We will refer to the algorithm as \fastianp to distinguish it from the standard \oneoneiahype which uses the traditional HMP operator i.e., that evaluates the fitness of the constructed solutions deterministically after each bit-flip of the hypermutation. 
Similar benefits may also be shown for population-based AISs but we will refrain to do so since populations do not lead to improved performance for the considered benchmark problems.
The \fastianp is formally defined in Algorithm \ref{alg:fastia}.
It keeps  a single individual in the population and uses \hypfcm{} to perturb it in every 
iteration. If the offspring is not worse than its parent, then it replaces the parent for the next iteration; otherwise the parent is kept.

Traditional static FCM operators are not suited to be used in conjunction with ageing operators if the power of the latter at escaping local optima is to be exploited~\cite{CorusOlivetoYazdani2019TCS}. While ageing operators allow to exploit solutions of lower quality to escape from local optima, the traditional HMP with FCM returns a solution if it is an improvement or it always returns the complementary bit string (which is unlikely to be useful very often).
 However, this is not true for the above defined \hypfcm{} variant. If no improvements are found, FCM$_\gamma$ returns the last evaluated solution, which is not necessarily the complementary bit string. Hence, the above operator has higher chances of being effective at escaping from local optima than traditional HMP with FCM by identifying a variety of new, potentially promising, basins of attraction.  
For sufficiently small values of the parameter $\gamma$, only one function evaluation per hypermutation is performed in expectation 
(although all bits will be flipped i.e., it hypermutates).
Since \hypfcm returns the last evaluated one, 
this solution will be returned by the operator as it is the only one it has encountered.
Interestingly, this behaviour is similar to that of the traditional HMP operator without FCM that also evaluates one point per hypermutation and returns it.
However, while the traditional version has been to proven to have exponential expected runtime for any function with any polynomial number of optima~\cite{CorusOlivetoYazdani2019TCS}, we will show in the following sections that the {\it fast} HMP 
can be very efficient. From this point of view, with appropriate parameter settings, 
\hypfcm is a very effective way to perform hypermutations with mutation potential without FCM as originally desired~\cite{CutelloTEVC}.

We will analyse the \hypfcm operator 
in a complete Opt-IA that uses cloning, hypermutation and ageing.
The modified Opt-IA algorithm using \hypfcm, which we call \fastoptia, is depicted in Algorithm 
\ref{alg:fastoptia}. 
We will use the  \textit{hybrid ageing} operator as in \cite{CorusOlivetoYazdani2019TCS,OlivetoSudholt2014}, which allows the algorithm to 
escape from local optima. Hybrid ageing removes candidate solutions (i.e., b-cells) 
with probability $p_{die}$ once they have reached an age threshold 
$\tau$. 
After initialising a population of $\mu$ solutions with $age=0$,  the algorithm creates $dup$ copies of each solution in each iteration. 
These copies are all mutated by the hypermutation operator, creating a population of mutants called $P^{(hyp)}$. These mutants inherit the age of their parents if they do not improve the fitness; otherwise their age is set to zero. At the next step, all solutions with $age \geq \tau$ will be removed with probability $p_{die}$. 
If fewer than $\mu$ individuals have survived ageing, then the population is filled up with new randomly generated individuals.
At the selection for replacement phase, the best $\mu$ solutions are chosen to form the population for the next generation.
In Section \ref{sec:fastoptia}, 
we will prove the benefits of the \fastoptia for all the  unimodal and multimodal benchmark functions for which the performance of the Opt-IA with traditional static HMP has been proven in the literature.

As usual in evolutionary computation we will evaluate the performance  of the algorithms by calculating the expected number of fitness function evaluations until the optimum (or an approximation for the NP-Hard problems) is identified (i.e. expected runtime). Hence, we do not specify any termination criterion for the evolutionary loops of the algorithms.

 \subsection{Mathematical Tools for the Analysis}

 In this section, we introduce the mathematical tools from the literature which we will use to carry out our analysis.
 
 We will apply the following theorem by Serfling which provides an 
 upper bound on the probability that the outcome of a hypergeometrically distributed random variable exceeds a given value. While the more common Chernoff bounds could also be used to obtain the same results, we prefer to use Serfling's theorem because the hypergeometric distribution better represents the behaviour of the considered hypermutation operators on functions of unitation (i.e., functions where the output depends exclusively on the number of 1-bits in the bit-string).
 \begin{theorem}[Serfling \cite{serfling1974probability}]\label{thm:serfling}
  Consider a set 
 $C:=\{c_1, \ldots, c_n\}$ consisting of $n$ elements, 
 with $c_i \in R$ where $c_{min}$ and $c_{max}$ are the smallest and largest 
 elements in $C$ respectively. Let $\bar{\mu}:= (1/n)\sum_{j=1}^{n}c_i$, be the 
 mean of $C$. Let $1\leq i \leq k \leq n$ and $X_i$ denote the $i$th 
 draw without replacement from $C$ and $\bar{X}:=(1/k)\sum_{j=1}^{k} X_i$  the 
 sample mean. For $1\leq k \leq n$, and $\lambda>0$
  \[Pr\left\{  \sqrt{k}  (\bar{X} - \bar{\mu}) \geq \lambda \right\}\leq 
 \exp\left(-\frac{2\lambda^2}{(1-f^{*}_{k})(c_{max}-c_{min})^{2}}\right)\] 
 where $f^{*}_{k}:= \frac{k-1}{n}$. 
 \end{theorem}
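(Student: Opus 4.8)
The statement is Serfling's concentration inequality, a Hoeffding-type tail bound for the sample mean of $k$ draws made \emph{without} replacement from the finite population $C$. The plan is to follow the Cramér--Chernoff (exponential moment) method, with the essential extra ingredient being the finite-population correction factor $(1-f^{*}_{k})$ that sharpens the naive with-replacement estimate. First I would pass to partial sums: writing $S_k := \sum_{i=1}^{k} X_i = k\bar{X}$, the event $\sqrt{k}(\bar{X}-\bar{\mu}) \geq \lambda$ coincides with $S_k - k\bar{\mu} \geq \sqrt{k}\,\lambda$, so for any $h>0$ Markov's inequality applied to $e^{h(S_k - k\bar{\mu})}$ gives
\[
\Pr\left\{ \sqrt{k}(\bar{X}-\bar{\mu}) \geq \lambda \right\} \leq e^{-h\sqrt{k}\,\lambda}\,\mathbb{E}\!\left[ e^{h(S_k - k\bar{\mu})} \right].
\]
Everything then reduces to controlling the moment generating function of the centred without-replacement sum $S_k - k\bar{\mu}$.

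Second, I would expose the martingale structure of the draws. Since the $X_i$ are exchangeable, I condition on the filtration generated by $X_1,\dots,X_{i-1}$: given the first $i-1$ draws, $X_i$ is uniform over the $n-i+1$ remaining population values, all lying in $[c_{min},c_{max}]$. The centred increments $D_i := X_i - \mathbb{E}[X_i \mid X_1,\dots,X_{i-1}]$ are martingale differences, each confined to an interval of length at most $c_{max}-c_{min}$, so Hoeffding's lemma bounds each conditional exponential moment by $\exp\!\left( h^2 (c_{max}-c_{min})^2 / 8 \right)$. Multiplying across the $k$ draws and taking $h = 4\lambda/\big(\sqrt{k}\,(c_{max}-c_{min})^2\big)$ already recovers the classical Hoeffding bound $\exp\!\left(-2\lambda^2/(c_{max}-c_{min})^2\right)$, i.e.\ the with-replacement estimate, which is precisely the claimed bound with the factor $(1-f^{*}_{k})$ dropped.

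The hard part, and the whole point of Serfling's refinement, is recovering the correction factor $(1-f^{*}_{k}) = (n-k+1)/n$. A term-by-term application of Hoeffding's lemma is too lossy here because it ignores that draws without replacement are negatively associated and that the unsampled remainder is pinned down by $\sum_{j=1}^{n} c_j = n\bar{\mu}$. To capture this I would invoke the domination of the without-replacement moment generating function by its with-replacement counterpart (Hoeffding's convex-ordering result) together with a careful accounting of how the conditional ranges shrink as the pool is depleted; summing the resulting conditional bounds yields a total quadratic coefficient of $k(1-f^{*}_{k})(c_{max}-c_{min})^2/8$ in place of $k(c_{max}-c_{min})^2/8$, the factor $(1-f^{*}_{k})$ playing the role of a finite-population correction analogous to the one appearing in $\mathrm{Var}(S_k)$. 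This reverse-martingale estimate is the step I expect to be the main obstacle to make fully rigorous, and it is the genuine content of the theorem beyond Hoeffding.

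Finally, with the sharpened bound $\mathbb{E}\!\left[e^{h(S_k-k\bar{\mu})}\right] \leq \exp\!\left( h^2 k (1-f^{*}_{k})(c_{max}-c_{min})^2 / 8 \right)$ in hand, I would optimise the free parameter. The exponent $-h\sqrt{k}\,\lambda + h^2 k (1-f^{*}_{k})(c_{max}-c_{min})^2/8$ is a convex quadratic in $h$, minimised at $h = 4\lambda/\big(\sqrt{k}\,(1-f^{*}_{k})(c_{max}-c_{min})^2\big)$; substituting this value back gives exactly
\[
\exp\!\left(-\frac{2\lambda^2}{(1-f^{*}_{k})(c_{max}-c_{min})^2}\right),
\]
which is the asserted inequality. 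The only delicate bookkeeping is ensuring the chosen $h$ remains admissible ($h>0$) and that the convex-ordering comparison is applied with the correct centring, both of which are routine once the third step is established.
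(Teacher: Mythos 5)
The paper never proves this statement at all: it is quoted as a known tool (Serfling's inequality, cited to \cite{serfling1974probability}) and used as a black box in later proofs, so the only question is whether your blind attempt stands on its own. It does not: the step you yourself flag as ``the main obstacle'' --- recovering the finite-population correction factor $(1-f^{*}_{k})$ --- is precisely the content of the theorem, and the ingredients you name would not deliver it. Hoeffding's convex-ordering result dominates the without-replacement moment generating function by its with-replacement counterpart, and therefore can only reproduce the bound \emph{without} the factor $(1-f^{*}_{k})$; and the ``shrinking conditional ranges'' heuristic fails because after $i-1$ draws the remaining $n-i+1$ population values can still span the full interval $[c_{min},c_{max}]$ (for instance when $C$ consists of many copies of both extremes), so a term-by-term application of Hoeffding's lemma cannot improve on the per-increment coefficient $(c_{max}-c_{min})^{2}/8$. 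Serfling's actual argument is structurally different: it exploits the identity $E[X_{k+1}\mid X_1,\dots,X_k]=(n\bar{\mu}-S_k)/(n-k)$, from which $Z_k:=(S_k-k\bar{\mu})/(n-k)$ is a martingale, and runs the exponential-moment method on this \emph{scaled} process (equivalently, a reverse martingale in the unsampled remainder); that is where the factor $(n-k+1)/n=1-f^{*}_{k}$ enters. Invoking ``a reverse-martingale estimate'' as a named black box does not substitute for that computation.

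There is also a more local flaw in your second step: the martingale differences $D_i=X_i-E[X_i\mid X_1,\dots,X_{i-1}]$ do \emph{not} sum to $S_k-k\bar{\mu}$. Because sampling is without replacement, $E[X_i\mid X_1,\dots,X_{i-1}]=(n\bar{\mu}-S_{i-1})/(n-i+1)\neq\bar{\mu}$, so $S_k-k\bar{\mu}=\sum_{i\leq k}D_i+C_k$ with a path-dependent compensator $C_k=-\sum_{i\leq k}(S_{i-1}-(i-1)\bar{\mu})/(n-i+1)$ that your argument silently drops; since $C_k$ can be positive on paths whose partial sums dip below the mean, there is no pathwise domination of $S_k-k\bar{\mu}$ by $\sum_i D_i$, so even the unsharpened Hoeffding bound is not rigorous as written (it can be repaired by using the convex-ordering comparison you mention, but then the martingale decomposition does no work). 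In short, your proposal is a correct account of the Chernoff-method scaffolding, with both the technical bookkeeping of the easy half and the entire substance of the hard half left unestablished.
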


 Another tool from the literature which is widely used in the analysis of HMP operators is the following Ballot theorem. It was first applied by Jansen and Zarges in~\cite{JansenZarges2011} to bound the expected runtime of inversely proportional HMP. 
 \begin{theorem}[Ballot Theorem~\cite{feller1968}] \label{thm:ballot} 
 Suppose that, in a ballot, candidate P scores $p$ votes and candidate Q scores $q$ votes, where $p>q$. The probability that throughout the counting there are always more votes for P than for Q equals $(p-q)/(p+q)$. 
 \end{theorem}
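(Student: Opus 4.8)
The plan is to prove the statement by the reflection principle (André's reflection method), recasting the vote-counting process as a lattice path. I would encode any ordering of the $p+q$ votes as a walk that starts at height $0$, takes a $+1$ step for each vote for $P$ and a $-1$ step for each vote for $Q$, and therefore ends at height $p-q>0$. Each of the $\binom{p+q}{p}$ distinct orderings is equally likely, so it suffices to count how many of these walks correspond to the favourable event. The condition that $P$ is strictly ahead of $Q$ throughout the count is exactly the condition that the partial sum (the walk's height) is strictly positive at every time $t=1,\dots,p+q$; in particular the very first step must be a $+1$ (a vote for $P$).

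Next I would count the favourable walks. Since a favourable walk must begin with a $+1$ step, I condition on this first step and count the walks of length $p+q-1$ that go from height $1$ to height $p-q$ without ever returning to $0$. The total number of length-$(p+q-1)$ walks from $1$ to $p-q$ is $\binom{p+q-1}{p-1}$, since one must place $p-1$ up-steps among $p+q-1$ steps. Among these, the \emph{bad} ones touch height $0$ at some time; by the reflection principle there is a bijection between walks from $1$ to $p-q$ that touch $0$ and \emph{all} walks from $-1$ to $p-q$, obtained by reflecting the initial portion of the path, up to its first visit to $0$, about the horizontal axis. The number of walks from $-1$ to $p-q$ of length $p+q-1$ is $\binom{p+q-1}{p}$.

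The favourable count is therefore $\binom{p+q-1}{p-1}-\binom{p+q-1}{p}$, and a short computation factoring out $\frac{(p+q-1)!}{p!\,q!}$ collapses this to $\frac{p-q}{p+q}\binom{p+q}{p}$. Dividing by the total number $\binom{p+q}{p}$ of equally likely orderings then yields the claimed probability $(p-q)/(p+q)$. As an independent check, I would verify the same formula by Feller's inductive argument: conditioning on the identity of the \emph{last} vote (which leaves $P$ ahead automatically, since $p>q$) gives the recurrence $P_{p,q}=\frac{p}{p+q}P_{p-1,q}+\frac{q}{p+q}P_{p,q-1}$, whose solution $P_{p,q}=(p-q)/(p+q)$ follows by induction on $p+q$ with base case $q=0$ (where the probability is trivially $1$).

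I expect the only delicate point to be the reflection step: one must argue that reflecting a bad walk about the axis up to its first return to $0$ is a genuine bijection onto the set of walks starting at $-1$, and that no favourable walk is miscounted once the first step has been fixed to $+1$. Once this bijection is established the result is immediate, so the combinatorial bookkeeping — rather than any analytic difficulty — is where care is required.
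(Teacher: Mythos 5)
Your proof is correct. Note, however, that the paper does not prove this statement at all: it is a classical result imported verbatim from Feller's textbook (the citation \cite{feller1968}) and used purely as an analytical tool in the runtime analyses, so there is no in-paper argument to compare yours against. Your reflection-principle argument is the standard proof of the ballot theorem, and essentially the one in the cited reference: the encoding of vote orderings as $\pm 1$ lattice paths with $\binom{p+q}{p}$ equally likely outcomes, the count of strictly positive paths as $\binom{p+q-1}{p-1}-\binom{p+q-1}{p}=\frac{p-q}{p+q}\binom{p+q}{p}$ via reflection of the initial segment up to the first return to $0$, and the final division are all sound. The bijection you flag as the delicate point is handled correctly: every walk from $-1$ to $p-q>0$ must cross $0$, so reflecting up to the first visit to $0$ is indeed invertible. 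Your inductive cross-check via the last-vote recurrence $P_{p,q}=\frac{p}{p+q}P_{p-1,q}+\frac{q}{p+q}P_{p,q-1}$ is also valid, provided one uses the convention (which the formula itself supplies) that the sub-problem probability is $0$ when the remaining counts tie, i.e.\ when $p-1=q$.
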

 
 Artificial Fitness Levels (AFL) is a standard technique used in the theory of evolutionary computation to derive upper bounds on the expected 
 runtime of $(1+1)$ evolutionary algorithms~\cite{OlivetoBookChapter,jansenbook,OlivetoLehrBookChapter2019}. AFL divides the search space into $m$ 
 mutually exclusive partitions $A_1 \cdots, A_m$ such that all the points in 
 $A_i$ have smaller fitness than any point which belong to $A_j$ for all $j>i$. The 
 last partition, $A_m$ only includes the global optimum. If $p_i$ is the 
 smallest probability that an individual belonging to $A_i$ mutates to an 
 individual belonging to $A_j$ such that $i<j$, then the expected time to find 
 the optimum is $E(T) \leq \sum_{i=1}^{m-1}1/p_i$. 
 We will show in the following section that the results obtained by using AFL to derive upper bounds on the expected runtime of simpler randomised local search heuristics can be easily converted into upper bounds on the expected runtime of the \fastianp.

 Finally, we will apply the standard multiplicative drift theorem which is widely used in the runtime analysis of stochastic search heuristics.

 \begin{theorem}[Multiplicative Drift Theorem~\cite{DoerrMulti,lenglerNew,OlivetoLehrBookChapter2019}]{\label{th:multidrift}} 
 Let $\{X_t\}_{t \geq}$ be a sequence of random values taking the values in some set $S$. Let $g:S \rightarrow\{0\} \cup \mathbb{R}_{\geq 1}$ and assume that $g_{max}:=\max\{g(x)\mid x \in S\}$ exists. Let $T:=min\{t\geq0:g(x_t)=0\}$. If there exists $\delta>0$ such that
 $E(g(X_{t+1})\mid g(X_t)) <(1-\delta)g(X_t)$,
 then
 $E(T)\leq \frac{1}{\delta}(1+ \ln g_{max})$
 and for every $c>0$, 
 $Pr(T>\frac{1}{\delta}(\ln g_{max}+c)) \leq e^{-c}.$
 
 \end{theorem}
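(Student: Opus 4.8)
The plan is to reduce this multiplicative statement to the classical \emph{additive} drift theorem by applying a logarithmic transformation to the potential. Concretely, I would track the rescaled potential $h(s) := 1 + \ln s$ for $s \geq 1$ together with $h(0) := 0$, and set $Y_t := h(g(X_t))$. Because $g$ takes values only in $\{0\} \cup \mathbb{R}_{\geq 1}$, we have $Y_t \geq 1$ whenever $g(X_t) \geq 1$ and $Y_t = 0$ exactly when $g(X_t) = 0$, so the hitting time $T$ is unchanged and $Y_0 = 1 + \ln g(X_0) \leq 1 + \ln g_{max}$.

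The heart of the argument is the pointwise estimate
\[ h(s) - h(v) \geq \frac{s - v}{s} \qquad \text{for all } s \geq 1,\ v \in \{0\} \cup \mathbb{R}_{\geq 1}. \]
For $v \geq 1$ this is exactly the elementary inequality $\ln x \leq x - 1$ applied with $x = v/s$; for $v = 0$ it reduces to $1 + \ln s \geq 1$, which holds since $s \geq 1$. I would then take the conditional expectation of this inequality over the next state, given $g(X_t) = s$, and insert the hypothesis $E(g(X_{t+1}) \mid g(X_t) = s) < (1-\delta)s$, obtaining
\[ E\big(Y_t - Y_{t+1} \mid g(X_t) = s\big) \geq \frac{s - E(g(X_{t+1}) \mid g(X_t)=s)}{s} > \delta. \]
Thus $Y_t$ decreases by at least $\delta$ in expectation at every step before absorption, and the standard additive drift theorem (He--Yao / Hajek) yields $E(T) \leq Y_0/\delta \leq (1 + \ln g_{max})/\delta$, which is the claimed expectation bound.

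For the tail bound I would argue directly on $g$ rather than on $Y$. Since $0$ is absorbing, iterating the contraction through the tower property gives $E(g(X_t)) \leq (1-\delta)^t g(X_0) \leq e^{-\delta t} g_{max}$, using $1 - \delta \leq e^{-\delta}$. As $g$ is either $0$ or at least $1$, the event $\{T > t\}$ coincides with $\{g(X_t) \geq 1\}$, so Markov's inequality gives $\prob{T > t} \leq E(g(X_t)) \leq e^{-\delta t} g_{max}$; choosing $t = (\ln g_{max} + c)/\delta$ collapses the right-hand side to exactly $e^{-c}$.

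The one genuinely delicate point---and the reason for the additive constant $1$ in both the transformation $h$ and the final bound---is the ``gap'' between the absorbing value $0$ and the smallest positive value $1$ that $g$ may take. A naive transformation $\ln g(X_t)$ would have vanishing (or undefined) drift as the process approaches $g = 1$, failing to account for the final jump down to $0$; the shift by $1$ is precisely what makes the pointwise inequality hold in the $v = 0$ case and guarantees a uniform drift of $\delta$ all the way to absorption. I therefore expect the verification of this boundary case to be where the care is needed, with the remaining steps being routine.
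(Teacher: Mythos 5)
Your proof is correct. Note that the paper does not prove this theorem at all: it is quoted as a standard tool from the cited literature (Doerr et al.'s multiplicative drift paper, Lengler's drift-analysis survey, and the Oliveto--Lehre book chapter), so there is no in-paper argument to compare against. Your two-part argument --- the shifted logarithmic potential $h(s)=1+\ln s$ reduced to additive drift for the expectation bound, and Markov's inequality applied to the iterated contraction $E(g(X_t))\leq(1-\delta)^t g_{max}$ for the tail bound --- is essentially the standard proof given in those references, including the correct handling of the gap between $0$ and $1$ via the additive shift. The only nitpick is your claim that $\{T>t\}$ \emph{coincides} with $\{g(X_t)\geq 1\}$: this requires $0$ to be absorbing, whereas only the inclusion $\{T>t\}\subseteq\{g(X_t)\geq 1\}$ is needed for Markov's inequality, and that inclusion holds unconditionally (alternatively, one works with the process stopped at $T$, which also justifies iterating the drift condition).
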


\begin{algorithm}[t]
\caption{\fastoptia{} for maximisation}
\begin{algorithmic}[1]
\STATE{Initialise $P:=\{x_1,...,x_\mu\}$, a population of $\mu$ solutions u.a.r and set $x_i.age:=0$ for $i:=\{1,...\mu\}$};
\WHILE{the termination criterion is not met}
\FOR{all $x \in P$}
\STATE{set $x.{age}:=x.{age}+1$;}
\STATE{copy $x_i$ $dup$ times and add the copies to $P^{(clo)}$};
\ENDFOR
\FOR{all $x \in P^{(clo)}$}
\STATE{create $y$ using \hypfcm{};}
\STATE{ {\bf if} $f(y)>f(x)$, {\bf then} $y.{age}:= 0$};
\STATE{{\bf else} $y.{age}:= x.age$;}
\STATE{add $y$ to $P^{(hyp)}$;}
\ENDFOR

\STATE{add $P^{(hyp)}$ to $P$, set $P^{(hyp)}:=\emptyset$;}
\STATE{with probability {\color{black}$p_{die}:=1-\frac{1}{(dup+1)\cdot\mu}$}, remove any $x_i \in$ 
$P$ with $x_i.{age} \geq \tau$;} 
\STATE{{\bf if} $|P| < \mu $, {\bf then} add $\mu-|P|$ solutions to $P$ with $age:=0$ generated u.a.r;}
\STATE{{\bf else if} $|P| > \mu $, {\bf then} remove $|P|-\mu$ solutions with the lowest fitness from $P$ {\color{black}breaking ties 
u.a.r};}
\ENDWHILE
\end{algorithmic}
\label{alg:fastoptia}
\end{algorithm}

\section{Artificial Fitness Levels for Fast Hypermutations}
In~\cite{CorusOlivetoYazdani2019TCS},  a mathematical methodology was devised that allows to convert upper bounds on the expected runtime
of randomised local search (RLS) into valid upper bounds on the expected runtime of the traditional static HMP operators. 
In this section, we will extend such methodology
so that it can also be applied to the {\it fast} HMP operator introduced in this paper.

Artificial Fitness Levels (AFL) is a standard technique used in the theory of evolutionary computation to derive upper bounds on the expected 
runtime of $(1+1)$ evolutionary algorithms~\cite{OlivetoBookChapter,jansenbook,OlivetoLehrBookChapter2019}. AFL divides the search space into $m$ 
mutually exclusive partitions $A_1 \cdots, A_m$ such that all the points in 
$A_i$ have smaller fitness than any point which belong to $A_j$ for all $j>i$. The 
last partition, $A_m$ only includes the global optimum. If $p_i$ is the 
smallest probability that an individual belonging to $A_i$ mutates to an 
individual belonging to $A_j$ such that $i<j$, then the expected time to find 
the optimum is $E(T) \leq \sum_{i=1}^{m-1}1/p_i$. 

$\textsc{RLS}$ flips exactly 1 bit of the current solution 
to sample a new search point, compares it with the current solution 
and continues with the new one unless it is worse.
The artificial fitness levels method for the traditional static HMP operator from~\cite{CorusOlivetoYazdani2019TCS} 
states that any upper bound on the expected runtime of $\textsc{RLS}$   proven using the 
artificial fitness levels (AFL) method 
also holds for the \oneoneiahype{} multiplied by 
an additional factor of $n$ (i.e., the algorithm is at most a linear factor slower than $\textsc{RLS}$  for problems where the original upper bound is tight). 
The result was shown to be tight for some standard benchmark functions including \textsc{OneMax} and \textsc{LeadingOnes}.
We will now extend the methodology to also hold for the {\it fast} HMP operator defined in the previous section by
establishing a relationship between the upper bounds on the expected runtimes of $\textsc{RLS}$ achieved via AFL  and those of the \fastia{}. 
However,  these upper bounds will differ only by a factor of 
$O(1+\gamma\log{n})$ instead of $n$. Thus, for values of $\gamma=O(1/\log n)$, the upper bounds of the two algorithms are asymptotically the same, and the methodology will allow to prove a linear speed up for the {\it fast} HMP operator compared to traditional static HMP for the cases where the AFL methodology from~\cite{CorusOlivetoYazdani2019TCS} is tight.

We start our analysis by relating the expected number of fitness function 
evaluations to the expected number of \fhm{} operations until an optimum 
is found. 
The lemma quantifies the number of expected fitness function evaluations performed by the two operators in one hypermutation.

\begin{restatable}{lemma}{wald}\label{lem:wald}
 Let $T$ be the random variable denoting the number of applications of \hypfcm{} with parameter $0<\gamma<1$ 
 until the optimum is found. 
 Then, the expected number of function 
evaluations in an \hypfcm{} 
operation given that no improvement is found is in the order of 
$\Theta(1+\gamma\log{n})$. Moreover, the  expected number of total function evaluations is at 
most $O(1+\gamma\log{n}) \cdot E[T] $.
\end{restatable}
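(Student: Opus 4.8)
The plan is to handle the two assertions in turn, deriving the first by a direct harmonic-sum computation and then bootstrapping it into the second via Wald's equation. First I would treat the conditional statement. When no improvement is found the operator never halts early, so it processes all $n$ bit-flips, and the decision to evaluate after the $i$th flip is an independent $\mathrm{Bernoulli}(p_i)$ trial. Hence the number of evaluations equals $\sum_{i=1}^n X_i$ with $X_i\sim\mathrm{Bernoulli}(p_i)$ independent, whose expectation is $\sum_{i=1}^n p_i$. Substituting (\ref{probGamma}) and reindexing the upper branch by $j=n-i$ gives
\begin{align*}
\sum_{i=1}^n p_i=\frac{2}{e}+\gamma\sum_{i=2}^{n/2}\frac{1}{i}+\gamma\sum_{j=1}^{n/2-1}\frac{1}{j},
\end{align*}
and since each of the two harmonic sums is $\ln(n/2)+O(1)=\Theta(\log n)$, the total is $\Theta(1+\gamma\log n)$, which is the first claim.

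For the total expectation I would invoke Wald's equation for the stopping time $T$. The obstacle is that the per-operation evaluation counts $N_1,N_2,\dots$ are \emph{not} identically distributed: how early the operator halts depends on the current search point, which drifts as the algorithm proceeds, so the classical Wald identity does not apply verbatim. The step that sidesteps this is a \emph{uniform} bound on the conditional expectation. Let $X_i^{(t)}\sim\mathrm{Bernoulli}(p_i)$ be the evaluation-decision indicators internal to the $t$th operation, drawn afresh and independent of the history $\mathcal F_{t-1}$, and let $S_t\le n$ be the step at which that operation halts. Then pathwise $N_t=\sum_{i=1}^{S_t}X_i^{(t)}\le\sum_{i=1}^{n}X_i^{(t)}$, because stopping at the first constructive mutation only discards non-negative terms. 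Taking expectations of the right-hand side, which is independent of $\mathcal F_{t-1}$, yields
\begin{align*}
\expect{N_t\mid\mathcal F_{t-1}}\le\sum_{i=1}^n p_i=\Theta(1+\gamma\log n),
\end{align*}
uniformly over the history and regardless of the algorithm's state.

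With this uniform bound I would conclude by the standard optional-stopping computation. Writing $c:=\sum_{i=1}^n p_i$ and using that the event $\{T\ge t\}$ is $\mathcal F_{t-1}$-measurable (as $T$ is a stopping time),
\begin{align*}
\expect{\sum_{t=1}^{T}N_t}=\sum_{t\ge1}\expect{N_t\,\mathbf{1}_{\{T\ge t\}}}=\sum_{t\ge1}\expect{\mathbf{1}_{\{T\ge t\}}\expect{N_t\mid\mathcal F_{t-1}}}\le c\sum_{t\ge1}\prob{T\ge t}=c\cdot\expect{T}.
\end{align*}
Since $c=O(1+\gamma\log n)$ this gives the claimed $O(1+\gamma\log n)\cdot\expect{T}$ bound. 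I expect the crux of the argument to be exactly the reduction to the full-run count $\sum_i X_i^{(t)}$: it is what makes the conditional-expectation bound state-independent and thereby licenses the Wald-type step without requiring the $N_t$ to be identically distributed.
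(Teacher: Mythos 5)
Your proposal is correct and follows essentially the same route as the paper: both bound the actual per-operation count by the full-run count (the paper's $X_i \le X_i'$ is exactly your $N_t \le \sum_{i=1}^n X_i^{(t)}$), compute $\sum_{i=1}^n p_i = \Theta(1+\gamma\log n)$ by the same harmonic-sum calculation, and then combine with $E[T]$ via a Wald-type argument. The only difference is presentational: the paper cites Wald's equation directly for the i.i.d.\ dominating variables, whereas you re-derive that step by hand through the tower property and the $\mathcal{F}_{t-1}$-measurability of $\{T \ge t\}$, which makes explicit the stopping-time justification the paper leaves implicit.
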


\begin{proof}
Let the random variable $X_i$  for $i \in [T]$ denote the number of fitness 
function evaluations during the $i$th execution of a \fhm. Additionally, let 
the random variable $X_{i}'$ denote the number of fitness function evaluations 
at the $i$th operation assuming that no improvements are found.  For all $i$ it holds 
that  $X_i \leq X_{i}'$ since finding an improvement can only decrease the 
number of evaluations. 
Thus, the total number of function evaluations $E[\sum_{i=1}^{T}X_i]$
can be bounded above by $E[\sum_{i=1}^{T}X_{i}']$ which is equal to $E[T]\cdot 
E[X']$ due to Wald's equation \cite{MitzenmacheUpfal} 
 since all $X_{i}'$ are identically distributed and 
independent from $T$.

We now write the expected number of fitness function evaluations in each operation as the 
sum of $n$ indicator variables $Y_i \in \{0,1\}$ for $i\in[n]$ denoting whether an 
evaluation occurs right after the $i$th bit mutation. Referring to the probabilities 
in~(\ref{probGamma}), we get
$
 E[X']=E\left[\sum\limits_{i=1}^{n}Y_i\right]=\sum\limits_{i=1}^{n} Pr\{Y_i=1\}
 = \frac{1}{e} + \frac{1}{e}+2\sum\limits_{i=2}^{n/2}  \frac{\gamma}{i} =
\frac{2}{e} + 2\gamma\Theta(\log{n})
=\Theta(1+\gamma \log n)$. 
The second statement is obtained by multiplying this amount with $E[T]$.
\end{proof}
%

In Lemma~\ref{lem:wald}, the evaluation parameter $\gamma$ appears as a multiplicative factor 
in the expected runtime measured in fitness function evaluations. 
An intuitive lower bound of $\Omega(1/\log{n})$ for $\gamma$ can be inferred 
since smaller $\gamma$ will not decrease the expected runtime. Nevertheless, in Section~\ref{sec:fastoptia} we will provide an example where 
a smaller choice of $\gamma$ reduces $E[T]$ directly. For the rest of our 
results though, we will rely on $E[T]$ being the same as for the traditional HMP with FCM
while the number of wasted fitness function evaluations decreases from $n$ to 
$O(1+\gamma\log{n})$.

We now present the main result of this section. 
%
%
 %
The theorem applies to $(1+1)$~frameworks using the \hypfcm{} 
as hypermutation operator. 
\begin{restatable}{theorem}{aflk} \label{th:aflk}
Let $E\left(T^{AFL}_{A}\right)$ be any upper bound on the expected 
runtime of  algorithm A established by the artificial fitness levels method. 
Then,
\begin{align*}
&E\left(T_{\text{\hypfcm{}}}\right) 
\leq E\left(T^{AFL}_{RLS}\right) \cdot O(1+\gamma\log{n}).
\end{align*}
 \end{restatable}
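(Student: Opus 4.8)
The plan is to bound the expected number of fitness evaluations in two stages. First I would bound $E[T]$, the expected number of \hypfcm{} applications until the optimum is found, by a constant factor times $E\!\left(T^{AFL}_{RLS}\right)$; then I would invoke Lemma~\ref{lem:wald} to convert this count of \emph{operations} into a count of \emph{evaluations}, paying the multiplicative factor $O(1+\gamma\log n)$. Since Lemma~\ref{lem:wald} already states that the total number of evaluations is at most $O(1+\gamma\log n)\cdot E[T]$, the theorem reduces to showing $E[T] = O\!\left(E\!\left(T^{AFL}_{RLS}\right)\right)$.

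For the first stage I would re-run the AFL argument, but now counting \hypfcm{} operations rather than single-bit flips. Fix the partition $A_1,\dots,A_m$ witnessing the bound $E\!\left(T^{AFL}_{RLS}\right)=\sum_{i=1}^{m-1} 1/p_i^{RLS}$, where $p_i^{RLS}$ is the smallest probability, over points of $A_i$, that RLS leaves level $A_i$ upwards. The crucial observation is that the first bit that \hypfcm{} flips is chosen uniformly at random among all $n$ positions, exactly as RLS samples its single bit. Hence, from any $x\in A_i$, the event that the first flipped bit is one of the level-improving bits has probability exactly equal to the corresponding RLS improvement probability. Conditioned on this, if in addition \hypfcm{} evaluates after the first bit flip---which by~(\ref{probGamma}) happens with probability $p_1=1/e$---it detects a strict improvement to a higher level, stops, and returns it, and this offspring is accepted since its fitness strictly exceeds $f(x)$ (line~4 of Algorithm~\ref{alg:fastia}). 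Therefore the probability $q_i$ that \hypfcm{} leaves level $A_i$ in a single operation satisfies $q_i \geq (1/e)\,p_i^{RLS}$ for every $i$, the constant $1/e$ pulling out of the pointwise minimum over $A_i$.

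Plugging these level-leaving probabilities into the AFL bound applied to the sequence of \hypfcm{} operations gives $E[T] \leq \sum_{i=1}^{m-1} 1/q_i \leq e\sum_{i=1}^{m-1} 1/p_i^{RLS} = e\cdot E\!\left(T^{AFL}_{RLS}\right)$. Combining with Lemma~\ref{lem:wald}, the expected number of fitness evaluations is $E\!\left(T_{\text{\hypfcm{}}}\right) \leq O(1+\gamma\log n)\cdot E[T] \leq O(1+\gamma\log n)\cdot e\cdot E\!\left(T^{AFL}_{RLS}\right)$, which is the claimed bound after absorbing $e$ into the $O(\cdot)$.

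I expect the main obstacle to be the clean justification of the per-level comparison $q_i \geq (1/e)\,p_i^{RLS}$. One must argue carefully that (i) the \emph{stop at first constructive mutation} rule guarantees that detecting the improving single-bit flip actually returns that improved point rather than some later, possibly worse, evaluated solution; (ii) the uniform choice of the first flipped bit genuinely reproduces RLS's one-bit neighbourhood, so the counting of improving bits transfers verbatim and the minimum over $A_i$ is preserved; and (iii) restricting attention to improvements found on the very first evaluated flip yields only a \emph{lower} bound on $q_i$ (larger mutations may also improve), which is exactly what an upper bound on $E[T]$ requires, so no further cases need analysis. These points are conceptually simple but must be stated precisely for the AFL transfer to be rigorous.
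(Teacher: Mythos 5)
Your proposal is correct and follows essentially the same route as the paper's own proof: both arguments observe that the first bit flip of \hypfcm{} reproduces the RLS step and is evaluated with probability $1/e$, so each AFL level is left with probability at least $(1/e)\,p_i^{RLS}$, and both then use Lemma~\ref{lem:wald} to charge $O(1+\gamma\log n)$ wasted evaluations per hypermutation, with elitism guaranteeing that each level must be left only once. The only difference is presentational: you cleanly separate the operation-count bound $E[T]\leq e\cdot E\left(T^{AFL}_{RLS}\right)$ from the conversion to fitness evaluations, whereas the paper interleaves the two steps.
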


\begin{proof}
The upper bound on the expected runtime of RLS to solve any function obtained by applying AFL  is $ E(T_{RLS}^{AFL}) \leq \sum_{i=1}^m 1/p_i$, where $p_i$ is $s/n$ when all individuals in level $i$ have at least $s$ Hamming neighbours which belong to a higher fitness level. The probability of mutating to one of the solutions in the first mutation step is the same for \hypfcm. Such a solution will be evaluated with probability $1/e$. If a solution is not found in the first mutation step, then according to Lemma~\ref{lem:wald} at most $O(1+\gamma \log n)$ fitness function evaluation would be wasted. Since the algorithm is elitist and only accepts individuals of equal or better fitness, each level has to be left only once, 
independent of whether improvements are achieved by one or more bit-flips. Hence the claim follows.
\end{proof}


Apart from showing the efficiency of the \fastianp, the theorem also allows to easily achieve 
upper bounds on the expected runtime of the algorithm by just analysing the simple \textsc{RLS}. 
For $\gamma=O(1/\log{n})$, Theorem~\ref{th:aflk} implies the upper bounds of 
$O(n\log{n})$ and  $O(n^2)$ for classical benchmark functions \textsc{OneMax}$(x) = \sum_{i=1}^n x_i$  
and \textsc{LeadingOnes}$(x)=\sum_{i=1}^{n} \prod_{j=1}^{i} x_j$ respectively~\cite{OlivetoBookChapter}. Both of these 
bounds are asymptotically tight since each function's unary unbiased black-box 
complexity is in the same asymptotic order~\cite{Lehre2012}. 
These expected runtimes represent linear speed-ups compared to the (1+1)~IA using the static HMP operators from the literature which have 
$\Theta(n^2 \log n)$ and $\Theta(n^3)$ expected runtimes  for \textsc{OneMax} and \textsc{LeadingOnes} respectively~\cite{CorusOlivetoYazdani2019TCS}.
\begin{corollary}\label{cor:onemax}
The expected runtime of the \fastianp using  \hypfcm{} to optimise 
$\textsc{OneMax}(x):=\sum_{i=1}^{n}x_i$ and 
$\textsc{LeadingOnes}:=\sum_{i=1}^{n}\prod_{j=1}^{i}x_j$ is  
respectively $\Theta\left(n \log{n}\left(1+\gamma \log{n}\right)\right)$ and 
$\Theta(n^2 \left(1+\gamma \log{n}\right))$. For $\gamma=O(1/\log{n})$ these 
bounds reduce to $\Theta(n \log{n})$ and $\Theta(n^2)$. 
\end{corollary}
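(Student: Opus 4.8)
The plan is to prove matching upper and lower bounds for both functions in parallel.

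\emph{Upper bounds.} These are immediate from Theorem~\ref{th:aflk}. The artificial fitness levels method gives $E(T^{AFL}_{RLS}) = O(n\log n)$ for \onemax{} (coupon-collector levels $A_k=\{x:\onemax(x)=k\}$ with $s=n-k$ improving neighbours, so $\sum_k n/(n-k)=\Theta(n\log n)$) and $E(T^{AFL}_{RLS}) = O(n^2)$ for \leadingones{} (levels indexed by the number of leading ones, with a single improving bit, $s=1$). Multiplying by the $O(1+\gamma\log n)$ factor of Theorem~\ref{th:aflk} yields the claimed upper bounds, and in particular $\Theta(n\log n)$ and $\Theta(n^2)$ when $\gamma=O(1/\log n)$.

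\emph{Lower bounds.} I would write the runtime as a product of the number $T$ of \hypfcm{} applications and the per-application evaluation cost, and bound each factor from below. For the per-application factor I invoke the lower half of Lemma~\ref{lem:wald}: a hypermutation that finds no improvement performs $\Theta(1+\gamma\log n)$ evaluations in expectation. Since the fitness takes at most $n+1$ distinct values, at most $n$ applications can be improving; hence, once $E[T]$ is shown to be $\omega(n)$, a $(1-o(1))$ fraction of all applications are non-improving and each contributes $\Theta(1+\gamma\log n)$ expected evaluations. A Wald-type identity as in the proof of Lemma~\ref{lem:wald} then gives a total cost of $\Omega(E[T])\cdot\Omega(1+\gamma\log n)$.

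It therefore remains to show $E[T]=\Omega(n\log n)$ for \onemax{} and $E[T]=\Omega(n^2)$ for \leadingones{}; this is the crux and amounts to proving that the extra in-operator evaluations of \hypfcm{} cannot make it asymptotically faster than RLS in the number of accepted moves. The key estimate is that, from a search point with $k$ remaining zero-bits, the probability that one hypermutation returns a \onemax{} improvement is only $\Theta(k/n)$, i.e. of the same order as a single RLS step. The first mutation step contributes exactly $(1/e)(k/n)$; a later step $i\ge 2$ can improve only if the $i$ flipped bits contain strictly more zeros than ones, i.e. the hypergeometric count of flipped zeros exceeds $i/2$ while its mean is only $ik/n\ll i/2$ for small $k$. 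I would bound this deviation with Serfling's inequality (Theorem~\ref{thm:serfling}) and sum $\sum_{i\ge 2}p_i\,\Pr\{\text{count}>i/2\}$ to show the later steps are lower order. Feeding the per-application success probability $\Theta(k/n)$ into a standard coupon-collector / additive-drift lower bound over the last $\Theta(n)$ zero-bits yields $E[T]=\Omega(n\log n)$. The \leadingones{} case is analogous: an improving hypermutation must flip the single boundary zero before disturbing any leading one, so the per-application improvement probability is $O(1/n)$ at each of the $\Theta(n)$ levels, and the classical $\Omega(n^2)$ argument carries over.

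The main obstacle is exactly this lower bound on $E[T]$: unlike the upper bounds it cannot be read off from Theorem~\ref{th:aflk}, and it is not delivered by the unary unbiased black-box complexity either, which only certifies $\Omega(n\log n)$ \emph{total} evaluations and hence matches the claim only when $\gamma=O(1/\log n)$. One must genuinely rule out that \hypfcm{} exploits its deep evaluations to cross several fitness levels in a single application; the concentration bound on the hypergeometric number of favourable bit-flips (Theorem~\ref{thm:serfling}) is what controls this, after which the coupon-collector and Wald-type steps are routine.
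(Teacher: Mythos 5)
Your upper-bound half coincides exactly with the paper's argument: Theorem~\ref{th:aflk} applied to the standard AFL bounds $O(n\log n)$ and $O(n^2)$ for RLS. For the lower bounds, however, you take a genuinely different route from the paper, and this is worth spelling out. The paper does not prove the lower bounds directly: it invokes the unary unbiased black-box complexity of \onemax and \leadingones, which certifies $\Omega(n\log n)$ and $\Omega(n^2)$ \emph{total evaluations} and therefore, exactly as you observe, matches the upper bounds only when $\gamma=O(1/\log n)$. Your plan --- lower-bound the number $T$ of hypermutation applications via the per-application improvement probability, then multiply by the $\Theta(1+\gamma\log n)$ per-application cost from Lemma~\ref{lem:wald} through a Wald-type identity --- is the natural way to obtain the factor $(1+\gamma\log n)$ in the lower bound for \emph{all} $\gamma$, and in that respect your argument targets the full statement of the corollary more directly than the paper's own justification does. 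The price is that you must rule out multi-level jumps, which the black-box argument gets for free.

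There is, however, one concrete step that fails as written: Serfling's inequality (Theorem~\ref{thm:serfling}) is too weak to establish the key estimate that the per-application improvement probability on \onemax is $\Theta(k/n)$. Serfling gives an \emph{additive} deviation bound: with $k$ zero-bits and $k/n\le 1/4$, it yields $\Pr\{z_i>i/2\}\le \exp\bigl(-2i(1/2-k/n)^2\bigr)\le e^{-i/8}$, where $z_i$ is the number of zeros among the first $i$ flips. Summing against $p_i=\gamma/i$ gives only $\sum_{i\ge 2}p_i\Pr\{z_i>i/2\}=O(\gamma)$, i.e.\ a per-application success probability of $O(k/n+\gamma)$, not $O(k/n)$. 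This breaks down precisely on the levels $k=o(\gamma n)$, which are the ones dominating the coupon-collector sum: you would only get $E[T]=\Omega\bigl(\sum_{k\le \varepsilon n} \frac{1}{k/n+\gamma}\bigr)=\Omega(n\log(1/\gamma))$, e.g.\ $\Omega(n)$ iterations for $\gamma=\Theta(1)$, hence $\Omega(n\log n)$ total evaluations instead of the claimed $\Theta(n\log^2 n)$. The fix is elementary rather than structural: note first that no improvement at all is possible at steps $i\ge 2k$, and for $2\le i<2k$ use the multiplicative-type bound $\Pr\{z_i\ge a\}\le \binom{i}{a}(k/n)^a$ (valid without replacement since each factor $(k-j)/(n-j)\le k/n$) with $a=\lfloor i/2\rfloor+1$; this gives $\sum_{i\ge 2}p_i\Pr\{z_i>i/2\}=O\bigl(\gamma(k/n)^{3/2}\bigr)=o(k/n)$, restoring the $\Theta(k/n)$ estimate and with it $E[T]=\Omega(n\log n)$. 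The same bounds also control the expected number of levels gained per improvement, which is what your deferred drift step needs. Your \leadingones half does not suffer from this issue, since there the geometric decay in $i$ comes for free from the requirement that no leading one be flipped before the boundary zero, giving $O(1/n)$ per application on the last $n/2$ levels without any concentration inequality.
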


\section{Fast Hypermutations for Standard Multimodal Benchmark Functions}
In the previous section we showed that linear speed-ups compared to static HMP are achieved by the \fastianp for standard unimodal benchmark functions i.e., the algorithm is fast at exploitation for hill-climbing problems. In this section we will show that exponential speed-ups compared to the standard bit mutation operators used in traditional EAs are still achieved for standard multimodal benchmark functions i.e., the Fast HMP operators are also efficient at exploration.


We start by using the mathematical methodology derived in the previous section 
to show that the \fastianp is even faster than static HMP for the deceptive \textsc{Trap} function which is 
identical to \textsc{OneMax} except that the optimum is in $0^n$.
\hypfcm{} samples the complementary bit-string with 
probability one if it cannot find any improvements. This behaviour allows it to be 
efficient for this deceptive function. 
Since $n$ bits have to be flipped to reach the global optimum from the local optimum, EAs with  SBM
require exponential runtime with overwhelming probability (w.o.p.)\footnote{In this paper we consider events to occur  ``with overwhelming probability'' (w.o.p.) meaning that they occur with probability at least $1 - 2^{-\Omega(n)}$.}~\cite{OlivetoBookChapter}. By  
evaluating the sampled bit-strings stochastically, the \fastianp provides up to a linear speed-up for small enough $\gamma$ compared to the  \oneoneiahype{}
on \textsc{Trap} as well. 

\begin{restatable}{theorem}{trap}
\label{thm:trap}
The expected runtime of the \fastianp for \textsc{Trap} is 
$\Theta(n \log{n} \left(1+\gamma \log{n}\right))$.
\end{restatable}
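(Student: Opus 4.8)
The plan is to prove matching upper and lower bounds of $\Theta(n\log n(1+\gamma\log n))$ on the expected runtime of the \fastianp for \textsc{Trap}. The key structural observation is that \textsc{Trap} is identical to \textsc{OneMax} everywhere except that the fitness of $0^n$ is set to be the global optimum, so the algorithm behaves exactly as on \textsc{OneMax} right up until it reaches the all-ones string $1^n$, which is the \textsc{OneMax} optimum and the \textsc{Trap} local optimum. Therefore I would split the argument into two phases: first, the time to reach $1^n$ from a random start, and second, the time to escape from $1^n$ to the global optimum $0^n$.

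For the first phase, I would appeal directly to Corollary~\ref{cor:onemax} (or Theorem~\ref{th:aflk} via the AFL bound on RLS). Since the fitness landscape seen by the algorithm before hitting $1^n$ is exactly that of \textsc{OneMax}, the expected time to reach the local optimum $1^n$ is $\Theta(n\log n(1+\gamma\log n))$. I would note that elitism guarantees the algorithm does not lose \textsc{OneMax}-value during this phase, so the AFL-based analysis transfers verbatim.

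For the second phase, the crucial point is the behaviour of \hypfcm{} once the population sits at $1^n$. From the local optimum no single-step improvement on the \textsc{OneMax} part is possible, so the operator never stops early on an improvement while flipping bits $1,\dots,n-1$; it keeps flipping until it reaches the $n$th bit flip, which transforms $1^n$ into its complement $0^n$, the global optimum. By Definition~\ref{def:hyp-fcm}, flipping all $n$ bits deterministically produces $0^n$, and since $p_n=1/e$ the complementary string is evaluated with probability $1/e$ (and in any case $0^n$ is returned as the last evaluated solution with constant probability, being the only fitness improvement encountered). Hence each hypermutation starting from $1^n$ finds the optimum with probability $\Omega(1)$, so the expected number of hypermutations needed to escape is $O(1)$, and by Lemma~\ref{lem:wald} the expected number of wasted function evaluations per such hypermutation is $\Theta(1+\gamma\log n)$. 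This contributes only $O(1+\gamma\log n)$ function evaluations, which is dominated by the first phase.

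The main obstacle I expect is making the second-phase escape claim fully rigorous: I need to argue carefully that on the local optimum the operator genuinely flips all $n$ bits (no spurious early stop, since no intermediate bit-flip pattern can strictly exceed \textsc{OneMax}-value $n-1\ldots$ when a single bit has been flipped off, and improvements over $1^n$ in \textsc{Trap}-value are impossible except at $0^n$), and that the complementary string is then actually evaluated and accepted with constant probability. For the matching lower bound, I would combine the $\Omega(n\log n(1+\gamma\log n))$ lower bound inherited from the \textsc{OneMax} phase — which holds because the algorithm must traverse the same fitness levels and unary unbiased black-box complexity arguments of~\cite{Lehre2012} apply — with the observation that the escape phase cannot reduce this order. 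Since the first phase already dominates and the second adds only lower-order terms, the total is $\Theta(n\log n(1+\gamma\log n))$, completing the proof.
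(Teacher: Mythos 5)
Your proposal is correct and follows essentially the same route as the paper's proof: phase one invokes Corollary~\ref{cor:onemax} (the \textsc{OneMax}-type hillclimb) to reach $1^n$ in $O(n\log n(1+\gamma\log n))$ expected evaluations, phase two uses the fact that from $1^n$ no earlier constructive mutation can occur so the complement $0^n$ is evaluated with probability $p_n=1/e$ at cost $O(1+\gamma\log n)$ wasted evaluations per attempt, and the matching lower bound is taken from the unary unbiased black-box complexity result of~\cite{Lehre2012}. Your extra care in ruling out spurious early stopping at the local optimum simply makes explicit what the paper leaves implicit.
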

\begin{proof}
According to Corollary~3 in the main document, 
 we can 
conclude that the current individual will reach $1^n$ in $ O(n \log n 
\cdot(1+\gamma \log n))$ steps in expectation. 
The global optimum is found in a single mutation operator with probability $1/e$ by 
evaluating after flipping all bits for which the number of additional fitness 
evaluations is $O(1+\gamma \log n)$ in expectation. This bound is asymptotically tight since the function's unary unbiased black-box complexity is in the same order as the presented upper bounds \cite{Lehre2012}.
\end{proof}


The  results of the \oneoneiahype{} on $\textsc{Jump}_d$  and $\textsc{Cliff}_d$
functions \cite{CorusOlivetoYazdani2019TCS} can also be adapted to the \fastianp in a straightforward manner. 
%

Both $\textsc{Jump}_d$ 
and $\textsc{Cliff}_d$  have the same structure as \textsc{OneMax} for bit-strings 
with up to $n-d$ 1-bits and the same optimum $1^n$. For solutions with the 
number of 1-bits between $n-d$ and $n$, $\textsc{Jump}_d$ has a reversed 
\textsc{OneMax} slope creating a gradient towards $n-d$ while $\textsc{Cliff}_d$ 
has a slope heading toward $1^n$, but the fitness values are penalised by an additive factor $d$.
These functions are illustrated in Fig. \ref{fig:JumpCliff}.
Since hypermutation operators have a higher probability of flipping multiple bits, the performance of static HMP
on the $\textsc{Jump}_d$ and $\textsc{Cliff}_d$ functions is superior to that of the standard bit 
mutations used by traditional EAs~\cite{CorusOlivetoYazdani2019TCS}. This advantage is preserved for the \fastianp as shown by the following 
theorem.

\begin{figure}[t!]
 \centering
  \includegraphics[width=.5\textwidth]{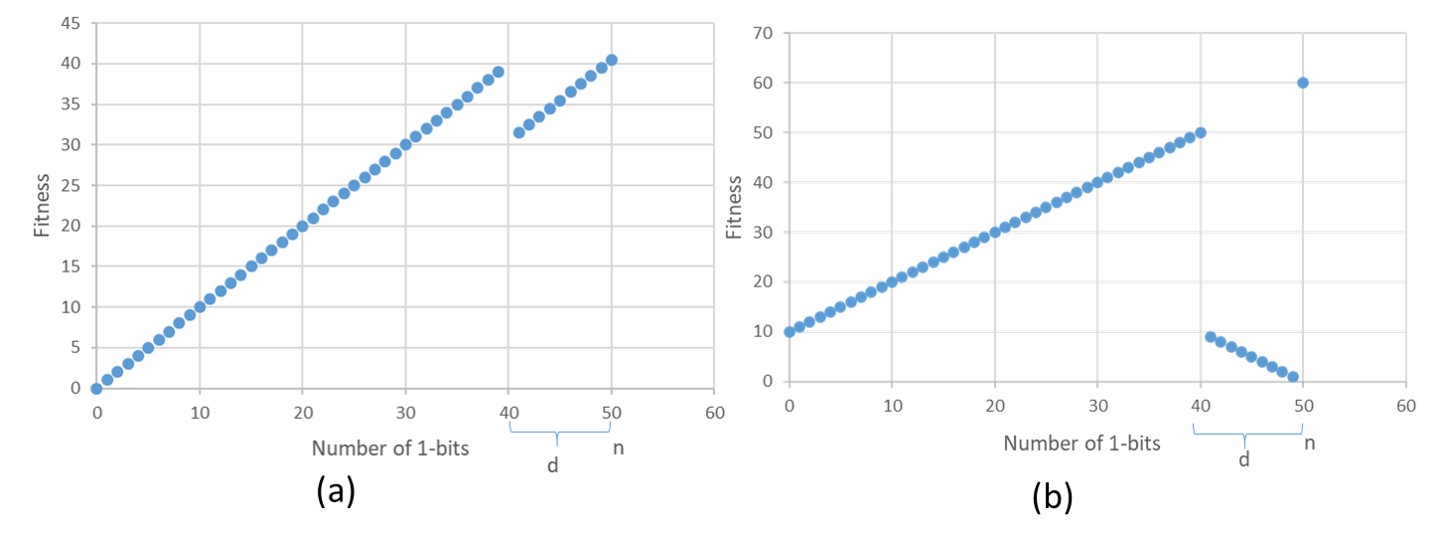}
 \caption{(a) \textsc{Cliff}$_d$ and (b) \textsc{Jump}$_d$ for $n=50$ }
  \label{fig:JumpCliff}
 \end{figure}

%
%

\begin{restatable}{theorem}{jump}
The expected runtime of the \fastianp for $\textsc{Jump}_d$ and 
$\textsc{Cliff}_d$ is $O\left(\binom{n}{d}\cdot (d/\gamma) \cdot (1+\gamma \log n)\right) $. 
\end{restatable}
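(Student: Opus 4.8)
The plan is to split the analysis into two phases, as is standard for \textsc{Jump} and \textsc{Cliff}: a hill-climbing phase in which the current search point reaches the local optimum (any bit-string with exactly $n-d$ one-bits), followed by an escape phase in which a single \hypfcm{} operation jumps directly to the global optimum $1^n$ by flipping precisely the $d$ missing bits. I would bound the first phase with the AFL machinery of Theorem~\ref{th:aflk} and the second phase by a direct success-probability computation combined with the per-operation evaluation cost of Lemma~\ref{lem:wald}.

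For the first phase, both functions coincide with \textsc{OneMax} on all points with at most $n-d$ one-bits, so RLS reaches an $(n-d)$-one-bit string in $O(n\log n)$ expected steps via AFL (levels indexed by the number of one-bits). Theorem~\ref{th:aflk} then converts this into an $O\!\left(n\log n\,(1+\gamma\log n)\right)$ bound for the \fastianp, matching Corollary~\ref{cor:onemax}. Here I would invoke elitism to rule out drift into the reversed-slope region of \textsc{Jump} or beyond the cliff of \textsc{Cliff}: every such point has strictly lower fitness than the $(n-d)$-one-bit plateau and is rejected, so the hill-climb genuinely terminates at the local optimum. Since for $d\geq 2$ we have $\binom{n}{d}(d/\gamma)\geq \binom{n}{2}\cdot 2=\Omega(n^2)$, this first-phase cost is dominated by the escape phase and is absorbed into the final bound.

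For the escape phase, fix a current point with exactly $d$ zero-bits. The probability that the first $d$ bits flipped by \hypfcm{} are exactly those $d$ zero-bits (in some order) is $d!\,(n-d)!/n!=1/\binom{n}{d}$, and conditioned on this the operator sits at $1^n$ after the $d$th flip. Independently of which bits are selected, \hypfcm{} evaluates at step $d$ with probability $p_d=\gamma/d$ (the middle branch of~(\ref{probGamma}) for $2\le d\le n/2$; for $d=1$ the probability $1/e$ only makes the escape faster), and $1^n$ is a strict improvement over the local optimum on both functions, so the operator stops and returns it. The crucial point I must verify is that no evaluation at an earlier step $i<d$ triggers a premature stop: after $i<d$ of the zero-bits are flipped the point has $n-d+i$ one-bits and lies on the reversed \textsc{Jump} slope, or just past the \textsc{Cliff}, both of which have fitness strictly below the $(n-d)$-one-bit value, so such an evaluation is never an improvement and the operator continues. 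This yields a per-operation escape probability of at least $\gamma/(d\binom{n}{d})$, hence an expected number of \hypfcm{} operations of at most $d\binom{n}{d}/\gamma$; elitism keeps the current point on the local-optimum fitness level until the jump succeeds, so the geometric waiting-time argument is valid.

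Finally I would combine the phases. By Lemma~\ref{lem:wald} each \hypfcm{} operation that fails to improve costs $\Theta(1+\gamma\log n)$ function evaluations in expectation, so multiplying the expected number of escape operations by this cost gives the claimed $O\!\left(\binom{n}{d}(d/\gamma)(1+\gamma\log n)\right)$, into which the dominated first phase is absorbed; the identical argument covers \textsc{Cliff} since the only accepting move from its local optimum is again the direct jump to $1^n$. I expect the main obstacle to be the fitness bookkeeping of the escape step, namely proving that every intermediate point visited during the correct $d$-bit jump is strictly worse than the local optimum (so no premature stop occurs) and that the evaluation decision at step $d$ is independent of the random order in which the bits are flipped.
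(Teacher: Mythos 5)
Your proposal is correct and follows essentially the same route as the paper's proof: a hill-climbing phase bounded via the \textsc{OneMax} result (Corollary~\ref{cor:onemax}), an escape phase with per-operation success probability $\binom{n}{d}^{-1}\cdot \gamma/d$, and a final multiplication by the $O(1+\gamma\log n)$ per-operation cost from Lemma~\ref{lem:wald}. Your additional bookkeeping (ruling out premature stops at intermediate steps and invoking elitism to keep the current point on the local optimum) merely makes explicit what the paper leaves implicit, so there is nothing to change.
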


\begin{proof}
According to Corollary~3, 
the time to sample a solution with $n-d$ 1-bits is at most $O\left(n\log{n}\left(1+\gamma \log{n}\right)\right)$ because the function behaves as \textsc{OneMax} for solutions with less than $n-d$ 1-bits. The Hamming distance of locally optimal points to the global 
optimum is $d$, thus, the probability of reaching the global optimum at the $d$th
mutation step is $\binom{n}{d}^{-1}$ while the probability of evaluating it is $\gamma/d$. Using Lemma~\ref{lem:wald}, we bound the total expected time to optimise \textsc{Jump} and \textsc{Cliff} by $E(T)= O\left(\binom{n}{d}\cdot (d/\gamma) \cdot (1+\gamma \log n)\right) $.
\end{proof}


For \textsc{Jump}$_{d}$ and \textsc{Cliff}$_{d}$, the superiority of the \fastianp in comparison to 
the deterministic evaluations scheme (i.e., the original \oneoneiahype{}) depends on the function parameter $d$. 
If $\gamma=\Omega(1/\log{n})$, the \fastianp performs better 
when $\min{\{d, n-d\}}=o(n/\log{n})$ while the  deterministic scheme  is preferable for larger $\min{\{d,n-d\}}$. 
However, for small $\min{\{d,n-d\}}$ the difference between the runtimes can be as large as 
a factor of $n$ in favor of the \fastianp, while even for the largest $\min{\{d,n-d\}}$, the 
difference is less than a factor of $\log{n}$ in favor of the deterministic 
scheme. Here we should also note that when both $d$ and $n-d$ are in the order of $\Omega(n/\log{n})$, the expected 
time is exponentially large for both algorithms (albeit considerably smaller than that of standard EAs) and the $\log{n}$ factor has no 
realistic effect on the applicability of the algorithm. 
For these reasons the \fastianp should be more efficient in practice.

\section{\fastoptia}\label{sec:fastoptia}
In the previous sections we showed how the \fastianp achieves linear speed-ups in the exploitation phases compared
to the traditional static HMP, while still maintaining a high quality performance at escaping from local optima of multimodal functions.
In this section we will show how also the complete \fastoptia, which uses a population, cloning, hypermutations and an ageing operator, can take considerable advantage from the use of the {\it fast} HMP operator.
In particular, we show linear, quasi-linear and exponential speed-ups compared to bounds on the expected runtime of the standard Opt-IA known in the literature.

\subsection{Optimal Expected Runtimes for Unimodal functions}
We start by analysing 
the performance of the \fastoptia for standard unimodal benchmark functions, i.e., \textsc{OneMax} and \textsc{LeadingOnes}.
Essentially the bounds derived previously for the \fastianp also apply to the \fastoptia by multiplying them with the population and clone sizes as long as the
parameter $\tau$ is set large enough such that ageing does not trigger with overwhelming probability before the global optimum is identified
(i.e., the use of ageing does not make sense unless local optima are identified first).
Hence, for correctly chosen parameter values, the algorithm can optimise these unimodal functions in optimal asymptotic expected runtimes.

\begin{theorem} \label{th:FoptOneMax}
 \fastoptia{} with parameters $\mu\geq 1$, $dup \geq 1$ and $\tau=c \cdot n \log n$ for some constant $c$, optimises  \textsc{OneMax} and \textsc{LeadingOnes} in   expected $O(\mu \cdot dup \cdot n \log n \cdot(1+\gamma \log n))$ and $O(\mu \cdot dup \cdot n^2 \cdot(1+\gamma \log n))$ fitness function evaluations respectively.
\end{theorem}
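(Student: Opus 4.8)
The plan is to reduce the runtime of the \fastoptia, measured in fitness evaluations, to the expected number of generations $E[G]$ until the optimum is found, and then to bound $E[G]$ by tracking only the best individual in the population. In every generation the algorithm produces $\mu\cdot dup$ clones, each of which is perturbed by one \hypfcm{} operation; by Lemma~\ref{lem:wald} each such operation costs $O(1+\gamma\log n)$ fitness evaluations in expectation (fewer when an improvement is found). Since the number of operations per generation is the fixed quantity $\mu\cdot dup$, Wald's equation gives that the total expected number of evaluations is $\mu\cdot dup\cdot E[G]\cdot O(1+\gamma\log n)$. It therefore suffices to show $E[G]=O(n\log n)$ for \onemax and $E[G]=O(n^2)$ for \leadingones.

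To bound $E[G]$ I would first argue under the assumption that ageing never removes a current-best individual. The selection-for-replacement step keeps the best $\mu$ solutions, so under this assumption the best fitness in the population is non-decreasing, and each AFL level must be left only once. In a generation where the best individual lies in a level with $s_i$ improving Hamming-1 neighbours, each of its $dup\ge 1$ clones flips such a neighbouring bit first and evaluates after this first flip --- and hence improves the best fitness --- with probability at least $(s_i/n)\cdot(1/e)$, exactly as in the proof of Theorem~\ref{th:aflk}. Thus the expected number of generations needed to leave the level is at most $en/s_i$, and summing over the AFL partition yields $E[G\mid \text{ageing inactive}]\le e\cdot E(T^{AFL}_{RLS})$, which is $O(n\log n)$ for \onemax and $O(n^2)$ for \leadingones by the AFL bounds underlying Corollary~\ref{cor:onemax}.

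It then remains to justify that ageing is indeed inactive, which I expect to be the main obstacle and which is most delicate for \leadingones, because its total runtime $O(n^2)$ far exceeds $\tau=cn\log n$; the argument must therefore be made per level rather than over the whole run. Whenever a clone improves the best fitness, the improved offspring enters the next generation with $age=0$, so the relevant quantity is the number of generations between two consecutive strict improvements of the best fitness. For both functions this gap is geometrically distributed with success probability $\Omega(1/n)$, so it exceeds $\tau$ with probability at most $(1-\Omega(1/n))^{cn\log n}\le n^{-\Omega(c)}$; a union bound over the $O(n)$ improvements shows that, for a sufficiently large constant $c$, no current-best individual reaches age $\tau$ before the optimum is found with probability $1-o(1)$. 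On this event the bound of the previous paragraph applies verbatim.

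Finally I would close the gap between this $1-o(1)$ statement and the claimed bound on the \emph{expected} runtime by a standard restart argument: should ageing trigger (an event of probability $o(1)$, hence at most $1/2$ for large $c$), the population is eventually refilled with fresh random solutions, after which the process is independent of the past; since each such phase succeeds with probability at least $1/2$, the expected number of phases is $O(1)$, leaving $E[G]$ unchanged up to constants. Combining this with the per-generation cost from the first paragraph yields the stated expected numbers of fitness evaluations, namely $O(\mu\cdot dup\cdot n\log n\cdot(1+\gamma\log n))$ for \onemax and $O(\mu\cdot dup\cdot n^2\cdot(1+\gamma\log n))$ for \leadingones.
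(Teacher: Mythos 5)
Your proposal is correct, and its skeleton coincides with the paper's proof: both track only the best individual, lower-bound the per-generation improvement probability by an evaluated first bit-flip (at least $s_i/(en)$), sum over the fitness levels to get $O(n\log n)$ resp.\ $O(n^2)$ generations for \onemax and \leadingones, and convert generations into evaluations by charging $\mu\cdot dup\cdot O(1+\gamma\log n)$ per generation via Lemma~\ref{lem:wald}. Where you genuinely depart from the paper is the ageing analysis. The paper bounds the probability that a \emph{single} waiting time between improvements reaches $\tau$ by iterated Markov, obtaining $\sqrt{\log n}^{-\Omega(\sqrt{\log n})}$, and stops there; note that this quantity is not $o(1/n)$, so it could not survive a union bound over the $\Theta(n)$ levels, and the paper neither performs that union bound nor converts its high-probability statement into the claimed bound on the \emph{expectation}. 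You instead use the geometric tail $(1-\Omega(1/n))^{cn\log n}\le n^{-\Omega(c)}$, which for sufficiently large $c$ does survive the union bound over all $O(n)$ improvements (consistent with the theorem's ``for some constant $c$''), and you then recover the expectation with an explicit restart argument. This makes your treatment more complete than the paper's on exactly the point the theorem asserts. The one detail to tighten is the restart step: a failed phase does not end with the population literally ``refilled with fresh random solutions,'' since ageing removes each over-aged individual only with probability $p_{die}$, so a new phase may inherit survivors of arbitrary ages, including a best individual already at age $\tau$. This is repairable --- because $p_{die}=1-\frac{1}{(dup+1)\mu}\ge 1/2$, over-aged individuals and their non-improving descendants die out within $O(1)$ generations in expectation, while any strictly improving offspring re-enters with age zero, so every phase quickly reaches a state from which your gap analysis applies with threshold $\tau-O(\log n)$ --- but as written, the claim that each phase succeeds with probability at least $1/2$ independently of the past needs this justification.
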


\begin{proof}
For \textsc{OneMax}, we pessimistically assume that only one individual makes progress and that it only does so in the first bit flip of the hypermutation. Let $i$ be the number of 0-bits in the considered individual. Then in at most $\sum_{i=1}^n en/i=O(n \log n)$ generations it will find the optimum. Taking into account that 
in each hypermutation the expected  fitness evaluations wastage in case of failure is $O(1+\gamma \log n)$  and that dup $\cdot \mu$  individuals are hypermutated in each generation, we get $O(\mu \cdot dup \cdot n \log n \cdot(1+\gamma \log n))$  as an upper bound on the expected runtime. 

We show the upper bound for \textsc{LeadingOnes} with the same pessimistic assumptions. Since the 
probability of improving in the first bit flip of the hypermutation is $\frac{1}{en}$, we get a bound of $O(\mu 
\cdot dup \cdot n^2 \cdot(1+\gamma \log n))$ on the expected number of fitness function evaluations since at most $n$ improvements have to be made. 

Since for both problems the improvement probability is at least $\frac{1}{en}$ in each iteration, the  probability that the waiting time for an improvement is at least $n\sqrt{\log{n}}$ is at most $e/\sqrt{\log{n}}$  by Markov's inequality. Thus, the probability that the best individual in the population reaches age $\tau=\Theta(n\log{n})$ is at most $\sqrt{\log{n}}^{-\Omega(\sqrt{\log{n}})}$. 
\end{proof}

%
%

\subsection{Quasi-linear Speed-Ups when Both Hypermutations and Ageing are Necessary: \textsc{HiddenPath}}\label{sec:hiddenpath}
 In \cite{CorusOlivetoYazdani2019TCS}, a benchmark function called \textsc{HiddenPath} (Fig. \ref{hp}) 
was presented where the use of both the ageing and the hypermutation operators is crucial for finding 
the optimum in polynomial time. 
\textsc{HiddenPath} is defined as
 \[
\textsc{HiddenPath}(x)= \\
\]
\[\begin{cases}
		n-\epsilon + \frac{\sum_{i=n-4}^n (1-x_i)}{n}  & \text{if}\; 
|x|_0=5 \; \& \; x \neq 1^{n-5}0^5,\\
        0 & \text{if}\; |x|_0<5 \;\text{or}\; |x|_0=n, \\
       n-\epsilon+\epsilon k/\log n & \text{if } 5 \leq k \leq \log{n}+1\; \& 
\; x=1^{n-k}0^k, \\
       n & \text{if}\; |x|_0=n-1,\\
    |x|_0 & \text{otherwise},
 \end{cases}
 \label{hiddenpath}
\]
where $|x|_0$ and $|x|_1$ respectively denote the number of 0-bits and 1-bits in a bit-string $x$.
 This function provides a gradient (where the fitness is evaluated by \textsc{ZeroMax}$=\sum_{i=1}^n (1-x_i)$) to local optima (i.e., solutions with $n-1$ 0-bits), from which the 
hypermutation operator can find another gradient (solutions with exactly five 0-bits with fitness increasing with more 0-bits in the rightmost five bit positions). This second gradient leads to a path which consists of $\log{n}-3$ solutions of the form $1^{n-k}0^{k}$ for $5\leq k \leq 
\log{n} +1$ and ends up on the global 
optimum. This path (called \textsc{Sp}) is situated on the opposite side of the search space (i.e., nearby the 
complementary bit-strings of the local optima) so it can easily be reached with hypermutations. However, the ageing operator is 
necessary for the algorithm to accept a worsening; otherwise \textsc{Sp} is not accessible because the second gradient and the \textsc{Sp} path have lower fitness than that of the local optima. 

In~\cite{CorusOlivetoYazdani2019TCS}, an upper bound of $O(\tau \mu n+\mu n^{7/2})$ for the expected runtime of the traditional Opt-IA for the problem was established.
The same proof strategy allows us to show an upper bound smaller by an $n/\log{n}$ factor for the \fastoptia.  The smaller bound is achieved thanks to the speed-up that the {\it fast} HMP operator has in the exploitation phases. The speed-up is only quasi-linear rather than linear because of the $\gamma/2=1/2\log n$ probability of evaluating a successful 2-bit flip on the $S_5$ gradient leading towards the hidden path (i.e., hence the extra $O(\log n)$ term in the upper bound).



\begin{figure}[t!]
\centering
  \includegraphics[width=.3\textwidth]{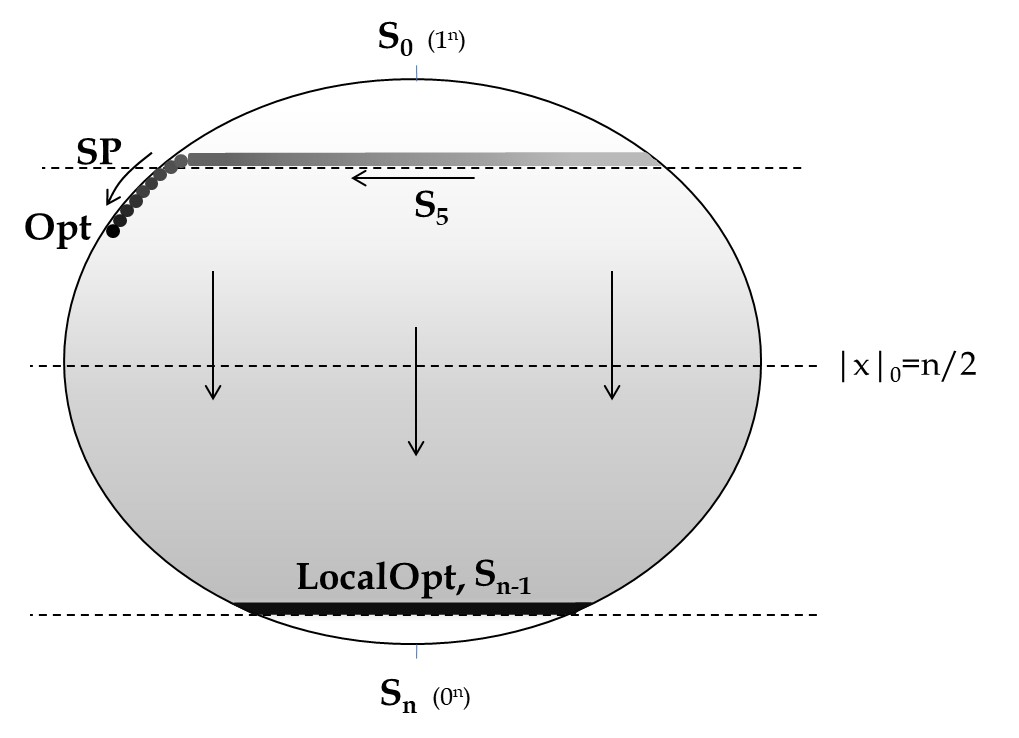}
 \caption{\textsc{HiddenPath} \cite{CorusOlivetoYazdani2019TCS}}
 \label{hp}
 \end{figure}
 
\begin{restatable}{theorem}{hiddenpath}
\fastoptia requires $O(\tau \mu + \mu n^{5/2}\log{n})$ fitness function 
evaluations  in expectation to optimise \textsc{HiddenPath} with
$\mu=O(\log n)$, {\color{black}$dup=1$}, $\gamma=\Omega(1/\log{n})\leq 1/(\color{black}5\ln{n})$ and 
$\tau=\Omega(n (\log n)^3)$. \label{th:hiddenpath}
\end{restatable}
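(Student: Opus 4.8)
The plan is to reproduce, phase by phase, the analysis of the standard Opt-IA on \textsc{HiddenPath} from~\cite{CorusOlivetoYazdani2019TCS}, substituting the faster \hypfcm{} estimates of Lemma~\ref{lem:wald} and Theorem~\ref{th:aflk} into every exploitation step. A run is split according to the landscape into: (i) climbing the \textsc{ZeroMax} gradient (the ``otherwise'' branch, fitness $|x|_0$) to a local optimum with $|x|_0=n-1$ and fitness $n$; (ii) escaping such an optimum, which needs ageing to accept a worsening onto the second gradient $S_5$ (strings with exactly five $0$-bits), since both $S_5$ and the path $\{1^{n-k}0^k:5\le k\le\log n+1\}$ sit below the local optima in fitness; (iii) climbing $S_5$ to $1^{n-5}0^5$; and (iv) climbing the path to the optimum $1^{n-\log n-1}0^{\log n+1}$. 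Two quantitative facts drive the improvement over the deterministic bound: by Lemma~\ref{lem:wald} each \hypfcm{} call wastes only $O(1+\gamma\log n)=O(1)$ evaluations for $\gamma=\Theta(1/\log n)$ instead of $\Theta(n)$, which replaces the $\tau\mu n$ term by $\tau\mu$ and divides the escape cost by $n$; while the decisive \emph{two}-bit flip on $S_5$ that moves a $0$-bit into the last five positions is evaluated only with probability $p_2=\gamma/2=\Theta(1/\log n)$, which is precisely the source of the surviving $O(\log n)$ factor, turning the deterministic $\Theta(n^{7/2})$ escape cost into $\Theta(n^{5/2}\log n)$.

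Phases (i) and (iv) are straightforward. For (i), away from the special regions \textsc{HiddenPath} is $|x|_0$, so Theorem~\ref{th:aflk} (as in Corollary~\ref{cor:onemax}) reaches a local optimum in $O(\mu\cdot dup\cdot n\log n(1+\gamma\log n))=O(n(\log n)^2)$ evaluations, which is dominated by $\tau\mu$. For (iv), each path step $1^{n-k}0^k\to1^{n-k-1}0^{k+1}$ is a single prescribed bit-flip that \hypfcm{} executes and accepts with probability $\Theta(1/n)$ per call; the $\Theta(\log n)$ steps therefore cost $O(\mu\,n\log n)$ evaluations and are likewise dominated, and since every step is an improvement that resets the age to $0$, the choice $\tau=\Omega(n(\log n)^3)$ guarantees via a Markov bound on the waiting times (exactly as in the proof of Theorem~\ref{th:FoptOneMax}) that the path is traversed before any premature ageing removal with overwhelming probability. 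I would fix the hidden constants in $\tau$ and in the condition $\gamma\le1/(5\ln n)$ so that this holds and so that spurious evaluations at unintended mutation steps of the relevant hypermutations (which would return low-fitness strings) occur only with negligible probability.

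The crux, and the step I expect to be the main obstacle, is the escape, phases (ii)--(iii). Here I would use the property stressed after Definition~\ref{def:hyp-fcm}: unlike deterministic FCM, which from a local optimum can only return the useless complementary string, \hypfcm{} returns its last evaluated point, so a single call returns a five-$0$-bit string with probability $\Theta(\gamma)$. In a generation where the whole population has aged to $\tau$, the ageing step with $p_{die}=1-1/((dup+1)\mu)$ removes all fitness-$n$ local optima while, with the complementary probability of order $1/\mu$, a freshly created five-$0$-bit individual survives to seed the next population, mirroring the \textsc{Balance} and \textsc{Cliff} escape analyses of~\cite{OlivetoSudholt2014,CorusOlivetoYazdani2019TCS}. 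The delicate point is that the decisive two-bit flips towards $1^{n-5}0^5$ each occur only with probability $\Theta(\gamma/n^2)$ per call and so are not completed within a single ageing window of $\tau$ generations; one must therefore track carefully how many ageing cycles are needed, whether partial progress along $S_5$ is retained or lost when ageing removes the current best, and the probability that the population commits to the path rather than reverting to a local optimum. Bounding the expected number of such cycles (with the Serfling and Ballot tools of Theorems~\ref{thm:serfling} and~\ref{thm:ballot} controlling the relevant hypergeometric and random-walk events) and multiplying by the per-cycle cost $O(\tau\mu)$, while charging the $\gamma/2$ penalty of the two-bit flip, is what produces the dominant $O(\mu n^{5/2}\log n)$ term; summing the four phases and absorbing the dominated contributions then yields $O(\tau\mu+\mu n^{5/2}\log n)$.
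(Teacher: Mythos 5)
Your phase decomposition, the two sources of speed-up (the $O(1+\gamma\log n)$ wastage from Lemma~\ref{lem:wald}, and the $\gamma/2$ evaluation probability of the decisive $2$-bit flips on $S_5$ as the origin of the surviving $\log n$ factor), and phases (i) and (iv) all match the paper. The genuine gap is in the escape mechanism, which you yourself flag as the crux: the route you propose does not work. You dismiss the complementary bit-string as ``useless'' and instead let a hypermutation of an $S_{n-1}$ local optimum directly return a five-$0$-bit string (probability $\Theta(\gamma)$), which then ``survives to seed the next population''. But such an $S_5$ offspring has fitness at most $n-\epsilon+5/n<n$, so it is \emph{not} an improvement over its $S_{n-1}$ parent: by the age rule of Algorithm~\ref{alg:fastoptia} it \emph{inherits} the parent's age $\geq\tau$ (it is ``freshly created'' but old), and hence is removed by ageing with probability $1-\frac{1}{2\mu}$ in each subsequent generation. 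It cannot durably seed anything: an age-$0$ individual in the $S_5$ region requires a \emph{strict} fitness improvement, and from an $S_5$ solution improvements need a precise $2$-bit flip evaluated with probability $O(\gamma/n^2)$ per hypermutation. Your per-escape-cycle success probability therefore degrades by a factor of order $n^2/\gamma$ compared to what the theorem needs, giving a bound of order $n^3$ times polylogarithmic factors rather than $O(\tau\mu+\mu n^{5/2}\log n)$. The paper's proof works precisely because it routes the escape through the complement you discarded: the lone ageing survivor creates and evaluates its complement $S_1$ (probability $1/e$), a fitness-$0$ point; from $S_1$ an $S_5$ solution is reached by at most six bit-flips and evaluated with probability at least $\gamma/6$, and \emph{this} step is a strict improvement ($n-\epsilon>0$), so the resulting $S_5$ gets age $0$, survives selection, and takes over via the $S_5\leftrightarrow S_{n-5}$ complementation doubling, with $O(1/\gamma)=O(\log n)$ ageing cycles needed in expectation. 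The \textsc{Cliff}-style analogy you invoke transfers only because there the escaped solution improves with \emph{constant} probability $d/n$ before ageing can remove it; here that probability is $O(\gamma/n^2)$, which is exactly why the detour through fitness $0$ is indispensable.

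A secondary omission: you never derive the exponent $5/2$. In the paper the dominant $O(\mu n^{5/2}\log n)$ term does not come from ``number of ageing cycles times the per-cycle cost $O(\tau\mu)$'' as you suggest, but from the $S_5$ gradient climb itself: $O(n^2/\gamma)$ expected generations, converted via Markov's inequality into an $O(n^{5/2}/\gamma)$ bound holding with probability $1-1/\sqrt{n}$ (so that the low-probability failure events, which force a restart of the whole trajectory, do not inflate the overall expectation), multiplied by $\mu\cdot O(1+\gamma\log n)$ evaluations per generation. Without this step your outline would naturally produce $n^2\log n$, not the claimed $n^{5/2}\log n$, and the accounting of restarts would be left unjustified.
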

\begin{proof}
We follow similar arguments to those of the proof of Theorem 11 in 
\cite{CorusOlivetoYazdani2019TCS} for the traditional Opt-IA. 
 For simplicity during the analysis, we call a non-\textsc{Sp} point an $S_i$ solution where $i$ is the number of 0-bits. We also pessimistically assume that until the very end, the global optimum point is not evaluated. 

After initialisation, an $S_{n-1}$ solution will be found in expected {\color{black}$O(\mu n \log n)$} 
generations by hill-climbing the \textsc{ZeroMax} part of the function according to Theorem \ref{th:FoptOneMax}. This 
individual creates and evaluates another $S_{n-1}$ search point with probability $\gamma/(2n)$ (i.e., with probability $(n-1)/n$ a 0-bit is flipped and then the 1-bit is flipped with probability $1/(n-1)$, and the solution will be 
evaluated with probability $\gamma/2$ after the second bit flip). Hence, after at most $\mu 
\cdot O(n)$ generations in expectation, the whole population will consist only of $S_{n-1}$  
solutions.
Considering that the probability of 
producing two $S_{n-1}$ solutions in one generation is $\binom{\mu}{2}\cdot O(\gamma/n) \cdot 
O(\gamma/n)=O(1/n^2)$, with probability at least $1-o(1)$ we see at most one 
new $S_{n-1}$ per generation for any phase length of $o(n^2)$ generations. Taking into 
account that the probability of creating a new $S_{n-1}$ individual is $\gamma/(2n)$ and following 
the proof of Theorem 11 in \cite{CorusOlivetoYazdani2019TCS}, we can conclude that in $O(\mu^3 \cdot 
n/\gamma)$ 
generations in expectation, the whole population reaches the same age while on 
the local optimum. Using Markov's inequality iteratively we can bound the probability that a 
population that consists of $S_{n-1}$ individuals of the same age will be observed in at most  
$O(\mu^3 \cdot n (\log{n})^2)$ generations with probability $1-o(1)$. Then, 
after at most $\tau$ generations, with probability 
$(1-1/(2\mu))^{2\mu-1} \cdot 1/(2\mu)$, one solution 
survives and the rest are removed from the population. 
{\color{black}In the following generation, while $\mu-1$ randomly initialised solutions are 
added instead of the removed solutions,
 the survived solution creates and 
evaluates an $S_1$ solution with probability $1/e$.
If an $S_1$  solution is not created before the newly generated individuals 
reach the $S_{n-1}$ level, we repeat the arguments starting from the takeover of the population by 
$S_{n-1}$ individuals. Since the jump to $S_1$ occurs with probability $1/e$, in expectation we repeat the same process at most a constant number of times and the runtime until the success has the same asymptotic order as the runtime until the first attempt, i.e., $O(\mu^3 \cdot n (\log{n})^2)$ hypermutation operations.
 
 After the $S_1$ individual is added to the population,} the hypermutation operator finds an $S_5$ solution  from this search point by flipping at most six bits and evaluating it with probability at least $\gamma/6$,  which requires $O(\log{n})$ attempts in 
expectation and in turn implies an expected time until this event occurs of $O(\mu^3 \cdot n 
(\log{n})^3)$. This individual will be added to the population with its age set to zero if the 
complementary bit-string ($S_{n-1}$) is not evaluated, which happens with probability $(1-1/e)$. In the same 
generation the $S_1$ solution dies with probability $1-\frac{1}{2\mu}$ due to ageing. 

Next, we show that the $S_5$ solutions will take over the population, and the first 
point of \textsc{Sp} will be found before any $S_{n-1}$ is found. 
An $S_5$ creates an $S_{n-5}$ individual and an $S_{n-5}$ individual creates an 
$S_{5}$ individual with constant probability $1-(1/e)$ by evaluating complementary 
bit-strings. Thus, it takes $O(1)$ generations until the number of $S_5$ and 
$S_{n-5}$ individuals in the population doubles. Since  the 
total number  of $S_5$ and $S_{n-5}$ increases exponentially in expectation, in 
$O(\log{\mu})=O(\log\log{ n})$ generations the 
population is taken over by them. After the take-over, since each $S_{n-5}$ 
solution creates an $S_{5}$ solution with constant probability, in the following 
$O(1)$ generations in expectation each $S_{n-5}$ creates an $S_{5}$ solution 
which have higher fitness value than their parents and replace them in the 
population. Overall, $S_5$ solutions take over the population in $O(\log\log{ 
n})$ generations in expectation. 

For $S_5$, $\textsc{HiddenPath}$ has a gradient towards the \textsc{Sp} which favors solutions with more 0-bits in the first (the rightmost) five bit positions. Every improvement on the gradient 
takes $O((2/\gamma) \cdot n^2)$ generations in expectation since it is enough to flip a 
 precise 1-bit and a precise 0-bit in the worst case.  Considering that there are five different fitness values on 
the gradient, in $O(5 \cdot 2 \cdot n^2/\gamma)=O(n^2/\gamma)$ generations in 
expectation the first 
point of the \textsc{Sp} will be found. Applying Markov's inequality, this 
time will not exceed $O(n^{5/2}/\gamma)$ with probability at least 
$1-(1/\sqrt{n})$. 

Now we go back to the probability of finding a locally optimal point before finding an 
\textsc{Sp} point. Due to the symmetry of the hypermutation operator, the probability of creating an $S_{n-1}$ solution from an $S_{5}$ solution is identical to the  probability of creating an $S_{n-1}$ solution from an $S_{n-5}$ solution. The probability of increasing the number of 0-bits by $k$ given that the initial number of 1-bits is $i$ and the number of 0-bits is $n-i$, is at most $(2i/n)^k$ due to the Ballot theorem since each improvement reinitialises a new ballot game with higher disadvantage (see the proof of Theorem~\ref{th:nfcmneg} for a more detailed argument). Thus, the probability that a local optimal solution is sampled is $O(n^{-4})$. The probability that such an event never happens before finding \textsc{Sp} is $1-o(1)$. After finding SP, in $O(n\log 
n)$ generations in expectations the global optimum will be found at the end of 
the SP. The probability of finding any locally optimal point from \textsc{Sp} is at most 
$O(1/n^4)$, hence this event does not happen before reaching the global optimum 
with probability $1-o(1)$. Overall, the runtime is dominated by 
$O(\tau+n^{5/2}/\gamma )$ which give us $O((\tau+n^{5/2}/\gamma ) \cdot 
\mu(1+\gamma
\log n))$ as the expected number of fitness evaluations. Since 
$\Omega(1/\log{n})=\gamma\leq 1/(\color{black}5\ln{n})$, the upper bound reduces to $O(\tau  \mu+\mu n^{5/2}\log{n})$.
\end{proof}

\subsection{Exponential Speed-Ups when Traditional Hypermutations are Detrimental: $\textsc{Cliff}_d$}\label{sec:cliff}

\textsc{HiddenPath} was originally designed  to highlight the behaviour of 
Opt-IA and illustrate its strengths. In particular, the function is an illustrative example problem where both hypermutations and ageing are necessary.
Indeed, it was shown that algorithms using either only the HMP operator (without ageing) or only the ageing operator (e.g., with standard bit mutation or local search) cannot optimise the function in polynomial time with very high probability~\cite{CorusOlivetoYazdani2019TCS}.

\textsc{HiddenPath} was especially designed to exploit the fact that HMP operators
only stop at the first constructive mutation, 
hence always return the complementary 
bit-string with probability $1$ unless some 
improvement over the parent is found before. 
On the other hand, by not returning solutions of lower quality apart from the complementary bit-string, 
the static HMP does not allow Opt-IA to take advantage of the power of ageing at escaping local optima in general, thus seriously limiting the potential explorative power of the algorithm. In this subsection we 
show that the \fastoptia, 
with approproate parameter values for $\gamma$ can escape from local optima by accepting a variety of solutions of  lower quality.

For this purpose, we consider the $\textsc{Cliff}_d$ benchmark function (defined in the previous section) which is traditionally used to evaluate the performance of randomised search heuristics at escaping local optima by accepting solutions of lower quality~\cite{JaegerskuepperStorch2007,Jorge2015, LissovoiOlivetoWarwicker2019}.
$\textsc{Cliff}_d$ was also used to show the power of the ageing operator in~\cite{CorusOlivetoYazdani2019TCS}.
 RLS 
and EAs using standard bit mutation coupled with ageing can escape the local optimum of $\textsc{Cliff}_d$ by using their small mutation rates to create solutions at the bottom of the cliff in the same iteration where the rest of the population dies. This allows  both algorithms to optimise the hardest $\textsc{Cliff}_d$ functions (when the gap between the local and global optimum is linear, i.e., $d=\Theta(n)$) respectively in expected runtimes of
$O(n\log{n})$ and $O(n^{1+\epsilon}\log{n})$ for any arbitrarily small 
positive constant  $\epsilon$. 
On the other hand, since static HMP with FCM does not return solutions of lower quality except for the complementary bit-string, 
the standard Opt-IA can only rely on hypermutations alone to escape from the local optima. 
Hence, the runtime is exponential in the distance between the top of the cliff and the global optimum w.o.p. 
The following theorem shows how 
for the hardest \textsc{Cliff}$_d$ instances, i.e., $d=\Theta(n)$, the \fastoptia has the best possible asymptotic expected runtime achievable by unary unbiased randomised search heuristics for any function with unique global optimum. 

\begin{restatable}{theorem}{nfcmpos}\label{thm:cliff}
\fastoptia{} with $\mu=O(\log{n})$, $dup=O(\log{n})$, $\gamma=1/(n \log^2 n)$ and 
$\tau= \Theta(n \log n)$  needs $O(\mu\cdot dup \cdot \tau\cdot\frac{n^2}{d^2}+n \log n)$ fitness function evaluations in 
expectation to optimise \textsc{Cliff} with $d \leq n/4-\epsilon$ for 
a small constant $\epsilon$. 
\end{restatable}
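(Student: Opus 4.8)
The plan is to decompose the run into four phases, reusing the machinery of the \textsc{HiddenPath} analysis (Theorem~\ref{th:hiddenpath}) and of the classical ageing-based $\textsc{Cliff}_d$ arguments, and to exploit throughout that the minuscule choice $\gamma=1/(n\log^2 n)$ makes \hypfcm{} behave like a single-bit-flip operator with overwhelming probability: by Lemma~\ref{lem:wald} the expected number of evaluations per hypermutation is $1+O(\gamma\log n)=1+o(1)$, the first flip is evaluated with probability $1/e$, the complement with probability $1/e$, and any intermediate evaluation occurs only with probability $O(\gamma\log n)$. In the first phase I would apply Theorem~\ref{th:FoptOneMax} to show that from a random start the population hill-climbs the \textsc{OneMax}-part of $\textsc{Cliff}_d$ and becomes trapped on the plateau of points with $n-d$ one-bits (the top of the cliff) in $O(\mu\cdot dup\cdot n\log n)$ evaluations; the RLS-like behaviour guarantees that the cliff is not crossed prematurely, and $\tau=\Theta(n\log n)$ is large enough that the population also ages up to the threshold before the optimum is found with high probability.

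The escape from the local optimum is the core of the argument, and I would structure it as a sequence of independent \emph{attempts}, each consisting of a re-convergence to the plateau followed by an ageing-up to $\tau$, hence costing $\Theta(\tau)$ generations, i.e. $O(\mu\cdot dup\cdot\tau)$ evaluations. The claim is that each attempt succeeds with probability $\Theta(d^2/n^2)$, which yields the factor $n^2/d^2$. This success probability factorises into two $\Theta(d/n)$ events. First, in the ageing generation a bottom-of-cliff offspring (exactly $n-d+1$ one-bits, produced when the first flipped bit is one of the $d$ zero-bits and the complement is not subsequently evaluated, hence with probability $\Theta(d/n)$ per hypermutation) must survive while the age-$\tau$ plateau individuals are removed; with $p_{die}=1-1/((dup+1)\mu)$ the expected number of survivors is one, so that a bottom-of-cliff offspring is the survivor with probability $\Theta(d/n)$. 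Second, starting from the bottom of the cliff the current best must reach the \emph{safe} part of the slope, i.e. at least $n-d+2$ one-bits, before a single-bit flip returns it to the plateau; since at $n-d+1$ one-bits a climbing flip has probability $\Theta(d/n)$ and a relapse to the local optimum probability $\Theta(1)$ and both are accepted, climbing rather than relapsing happens with probability $\Theta(d/n)$. The hypothesis $d\le n/4-\epsilon$ is used here to guarantee that the fitness of the escaping slope solution stays above that of the freshly reinitialised individuals (which concentrate around $n/2$ one-bits), so that it is not discarded at the replacement-selection step before the escape completes.

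For the final phase I would show that, once the current best has at least $n-d+2$ one-bits, the algorithm reaches $1^n$ in $O(n\log n)$ further evaluations. On this region $\textsc{Cliff}_d$ is \textsc{OneMax}-like, every single-bit improvement resets the age to $0$ so that ageing never triggers again (the slowest climbing step costs expected $O(n)=o(\tau)$ generations), and, crucially, a descending single-bit flip on the slope is strictly worsening and therefore rejected for every point with at least $n-d+2$ one-bits, so the algorithm can no longer relapse onto the plateau by single bit flips. The only residual danger during both the escape and the climb is a rare multi-bit evaluation of \hypfcm{} landing back on a high-fitness \textsc{OneMax} point; I would bound this away exactly as in the proof of Theorem~\ref{th:hiddenpath}, using the Ballot theorem (Theorem~\ref{thm:ballot}) to control the probability $(2i/n)^k$ of gaining $k$ surplus zero-bits in a single hypermutation. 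Summing the contributions of the four phases then gives the stated bound $O(\mu\cdot dup\cdot\tau\cdot n^2/d^2+n\log n)$.

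The step I expect to be the main obstacle is the escape analysis of the second phase, and specifically the population-level bookkeeping that must accompany the two clean $\Theta(d/n)$ factors. One has to verify simultaneously that the age-$\tau$ plateau individuals (the parents and their non-improving clones) are removed while a newly created, age-$\tau$ bottom-of-cliff clone survives; that this survivor and the $\mu-1$ reinitialised individuals are resolved by replacement selection in favour of the escaping solution (which is where $d\le n/4-\epsilon$ enters); and that the climb to the safe slope is not spoiled by the $dup$ sibling clones instead producing a higher-fitness relapse that wins selection and re-creates the local optimum, nor by the reinitialised individuals climbing the \textsc{OneMax}-part and overtaking the slope solution before it reaches $1^n$. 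Keeping each of these failure probabilities summably small over the $\Theta(n^2/d^2)$ attempts, with the union bounds made affordable by the restrictions $\mu,dup=O(\log n)$, and combining them with the Ballot-theorem estimates as in Theorem~\ref{th:hiddenpath}, is the delicate part of the proof.
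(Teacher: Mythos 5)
Your phase decomposition and accounting match the paper's proof closely: hill-climbing to the local optimum, ageing-triggered escape attempts costing $O(\mu\cdot dup\cdot\tau)$ evaluations each and succeeding with probability $\Theta(d^2/n^2)$ (a bottom-of-cliff offspring is created and is the sole survivor of the ageing event, then it must climb immediately, which also resets its inherited age), the role of $d\leq n/4-\epsilon$ at replacement selection against the reinitialised individuals, and a final $O(n\log n)$ \textsc{OneMax}-like climb. The genuine gap is in how you protect the climb (and the escape) against relapses. You propose to bound the danger of a multi-bit evaluation landing on a high-fitness pre-cliff point ``exactly as in the proof of Theorem~\ref{th:hiddenpath}'', via the Ballot-theorem bound $(2i/n)^k$ where $i$ is the number of 1-bits of the current solution. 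In \textsc{HiddenPath} that bound is applied to solutions with $i=O(1)$ one-bits, where $2i/n$ is tiny; here the climbing solution has $i\geq n-d+2>3n/4$ one-bits, so $2i/n>3/2$ and the bound exceeds $1$ --- it is vacuous. Worse, the event you want to exclude is not rare at the level of the mutation trajectory: starting from a point with almost all one-bits, the hypermutation's bit-flip sequence descends into the pre-cliff region (which contains points of strictly higher \textsc{Cliff} fitness than the current slope point) with probability $1-o(1)$; this is precisely what the Serfling/Ballot analysis in the proof of Theorem~\ref{th:nfcmneg} establishes, and it is why $\gamma=\Omega(1/\log n)$ provably yields exponential expected runtime. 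No combinatorial argument about the trajectory can therefore rule out relapses; the only protection is that intermediate points are almost never \emph{evaluated}.

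That is exactly the argument the paper uses and the one your proposal must be repaired with: for $\gamma=1/(n\log^2 n)$, the probability that \hypfcm{} evaluates any solution between mutation steps $2$ and $n-1$ in a single operation is at most $\sum_{i=2}^{n-1}p_i<2\gamma\log n=2/(n\log n)$, so over the $O(n\log n)$ generations of one climb no such evaluation ever occurs with probability $\left(1-2/(n\log n)\right)^{O(n\log n)}=\Omega(1)$. Since first-step evaluations are rejected unless they move up the slope (once the current point has at least $n-d+2$ one-bits) and the complementary bit-string is never accepted, each escape attempt then completes the climb with constant probability, which folds into the $O(n^2/d^2)$ attempts without changing the stated bound. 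Notably, you state the needed ingredient (intermediate evaluations occur with probability $O(\gamma\log n)$ per hypermutation) in your preamble, but at the one step where it is indispensable you invoke the Ballot theorem instead; as written, that step fails.
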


\begin{proof}
With $\gamma=1/(n \log^2 n) $, the expected number of fitness function 
evaluations  per iteration, $O(1+\gamma\log{n})$ (see Lemma~\ref{lem:wald}),
would be in the order of $\Theta(1)$. On the first \textsc{OneMax} slope, the 
algorithm improves by the first bit flip with probability at least 
$d/n=\Theta(1)$ and then evaluates this 
solution with probability $1/e=\Theta(1)$. This implies that the local 
optimum
will be found in $O(\mu \cdot dup\cdot n\log{n})$ 
 fitness evaluations in expectation after  initialisation. 

A solution at the local optimum can only improve by finding the unique 
globally optimum
solution, which requires the hypermutation to flip precisely $d$ 0-bits 
in the first $d$
mutation steps which occurs with probability $\binom{n}{d}^{-1}$. We 
pessimistically assume that
this direct jump never happens and assume that once a solution at the local 
optimum is added to 
the population, it reaches age $\tau$ at some iteration $t_0$. We consider the 
following chain of events 
that starts at $t_0$. First a solution with $(n-d+1)$ 1-bits 
would be added to the population with probability $(1/e \cdot d/n)$. Then, 
the locally optimal 
solutions will die due to ageing with probability $(1-\frac{1}{(dup+1)\cdot \mu})^{(dup+1)\cdot \mu-1}>1/e$ 
 while the post-cliff 
solution (i.e., solutions with more than $n-d$ 1-bits) will survive with probability $\frac{1}{(dup+1)\cdot \mu}$. In the next iteration, the 
post-cliff solution will improve the fitness with 
probability $d/n$ and hence resets its age to zero. If 
all of these events occur consecutively (which happens with probability $\Omega(\mu\cdot dup \cdot d^2/n^2)$), the algorithm can start climbing the second \textsc{OneMax} slope 
with local  moves (i.e., by considering only the first mutation steps) which are evaluated 
with constant probability. Then, the \textsc{Cliff} function is optimised in 
$O(n \log{n})$ function evaluations like \textsc{OneMax} unless a pre-cliff 
solution (i.e., a solution with less than $n-d$ 1-bits) replaces the current individual. The rest of our analysis will focus on 
the probability that a pre-cliff solution is sampled and evaluated given that 
the algorithm has a  post-cliff solution with age zero at iteration $t_0 +1$. 

If the current solution is a post-cliff solution, then the final bit-string 
sampled by the hypermutation operator has a worse fitness level than the current 
individual. {\color{black} The probability that \hypfcm{} evaluates at least one solution between 
mutation steps two and $n-1$ (event $\mathcal{E}_{nv}$), is bounded from above by 
$\sum\limits_{n-1}^{i=2} \gamma/i< 2\gamma\cdot \log{n}=2/(n\log{n})$.} We 
consider the $O(n\log{n})$ generations until a post-cliff solution with age zero 
reaches the global optimum. The probability that event $\mathcal{E}_{nv}$ never 
occurs in any iteration until the optimum is found is at least 
$(1-2/(n\log{n}))^{O(n\log{n})}= e^{-O(1)}=\Omega(1)$, a constant probability. 
Thus, every time we create a post-cliff solution with age zero, there is at 
least a constant probability that the global optimum is reached before any 
solution that is not sampled at the first or the last mutation step gets 
evaluated. The first mutation step cannot yield a pre-cliff solution, and the 
last mutation step cannot yield a solution with better fitness value. Thus, with 
a constant 
probability the post-cliff solution finds the optimum. If it fails to do so 
(i.e., a pre-cliff solution takes over as the current solution or a safe solution is not obtained
at iteration $t_0+1$), then in at most $O(n \log {n})$ 
iterations another chance to create a post-cliff solution comes up
and the process is repeated. In expectation, $O(\mu\cdot dup \cdot n^2/d^2)$ number of trials will be 
necessary until the optimum is found and since each trial takes $O(n \log {n})$ 
fitness function evaluations, our claim follows.
\end{proof}

Note that the above result requires the parameter $\gamma$ to be in the order of 
$\Theta(1/(n\log^2{n}))$, while Lemma~\ref{lem:wald} implies that 
any $\gamma=\omega(1/\log{n})$ does not decrease the expected number of 
fitness function evaluations per hypermutation below the asymptotic order of $\Theta(1)$ (i.e., the algorithm does not waste more than a constant number of evaluations in each hypermutation). 
Nevertheless, the smaller $\gamma=1/(n\log^2{n})$ is necessary for the algorithm to escape from the local optima efficiently.
In particular, it allows the algorithm to only evaluate the first and/or the last bit flip until the optimum is found with high enough probability. 
This 
in turn allows the Fast Opt-IA$\gamma$ to climb up the second slope 
before jumping back to the local optima via larger mutations. 
The following theorem rigorously proves that a very small choice for $\gamma$ in this case is necessary (i.e., 
$\gamma=\Omega(1/\log{n})$ leads to exponential {\color{black} expected} runtime). 

\begin{restatable}{theorem}{nfcmneg}\label{th:nfcmneg}
At least $2^{\Omega(n)}$ fitness function evaluations in {\color{black} expectation} 
are executed 
before  the \fastoptia{} with $\gamma=\Omega(1/\log{n})$ finds the 
optimum of $\textsc{Cliff}_d$ for $d=(1-c)n/4$, where $c$ is a constant $0<c<1$, independent of the values of $\mu$, $dup$ and $\tau$.
\end{restatable}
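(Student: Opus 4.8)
The plan is to show that for $\gamma=\Omega(1/\log n)$ every route to the optimum $1^n$ of $\textsc{Cliff}_d$ succeeds with probability $2^{-\Omega(n)}$, so that $2^{\Omega(n)}$ fitness evaluations are needed in expectation. First I would record the landscape for $d=(1-c)n/4$: the point $1^n$ is the unique search point of fitness exceeding $n-d$, the local optimum (all points with exactly $n-d$ ones) has fitness $n-d$, and every post-cliff point with $n-d<|x|_1<n$ has fitness strictly below $n-d$. Hence any run reaching $1^n$ must, at some generation, raise the number of ones of its best individual from the pre-cliff region (at most $n-d$) up the second slope to $n$. I would control this by a negative-drift analysis of the best 1-bit count $X_t$ on a band $[a,b]\subseteq(n-d,n)$ of width $\Theta(n)$, say $a=n-3d/4$ and $b=n-d/4$, so that $b-a=d/2=\Theta(n)$.

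Two estimates drive the drift. For the upward direction I would use the Ballot Theorem (Theorem~\ref{thm:ballot}): since every $j\in[a,b]$ has $j\ge 13n/16>3n/4$ ones, the flipped zero-bits are a strict minority, so the probability that one hypermutation ever reaches count $j+k$ --- and \emph{a fortiori} returns an accepted point with $j+k$ ones --- is at most $(2(n-j)/n)^k\le(1/2)^k$. This bound is insensitive to which steps (\ref{probGamma}) evaluates, and taking $k=d$ it subsumes a direct jump to $1^n$ from the local optimum. Thus, after a union bound over the $\mu\cdot dup$ clones whose per-clone geometric tail is capped at $1$, the expected upward displacement per generation is $O(\log^2(\mu\,dup))$. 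For the downward direction I would use Serfling's bound (Theorem~\ref{thm:serfling}): the running count concentrates around its decreasing hypergeometric mean, so for a post-cliff parent there is a window of $\Theta(n)$ mutation steps, all before step $n/2$, during which the count lies in the acceptance interval $[\,j-d+1,\,n-d\,]$; by (\ref{probGamma}) this window is evaluated with probability $\Omega(\gamma)=\Omega(1/\log n)$, and any point evaluated there is a pre-cliff solution of fitness above the current post-cliff fitness, hence accepted, lowering the count by $\ell=j-(n-d)=\Theta(n)$. Comparing the two gives a one-step drift of at most $O(\log^2(\mu\,dup))-\Omega(\gamma)\cdot\Theta(n)<0$ throughout $[a,b]$. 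Because only the jumps \emph{towards} $b$ need light tails, which is precisely the Ballot estimate, a negative drift theorem then yields a $2^{\Omega(b-a)}=2^{\Omega(n)}$ lower bound on the time to reach $b$, and therefore on the time to reach $1^n$.

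The step I expect to be the main obstacle is making the conclusion genuinely independent of $\mu$, $dup$ and $\tau$. The drift computation stays negative only while $O(\log^2(\mu\,dup))=o(\gamma n)$, i.e.\ for populations that are not too large; the key additional observation is that Lemma~\ref{lem:wald} forces each generation to cost $\Omega(\mu\cdot dup)$ evaluations in expectation, so very large populations make even a single generation expensive, and reconciling this with the drift regime (and, if necessary, sharpening the crude $O(\log^2(\mu\,dup))$ bound on the upward drift) so that \emph{every} population size yields $2^{\Omega(n)}$ evaluations is the crux. Independence from $\tau$ is cleaner: if ageing never triggers the population stays pinned below $a$ and never reaches $1^n$, while if it does the drift argument applies, the point being that a drag-back returns a solution of fitness $n-d$ that, by elitist selection, invades the population and resets $X_t$ regardless of the age structure. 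The remaining care is to verify the Serfling concentration and the evaluation-mass lower bound $\Omega(\gamma)$ uniformly for all $j\in[a,b]$, and to invoke a form of the negative drift theorem that tolerates the large (and harmless) downward drag-back jumps.
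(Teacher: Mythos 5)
Your raw ingredients coincide with the paper's -- the Ballot theorem to give upward moves a geometric tail, Serfling's bound (Theorem~\ref{thm:serfling}) to show that a hypermutation started at a post-cliff parent spends a long window of mutation steps sampling fitter pre-cliff solutions, and the product $\prod_i(1-\gamma/i)$ over that window to lower-bound how often such a solution is evaluated -- but the way you assemble them has a genuine gap at its centre. Your negative drift is claimed for $X_t$, the best number of ones in the population, and its downward component rests on the assertion that an evaluated drag-back is ``accepted, lowering the count by $\Theta(n)$'', defended later by ``a drag-back \ldots by elitist selection, invades the population and resets $X_t$ regardless of the age structure''. Under the truncation selection of the \fastoptia{} this is false whenever $\mu\geq 2$: the fitter pre-cliff offspring is merely \emph{added} to the population, and the post-cliff individual with count $j$ is removed only when at least $\mu$ strictly fitter individuals are present simultaneously (or when ageing happens to kill it); elitism is exactly what keeps it alive. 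Hence, with the estimates you actually establish, $X_t$ has drift $+O(\log(\mu\cdot dup))$ upward and no proven downward component at all, so the drift inequality you need fails already for $\mu=2$ -- not only in the large-$\mu\cdot dup$ regime you single out as the crux. (Your side remarks are sound: only jumps toward $b$ need light tails, since the exponential potential is hurt only by them, and your Serfling window and $\Omega(\gamma)$ evaluation-mass estimates in the middle band are correct.)

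The paper's proof uses the same tools but does not route the conclusion through a potential that must decrease. It works in a band of width $2\sqrt{n}$ at the cliff edge, where the pre-cliff window spans mutation steps from $O(\sqrt{n})$ up to $\Omega(n)$, so the evaluation mass is $\gamma\cdot\Omega(\log n)=\Omega(1)$ rather than your $\Omega(\gamma)$: a fitter pre-cliff solution is returned with \emph{constant} probability per hypermutation, while crossing the band requires $\Omega(\sqrt{n})$ hypermutations because improving the count by $k$ in one operation has probability $O(2^{-k})$. It concludes that w.o.p. no solution with more than $n-d+2\sqrt{n}$ ones is ever created, and then finishes with a per-hypermutation statement: from any parent the population can contain, sampling and evaluating the unique optimum has probability $2^{-\Omega(n)}$. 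That final accounting is per evaluation, which is precisely what delivers independence from $\mu$, $dup$ and $\tau$ -- something a per-generation drift bound cannot give by itself. (To be fair, the paper also phrases the drag-back as ``replacing'' the post-cliff solution, which is literal only in the $\mu=1$/single-lineage view; but its conclusion does not hinge on that replacement, whereas yours does.) To repair your version you would need either to restrict to $\mu=1$, or to prove that post-cliff individuals are genuinely eliminated -- e.g., that drag-backs and the refilled pre-cliff population accumulate $\mu$ fitter individuals before the post-cliff lineage exits your band -- and in either case still to finish with a per-evaluation jump argument to obtain the parameter independence claimed in the statement.
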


\begin{proof}
Consider a current solution with more than $n-d$ (i.e., 
post-cliff) and fewer than $n-d+2\sqrt{n}$ 1-bits. {\color{black}We show that w.o.p., \hypfcm{} will yield a solution with less than 
$n-d$ (i.e., pre-cliff) and more than $n-2d+2\sqrt{n}$ 1-bits before the 
initial individual is mutated into a solution with more than $n-d+2\sqrt{n}$ 
1-bits.} This observation will imply that a pre-cliff solution with better 
fitness will replace the post-cliff solution before the post-cliff solution is 
mutated into a globally optimal solution. We will then show that it is also 
exponentially unlikely that any pre-cliff solution mutates into a 
solution with more than $n-d+\sqrt{n}$ 1-bits to complete our proof.

{\color{black}We first provide a lower bound on the probability that \hypfcm{} with 
post-cliff input solution $x$ yields a pre-cliff solution with higher fitness 
value than $x$.} We start by determining the earliest mutation step $r_{min}$, where a 
pre-cliff solution with worse fitness than $x$ can be sampled. For any 
post-cliff solution 
$x$, $\textsc{Cliff}_d(x)=\textsc{OneMax}(x)-d+(1/2)$, and any pre-cliff 
solution $y$ with $\textsc{OneMax}(x)-d+1$ 1-bits, it holds that 
$\textsc{Cliff}_d(y)>\textsc{Cliff}_d(x)$. {\color{black}We obtain the rough bound of $r_{min}\geq d- 2\sqrt{n}$ 
by considering the worst-case event  that \hypfcm{} picks $d$ 1-bits to flip consecutively}. Let $\ell(x)$ denote the number of extra 1-bits a post-cliff solution has in 
comparison to a locally optimal solution (i.e, 
$\textsc{OneMax}(x)=n-d+\ell(x)$).
{\color{black}Now, we use Serfling's bound (Theorem~\ref{thm:serfling}) to show that with a constant probability 
\hypfcm{} will find a pre-cliff solution before $3 \ell(x)$ mutation steps and it 
will keep sampling pre-cliff solutions until $r_{min}$. 

 For the input bit-string of \hypfcm}, $x$, let the multiset of weights 
$C:=\{c_i | i\in [n]\}$ be defined as $c_i:=(-1)^{x_i}$ (i.e., $c_i=-1$ when 
$x_i=1$, and  $c_i=1$ when $x_i=0$). Thus, for a permutation $\pi$ of bit-flips over 
$[n]$, the number of 1-bits after the $k$th mutation step is 
$\textsc{OneMax}(x) + \sum_{j=1}^{k} c_{\pi_{j}}$  since flipping the 
position $i$ implies that the number of 1-bits changes by $c_i$. 

Let $\bar{\mu}:= (1/n)\sum_{j=1}^{n}c_i$ be the population mean of 
$C$ and $\bar{X}:=(1/3\ell 
(x))\sum_{j=1}^{3 \ell (x)} c_{\pi_{j}}$ the sample mean. Since the 
$\textsc{Cliff}$ parameter $d$ is less than $n/4$,   $$\bar{\mu}\leq (1/n) 
\left(\left(-3n/4\right)+ 
\left(n/4\right)\right)=-1/2.$$
In order to have a solution with at least $n-d+1$ 1-bits at mutation step $3 
\ell (x)$, the following must hold:
\begin{align*}
   &3\ell (x) \bar{X} \geq -\ell(x) \iff  \bar{X} \geq 
-\frac{1}{3}\\  &\implies  \bar{X} - \bar{\mu} \geq 
-\frac{1}{3}+\frac{1}{2} =\frac{1}{6} \iff \sqrt{3 \ell(x)}\left(\bar{X} - 
\bar{\mu}\right)  \\& \geq \frac{\sqrt{3 \ell(x)}}{6}.
\end{align*}

%

The probability that a pre-cliff solution will not be found in mutation step $3\ell(x)$ 
 follows from Theorem \ref{thm:serfling}, with sample mean 
$\bar{X}$, population mean $\bar{\mu}$, sample size $3\ell(x)$, population 
size $n$, $c_{min}=-1$ and $c_{max}=1$.

\begin{align*}
&Pr\left\{\sqrt{3 \ell(x)}\left(\bar{X} - 
\bar{\mu}\right) \geq  \frac{\sqrt{3 \ell(x)}}{6}\right\} \\& \leq \text{exp}\left(- \frac{2 \left(\frac{\sqrt{3 \ell(x)}}{6}\right)^2}{\left(1-\left( 
\frac{3 \ell(x) -1}{n}    
\right)\right)(1-(-1))^2}\right)
\leq e^{-\Omega\left(\ell(x)\right)}.
\end{align*}

Thus, with probability $(1-e^{-\Omega\left(\ell(x)\right)})$, we will sample 
the first pre-cliff solution after $3\ell(x)$ mutation steps. We focus our 
attention on post-cliff solutions with $1\leq \ell(x) \leq 2\sqrt{n}$ and can 
conclude that for such solutions the above probability is in the order of 
$\Omega(1)$.
Since the number of 0-bits changes by one in every mutation step, the event of 
finding a solution with at most $n-d$ bits implies that at some point a solution 
with exactly $n-d$ 1-bits has been sampled. Let $k_0\leq 3 \ell(x)$ be the 
mutation step where  a locally optimum solution is found for the first time. Due 
to the Ballot theorem the probability that a solution with more than $n-d$ 1-bits is sampled
after $k_0$ is at most $2d/n \leq 1/2$. {\color{black} So, with probability at least $1/2$, 
\hypfcm{} will keep sampling pre-cliff solutions until $r_{min}\leq 
d-2\sqrt{n}=\Omega(n)$.}
We will now consider the probability that at least one of the solutions sampled 
between $k_0$ and $r_{min}$ is evaluated. Since the evaluation decisions are 
taken independently from each other, the probability that none 
of the solutions are evaluated is 
\begin{align*}
&\prod\limits_{i=k_{0}}^{r_{min}}\left( 1 - \frac{\gamma }{
i}\right) \leq  \prod\limits_{i=3\ell(x)}^{r_{min}}\left( 1 - \frac{\gamma }{i}\right) \leq \prod\limits_{i=6 \sqrt{n}}^{r_{min}}\left( 1 - \frac{\gamma}{i}\right)   \\& \leq \prod\limits_{i=6 \sqrt{n}}^{r_{min}}\left( 1 - 
\frac{1}{(c_1 \log{n})i }\right),
\end{align*}
for some constant $c_1$ since $\gamma = \Omega(1/\log{n})$. We will 
separate this product into $\lfloor \log{(r_{min}/6\sqrt{n})} \rfloor$ smaller 
products and show that each smaller product can be bounded from above by 
$e^{-1/(2c_1\log)}$. The first subset contains the factors with 
indices $i \in \{(r_{min}/2)+1,\ldots, r_{min} \}$, the second set  $i \in 
\{(r_{min}/4)+1,\ldots, r_{min}/2 \}$ and $j$th set (for any $j \in [\lfloor 
\log{(r_{min}/6\sqrt{n})} \rfloor]$)  $ i \in \{ r_{min} 2^{-j}+1,\ldots, 
r_{min} 2^{-j+1}\}$. If some indices are not covered by these sets due to the 
floor operator,  we will ignore them since they can only make the final product 
smaller. Note that we assume any logarithm's base is two 
unless it is specified otherwise. 
\begin{align*}
&\prod\limits_{j=1}^{\lfloor \log{(r_{min}/6\sqrt{n})} 
\rfloor}\prod\limits_{i=r_{min} 2^{-j}+1 }^{r_{min}2^{-j+1}}\left( 1 - 
\frac{1}{(c_1 \log{n})i }\right)  \\&\leq \prod\limits_{j=1}^{\lfloor 
\log{(r_{min}/6\sqrt{n})} \rfloor}\left( 1 - \frac{2^{j-1}}{(c_1 
\log{n})r_{min} }\right)^{r_{min} 2^{-j}}  \\&\leq 
\prod\limits_{j=1}^{\lfloor \log{(r_{min}/6\sqrt{n})} 
\rfloor}e^{-1/(2c_1\log{n})}\\ &\leq e^{-\lfloor \log{(r_{min}/6\sqrt{n})} 
\rfloor/(2c_1\log{n})} = e^{-\Omega(1)},
\end{align*}
where in the second line we made use of the inequality $(1-c n^{-1})^{n}\leq 
e^{-c}$ and in the final line our previous observation that $r_{min}=\Omega(n)$.
This implies that at least one of the sampled pre-cliff individuals will be 
evaluated at least with constant probability. At this point we have established 
that a pre-cliff solution will be added to the population with constant 
probability if the initial post-cliff solution $x$ has a distance between 
$\sqrt{n}$ and $2\sqrt{n}$ to the local optima. 
 
{\color{black}Due to the operator stopping at the first constructive mutation the number of $1$s in a pre-cliff solution cannot be improved by more than one in a single hypermutation operation. 
This implies that it is 
exponentially unlikely that a pre-cliff solution is mutated into a solution with 
more than $n-d+\sqrt{n}$ 1-bits. Moreover, the fitness value of any post-cliff solution cannot be improved by  more than one either. Thus, it takes at least $\Omega(\sqrt{n})$ hypermutations until a post-cliff solution has more than $n-d+2\sqrt{n}$ $1$-bits. Since we established that a pre-cliff solution is evaluated with constant
probability at each iteration, we can conclude that at least one such individual is sampled
in $\Omega(\sqrt{n})$ iterations with overwhelmingly high probability. }
Since the \fastoptia{} cannot follow the post-cliff gradient to the optimum with overwhelmingly high
probability, it relies on making the jump from local optima to global optima. Given an initial solution with $y\in \{(n/3),\ldots, n-d+2\sqrt{n}\}$ 1-bits, the probability of jumping
to the unique global optimum is $2^{\Omega(-n)}$ as well, thus our claim follows.
\end{proof}

While the low parameter value allows the algorithm to escape from local optima as proven in Theorem~\ref{thm:cliff}, with such $\gamma$-values the hypermutation is in essence switched off, i.e., with high probability the algorithm only evaluates the first bit flip and the last one, with the latter being unlikely to be useful very often. We will address the problem again in Section~\ref{sec:optimalhyp}, when discussing the best possible fitness evaluation distribution for the {\it fast} HMP operator for general purpose optimisation.

\section{Fast Hypermutations for Combinatorial Optimisation}
In the previous sections we used standard benchmark functions from the literature to show the speed-ups 
that can be achieved in the exploitation phases with the {\it fast}  HMP operator while still maintaining excellent exploration capabilities at escaping local optima.
In this section we will validate the gained insights using classical problems from combinatorial optimisation for which the performance of the traditional EAs and AISs is known in the literature. 

In the following section we analyse the performance of the \fastianp for the NP-Hard \partition problem. 
Static HMP operators allow AISs to efficiently find arbitrarily good constant approximations for the problem~\cite{CorusOlivetoYazdaniAIJ2019}. This is achieved by escaping local optima of low quality by flipping approximately half of the bits.
Given that the parabolic distribution of the {\it fast} HMP operator decreases the probability of evaluating solutions as the $n/2$th bit flip is approached, it wouldn't be surprising if the \fastianp was to struggle on this problem. Nevertheless, we will present the remarkable result that a  linear factor smaller upper bound on the expected runtime can be achieved by the algorithm compared to the static HMP even in this apparently unfavourable scenario. 
This result shows that the insights gained from the analysis of \textsc{HiddenPath}, that overall speed-ups may be achieved for multimodal problems by emphasising the exploitation strength may also appear in classical NP-Hard problems with numerous real-world applications.

In Section~\ref{sec:vc}, we turn to the \textsc{Vertex Cover} problem. We will rigorously prove linear speed-ups of the \fastianp to identify feasible solutions to the problem compared to static HMP using a node-based representation, and for identifying 2-approximations for any instance of the problem using an edge-based representation. Thus, the analysis confirms the greater exploitative capabilities of the Fast HMP operators.

At the end of each subsection we will also argue how the results also hold for the population based \fastoptia using ageing.


 \subsection{\partition}\label{sec:part}

\partition, or \textsc{Number Partitioning}, is a simple makespan scheduling problem where the goal is to schedule $n$ different jobs with processing times $p_1 \geq p_2 \geq 
\cdots \geq p_n$ on two identical machines in a way that the load of the fullest machine is minimized. 
It is considered to be one of the six basic NP-complete problems 
\cite{GareyJohnson1979} which arises in many real world applications such as allocation tasks in industry and in information 
processing systems~\cite{Pinedo2016,Hayes2002}.
It is known that the \oneoneea~ and RLS get stuck on approximately 4/3 approximation ratios on worst-case instances of the problem.
However, they can find a
$(1+\epsilon)$ approximation for any $\epsilon=\Theta(1)$ if an appropriate restart strategy that depends on the chosen $\epsilon$ is put in place~\cite{Witt2005}. 
On the other hand the (1+1)~IA, by using static HMP can escape the local optima where EAs and RLS get stuck, thus solving the worst-case instance in expected $O(n^2)$ time. As a result it finds arbitrarily good approximations with an expected runtime that is only exponential in $\epsilon$, i.e., it can efficiently identify arbitrarily small constant $(1+\epsilon)$ approximations in every run in expected time $O(n^3)$~\cite{CorusOlivetoYazdaniAIJ2019}.  
In the following two subsections we use the same proof techniques used in~\cite{CorusOlivetoYazdaniAIJ2019} to prove that  the \fastianp optimises the worst-case instance for EAs in expected time $O(n \log n)$ and identifies a  $(1+\epsilon)$ approximation in expected time $O(n^2)$, thus providing upper bounds that are respectively a quasi-linear and linear factors smaller than those derived for the traditional static HMP operator.

{\color{black}
First we adapt a result from~\cite{CorusOlivetoYazdaniAIJ2019} regarding the expected number of generations required by the  \fastianp to identify a local optimum from a non-locally optimal solution.
In the rest of this section we use the term {\it local optimum} to refer to solutions with a makespan that cannot be improved 
by moving one single job from one machine to the other. Moreover, let $\ell_1,\ell_2,\cdots\ell_L$ denote the local optima of a 
\partition instance where $L$ is the total number of local optima and for any $i \in L, f(\ell_i) \geq 
f(\ell_{i+1})$.
\begin{lemma}[Adapted from Lemma 2 in \cite{CorusOlivetoYazdaniAIJ2019}]\label{lem:local}
 Let $x\in\{0,1\}^n$ be a non-locally optimal solution to a partition instance 
such that $f(\ell_i)>f(x)\geq f(\ell_{i+1})$ for some $i\in[L-1]$. 
Then, the \fastianp with current solution $x$ samples a solution $y$ such that $f(y)\leq f(\ell_{i+1})$ 
in  at most $2en^2$  expected generations.
\end{lemma}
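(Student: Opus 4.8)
The plan is to adapt the proof of Lemma~2 in~\cite{CorusOlivetoYazdaniAIJ2019}, which establishes the analogous statement for the $(1+1)$~IA with the traditional static HMP operator, and to isolate the single place where the \emph{fast} evaluation scheme changes the bound.

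First I would reduce the claim to the expected time needed to reach an arbitrary local optimum. Since \fastianp{} is elitist (it only accepts $y$ when $f(y)\le f(x)$), the makespan never increases, so the current solution traces a non-increasing fitness sequence. A single-job move corresponds to a single bit-flip, and because $x$ is not locally optimal there is, at every non-locally-optimal solution visited, at least one job whose move strictly decreases the makespan. Hence any local optimum $\ell_j$ first reached from $x$ satisfies $f(\ell_j)\le f(x)<f(\ell_i)$, and by the ordering $f(\ell_1)\ge\cdots\ge f(\ell_L)$ the local optima of fitness below $f(\ell_i)$ are exactly $\ell_{i+1},\dots,\ell_L$; thus $f(\ell_j)\le f(\ell_{i+1})$. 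It therefore suffices to bound the expected number of generations until a local optimum, or any solution of fitness at most $f(\ell_{i+1})$, is sampled.

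Second, I would lower-bound the per-generation probability of a strictly improving single-job move. At any non-locally-optimal current solution there is a movable job; \hypfcm{} selects that job's bit as its first flip with probability $1/n$ and evaluates after the first flip with probability $p_1=1/e$. Since this first flip is constructive, the operator stops at the first constructive mutation and returns exactly that single-bit improvement, which elitist selection accepts. Hence an improving move occurs with probability at least $\frac{1}{en}$ per generation, giving an expected waiting time of at most $en$. This is the \emph{only} quantity that differs from the static-HMP analysis, where the first flip is always evaluated and the same event has probability $1/n$; the extra factor $e$ is precisely the gap between the target $2en^2$ here and the $2n^2$ of the original lemma.

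Third, I would import the combinatorial core of Lemma~2 in~\cite{CorusOlivetoYazdaniAIJ2019}, namely its $O(n)$ bound on the number of strictly improving single-job moves needed to descend from $x$ to a local optimum; combined with the $en$ expected waiting time per improving move this yields the claimed $2en^2$ expected generations. The point requiring care, and which I expect to be the main obstacle, is to confirm that the larger multi-bit outcomes produced by \hypfcm{} when its first flip is not evaluated, together with any equal-fitness (plateau) moves that elitism may accept, cannot derail the descent: by elitism none of them increases the makespan, and at every visited solution either no single-job move improves (so we already sit at a local optimum of fitness at most $f(\ell_{i+1})$ and are done) or such a move exists and is taken with probability at least $\frac{1}{en}$. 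This is exactly the situation already handled for the static multi-bit operator, so the adaptation goes through under the single substitution of $\frac{1}{en}$ for $\frac{1}{n}$.
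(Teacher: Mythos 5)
Your proposal is correct and takes essentially the same route as the paper, whose own proof simply defers to Lemma~2 of the cited work and observes that the only change is that \hypfcm{} evaluates the first bit-flip with probability $1/e$ instead of $1$, giving the extra multiplicative factor of $e$ (hence $2en^2$ in place of $2n^2$). Your additional checks (the reduction via elitism to reaching any local optimum, and the harmlessness of accepted multi-bit or equal-fitness moves) merely flesh out details the paper leaves implicit in its one-line adaptation.
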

%
The proof 
is essentially identical 
to that of \cite{CorusOlivetoYazdaniAIJ2019}. The main difference is that the 
\fastianp~evaluates a solution after the first bit with probability $1/e$, rather than with probability $1$ as in static HMP. Thus, the 
resulting additional multiplicative factor of $e$ in the expected runtime.
}

\subsubsection{EA's Worst-Case Instance Class}
\begin{figure}[t!]
\centering
  \includegraphics[width=.4\textwidth]{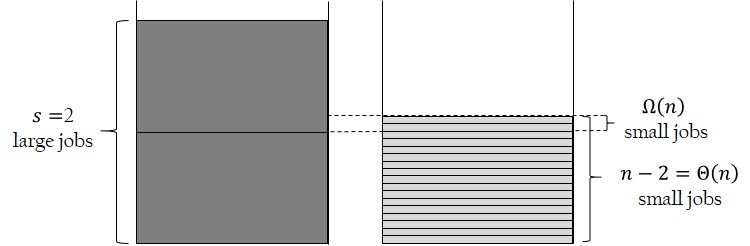}
 \caption{Worst-case approximation \partition instance, $W_{\epsilon}$, for EAs~\cite{Witt2005}.}
 \label{fig:witt}
 \end{figure}
The worst-case instance $W_{\epsilon}$  for the \oneoneea~  is depicted in Figure~\ref{fig:witt}. It consists of two large jobs $p_1$ 
and $p_2$ each with processing times $(1/3-\epsilon/4)$, and $n-2$ small jobs, $p_3, p_4, \dots, 
p_n$, each with a processing time of $(1/3+\epsilon/2)/(n-2)$. The total processing 
time is normalised between 0 and 1, and the global optima, consisting of one large job and half of the small jobs on each machine, have a makespan of $1/2$.  It has been shown that with constant probability the \oneoneea~ and RLS take 
$n^{\Omega(n)}$ fitness function evaluations to find a solution better than $(4/3-\epsilon)$ 
approximation for $W_\epsilon$ \cite{Witt2005}.

The \fastia using static HMP has been proved to be able to efficiently optimise the instance in  $O(n^2)$ expected runtime\cite{CorusOlivetoYazdaniAIJ2019}.  
The following theorem shows how the \fastianp can optimise it in $O(n \log n)$ expected function evaluations if it uses any parameter value $\gamma = \Omega(1/\log n)$. The speed-up is simply due to the fewer function evaluations wasted in the exploitation phases (i.e., it hillclimbs up to the local optima in $O(n \log n)$ expected evaluations rather than $O(n^2)$. While it is a logarithmic factor slower at escaping from the local optima, this burden does not increase the overall asymptotic order.

\begin{theorem}
The \fastianp{} optimises $W_{\epsilon}$ 
in $O(\frac{n}{\gamma} +n\log n)$ expected fitness function evaluations.
\end{theorem}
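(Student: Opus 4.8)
The plan is to separate the run into an \emph{exploitation} phase that hill-climbs to a local optimum and an \emph{exploration} phase that escapes it, and then to convert generation counts into evaluation counts through Lemma~\ref{lem:wald}. The first thing I would establish is the structure of $W_\epsilon$: apart from the optima of makespan $1/2$, the only solution from which no single job can be relocated to reduce the makespan is the one placing both large jobs on one machine and all small jobs on the other, with makespan $2/3-\epsilon/2$; every other non-optimal configuration admits an improving single-job move. This reduces the problem to bounding (i) the time to reach this bad local optimum and (ii) the time to escape it.

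For (i) I would argue that improving moves are single-job relocations, which \hypfcm{} realises already at its \emph{first} bit flip and evaluates with probability $1/e$; hence the phase hill-climbs essentially like \onemax and reaches the local optimum in $O(n\log n)$ generations, the logarithmic factor coming from the last few small jobs that must be moved off the machine carrying the two large jobs (when only $O(1)$ of the $\Theta(n)$ small jobs remain misplaced). Note that the complementary balancing \emph{after} a successful escape is even faster, $O(n)$ generations, because with the large jobs split there are always $\Theta(n)$ improving relocations available.

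The core of the argument is the escape probability, and this is where I expect the main difficulty. I would show that a single \hypfcm{} operation started at the bad local optimum produces \emph{and evaluates} a solution of makespan below $2/3-\epsilon/2$ with probability $\Theta(\gamma/n)$. The event to analyse is that exactly one of the two large jobs is flipped early while the second stays unflipped for a long time (probability $\Theta(1/n)$), after which the random small-job flips move enough mass back across the machines. Using Serfling's bound (Theorem~\ref{thm:serfling}) on the number of relocated small jobs, the makespan stays below $2/3-\epsilon/2$ throughout a window of $\Theta(n)$ consecutive mutation steps, roughly from the $\Theta(\epsilon n)$-th to the $\Theta(n)$-th relocation; over this window the probability of at least one evaluation is $\sum_{r=\Theta(\epsilon n)}^{\Theta(n)}\gamma/r=\Theta(\gamma)$. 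Multiplying the two factors gives the claimed $\Theta(\gamma/n)$, so in expectation $O(n/\gamma)$ hypermutations suffice to escape. Verifying that every sampled solution in the window really lies below the threshold (controlling the excursions of the underlying random walk, for which the Ballot theorem, Theorem~\ref{thm:ballot}, is the natural tool) is the step I expect to be most delicate.

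Finally I would assemble the pieces. The number of generations is dominated by the $O(n/\gamma)$ wait to escape, with the $O(n\log n)$ hill-climbing either completing within this budget (for small $\gamma$) or being pre-empted by an escape that occurs already during the descent (for large $\gamma$, where the $\Theta(\gamma/n)$ probability makes escape likely after only $O(n)$ generations). Since by Lemma~\ref{lem:wald} each hypermutation costs $O(1+\gamma\log n)$ evaluations, the total is $O(n/\gamma)\cdot O(1+\gamma\log n)=O(n/\gamma+n\log n)$, the post-escape balancing of $O(n)$ generations being subsumed and the elitism of the \fastianp{} forbidding any return to the makespan-$(2/3-\epsilon/2)$ optimum. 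For $\gamma=\Omega(1/\log n)$ this is the promised $O(n\log n)$.
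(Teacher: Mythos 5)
Your decomposition (hill-climb all the way down to the unique non-optimal local optimum, then escape from it) does not deliver the stated bound in the most important parameter regime, and the single sentence you use to patch this is exactly the missing argument. Concretely: your descent phase takes $\Theta(n\log n)$ generations (coupon collecting the last misplaced small jobs), and by Lemma~\ref{lem:wald} each generation costs $\Theta(1+\gamma\log n)$ evaluations, so the descent alone costs $\Theta\left(n\log n\,(1+\gamma\log n)\right)$ evaluations. For $\gamma=\omega(1/\log n)$ --- e.g.\ $\gamma=\Theta(1)$, where the theorem promises $O(n/\gamma+n\log n)=O(n\log n)$ --- this is $\omega(n\log n)$; your assembled bound contains an extra $\gamma n\log^2 n$ term and is strictly weaker than the claim. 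Your remark that for large $\gamma$ the escape ``pre-empts the descent'' is precisely what must be proven: one needs that from the \emph{intermediate} descent configurations (both large jobs together, only $o(n)$ small jobs left with them) a single hypermutation samples the exact global optimum with probability $\Omega(1)$ and evaluates it with probability at least $2\gamma/n$. This is how the paper argues; it never completes the descent at all. It uses the dichotomy (Property 1.3 of~\cite{CorusOlivetoYazdaniAIJ2019}) that every non-optimal solution either has $\Omega(n)$ improving single-job moves --- so the improvement probability per generation is $\Omega(1)$, and since there are only $O(n)$ distinct makespan values (Property 1.1) only $O(n)$ such generations occur --- or has its small jobs unbalanced by $\Omega(n)$, in which case the hypermutation necessarily passes through the even split of small jobs, at which moment the two large jobs are on separate machines with constant probability, i.e.\ the sampled point \emph{is} the global optimum, evaluated with probability at least $2\gamma/n$. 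This gives $O(n+n/\gamma)$ generations overall, hence $O\left((n+n/\gamma)(1+\gamma\log n)\right)=O(n/\gamma+n\log n)$ evaluations; the coupon-collector $\log n$ never enters the generation count.

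A secondary inaccuracy: your escape probability $\Theta(\gamma/n)$ rests on demanding that a large job be flipped within the first $O(1)$ mutation steps (your $\Theta(1/n)$ factor). In fact it suffices that one large job is flipped at any step before your improvement window opens at $\Theta(\epsilon n)$ and the other after it closes, an event of probability $\Theta(1)$ for constant $\epsilon$; combined with your own $\Theta(\gamma)$ estimate for an evaluation inside the window, the escape probability from the local optimum is $\Theta(\gamma)$, not $\Theta(\gamma/n)$. Under-estimating an escape probability is harmless for an upper bound, but it also undercuts your closing claim that the $O(n/\gamma)$ escape wait ``dominates'': in your decomposition the true bottleneck is the descent, which is exactly where the gap above sits. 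As written, your argument establishes the theorem only for $\gamma=O(1/\log n)$.
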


\begin{proof}
We follow the proof of Theorem 4 in~\cite{CorusOlivetoYazdaniAIJ2019}. 
For any non-optimal solution, either there is a linear number of small jobs on the fuller machine that can 
be moved to the emptier one, or the number of small jobs on each machine differ from the optimal configuration (i.e., half and half) by a linear factor (Property 1.3 in ~\cite{CorusOlivetoYazdaniAIJ2019}).
%
%
%
For the first case, the probability that the FCM$_\gamma$ operator flips a small job as the first bit-flip and evaluates the improvement is  
is $\frac{\Omega(n)}{n} \cdot \frac{1}{e}$. If this happens, then the hypermutation stops.
Since 
there are at most $O(n)$ different makespan values for the  instance class (Property 1.1 in~\cite{CorusOlivetoYazdaniAIJ2019}), the total expected number of improving hypermutation operations in these cases
is at most $O(n)$.  
For the second case, the proof in~\cite{CorusOlivetoYazdaniAIJ2019}  uses that at some point during the hypermutation the small jobs are split evenly between the two machines. When this happens, there is at least a constant probability $\Omega(1)$ that the two large jobs are on separate machines.
The probability that the \fastianp will evaluate such an optimal solution when sampled is
at least $2\gamma/n$. Hence, an optimal  solution is sampled at most $O(n/\gamma)$ times in 
expectation before it is evaluated. 

 Overall, the total number of generations before a global optimum is identified is $O(n + n/\gamma)$.
Taking into account the fitness function evaluations wasted in each generation,
i.e., at most $O(1+\gamma\log n)$ according to Lemma~\ref{lem:wald} , we get 
an upper bound of $O(1+\gamma\log n) (n + n/\gamma)  =O(\frac{n}{\gamma}+n \log n)$.
\end{proof}

\subsubsection{Worst-Case Approximation Ratio}
We now prove the main result of this subsection.
\begin{theorem}\label{thm:approxais}
The \fastianp finds a $(1+\epsilon)$ approximation for any instance of \partition in $\left[2en^2\cdot(2^{2/\epsilon}+1) + (\epsilon^{(2/\epsilon)+1})^{-1}(1-\epsilon)^{-2}e^32^{2/\epsilon}\frac{n}{2\gamma}\right]\cdot(1+\gamma \log n)$ expected fitness function evaluations for any $\epsilon=\omega(1/\sqrt{n})$. 
\end{theorem}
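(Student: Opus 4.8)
The plan is to follow the proof technique of the corresponding approximation result for static HMP in~\cite{CorusOlivetoYazdaniAIJ2019}, replacing the deterministic per-bit evaluations by the parabolic sampling of \hypfcm{} and tracking exactly where the evaluation probabilities $p_i$ enter. I would decompose the expected number of hypermutation operations $E[T]$ into two contributions: the \emph{hillclimbing} phases, in which the current solution descends to successively better local optima by moving single jobs, governed by Lemma~\ref{lem:local}; and the \emph{escape} phases, in which a single large hypermutation rearranges the large jobs while simultaneously rebalancing the small ones, governed by the evaluation probability $p_{n/2}=2\gamma/n$ at the critical mutation step. Finally, I would pass from generations to fitness evaluations by multiplying $E[T]$ with the per-operation evaluation cost $O(1+\gamma\log n)$ of Lemma~\ref{lem:wald}, which produces the outer factor $(1+\gamma\log n)$.

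For the hillclimbing contribution I would first recall Witt's structural description of \partition~\cite{Witt2005}: there are at most $2/\epsilon$ \emph{large} jobs (those exceeding an $\epsilon$-fraction of the optimal makespan), and every solution that is not a $(1+\epsilon)$-approximation either admits an improving single-job move or is a local optimum determined by the placement of these large jobs; since the large jobs admit at most $2^{2/\epsilon}$ distinct placements, the makespan passes through at most $2^{2/\epsilon}+1$ distinct local-optimum levels. Each descent to the next level begins from a non-locally-optimal solution, so Lemma~\ref{lem:local} bounds it by $2en^2$ expected generations (the extra factor $e$ relative to static HMP arising precisely from the $1/e$ probability of evaluating the first bit flip of \hypfcm{}). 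Summing over the at most $2^{2/\epsilon}+1$ descents yields the first term $2en^2\cdot(2^{2/\epsilon}+1)$.

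The escape analysis is the main obstacle. For each of the at most $2^{2/\epsilon}$ local optima worse than the $(1+\epsilon)$-approximation I would lower-bound the probability that one application of \hypfcm{} escapes it. The escape event requires that, during the sequence of single-bit flips, the large jobs be rearranged into an improving configuration \emph{and} that at some mutation step near $n/2$ the small jobs be almost evenly split between the two machines. As in~\cite{CorusOlivetoYazdaniAIJ2019}, such a balanced split is necessarily passed through during the hypermutation (a Ballot-theorem argument, Theorem~\ref{thm:ballot}) and the number of correctly placed small jobs is concentrated by Serfling's inequality, Theorem~\ref{thm:serfling} — this is exactly where the hypothesis $\epsilon=\omega(1/\sqrt n)$ is used, so that the $\Theta(\sqrt n)$ fluctuation in the small-job count is negligible compared with the $\Theta(\epsilon n)$ slack permitted by the target approximation. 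The combinatorial probability of hitting such a configuration is at least $\epsilon^{2/\epsilon+1}(1-\epsilon)^2 e^{-3}$, and the crucial new ingredient is that this balanced configuration occurs around the $n/2$-th mutation step, where \hypfcm{} evaluates with probability only $p_{n/2}=2\gamma/n$. Hence each escape succeeds with probability at least $\frac{2\gamma}{n}\,\epsilon^{2/\epsilon+1}(1-\epsilon)^2 e^{-3}$, giving at most $\frac{n}{2\gamma}\,\epsilon^{-(2/\epsilon+1)}(1-\epsilon)^{-2}e^3$ expected generations per escape.

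To finish, I would sum the escape cost over the at most $2^{2/\epsilon}$ escapes, obtaining $\epsilon^{-(2/\epsilon+1)}(1-\epsilon)^{-2}e^3\,2^{2/\epsilon}\,\frac{n}{2\gamma}$, which matches the second term; adding the hillclimbing generations bounds $E[T]$ by the bracketed expression, and multiplying by the $O(1+\gamma\log n)$ evaluations-per-generation factor of Lemma~\ref{lem:wald} yields the claimed bound. I expect the delicate point to be the joint analysis of the escape event, namely simultaneously controlling the placement of the $O(1/\epsilon)$ large jobs and the balancedness of the small jobs at precisely the mutation step that the parabolic distribution is \emph{least} likely to evaluate, and verifying that the penalised $2\gamma/n$ evaluation probability there is nonetheless large enough (for $\gamma=\Omega(1/\log n)$) that the hillclimbing savings still dominate, leaving the overall bound a linear factor below that of static HMP. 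Finally, I would note that the argument transfers to the population-based \fastoptia{} by the usual multiplication with the population and clone sizes, provided $\tau$ is chosen large enough that ageing does not trigger before a $(1+\epsilon)$-approximation is identified.
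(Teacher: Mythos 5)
Your proposal is correct and follows essentially the same route as the paper's proof: both use Lemma~\ref{lem:local} with the extra factor $e$ to charge $2en^2$ generations to each of the at most $2^{2/\epsilon}+1$ hillclimbing descents, both combine the sampling probability $\epsilon^{(2/\epsilon)+1}(1-\epsilon)^2e^{-3}$ inherited from the static-HMP analysis with the evaluation probability $2\gamma/n$ near the $n/2$-th mutation step for each of the at most $2^{2/\epsilon}$ escapes, and both convert generations to evaluations via the $O(1+\gamma\log n)$ factor of Lemma~\ref{lem:wald}. The only difference is that you spell out the Ballot/Serfling machinery and the role of $\epsilon=\omega(1/\sqrt{n})$, which the paper simply imports by citation from the earlier static-HMP analysis.
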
 

\begin{proof}
The proof follows that of Theorem 6 in \cite{CorusOlivetoYazdaniAIJ2019}. We denote the 
current solution with $X_t$ and assume the algorithm stops as soon as it finds a $(1+\epsilon)$ 
approximation. 

By Lemma \ref{lem:local} we know that any non-locally optimal search point finds a  
makespan that is at least as good as the next local optimum in at most $2en^2$ generations in 
expectation by just improving in the first bit flip and evaluating the solution with probability $1/e$. As the number of local optima  with differing fitness which are not $(1+\epsilon)$ approximations is at most 
$2^{2/\epsilon}$ 
(Property 2.4 in~\cite{CorusOlivetoYazdaniAIJ2019}), 
the expected number of generations where $X_t \notin \mathcal{L}$ is at most $2en^2\cdot(2^{2/\epsilon}+1)$. 

For $\sum_{i=s+1}^n p_i \geq \frac{1}{2} \sum_{i=1}^n p_i$, any local optimum is a $(1+\epsilon)$ approximation (Property 2.3 in~\cite{CorusOlivetoYazdaniAIJ2019}). 
Therefore,  
we calculate  the expected number of generations spent with $X_t \in \mathcal{L}$ before a 
$(1+\epsilon)$ approximation is identified, assuming that $\sum_{i=s+1}^n p_i < \frac{1}{2} \sum_{i=1}^n 
p_i$. 

According to the proof of Theorem 6 in~\cite{CorusOlivetoYazdaniAIJ2019}, if the above assumption on the weights of the large jobs is in place, then the probability that in one hypermutation a $(1+\epsilon)$ approximation is sampled from any search point $X_t \in \mathcal{L}$  is 
at least 
$\frac{(\epsilon-\epsilon^2)^{2/\epsilon}}{e} \cdot \epsilon \geq \frac{\epsilon^{(2/\epsilon)+1} (1- \epsilon)^2}{e^3}$ unless an improvement is found before.
If sampled, it is evaluated by the \fastianp with probability at least 
$\frac{\gamma}{n/2}$.

Recall that there are at most $2^{2/\epsilon}$ distinct makespan values 
amongst local optima that are not $(1+\epsilon)$  approximations. 
Hence, the expected number of iterations spent on local optima is at most $(\epsilon^{(2/\epsilon)+1})^{-1} (1- \epsilon)^{-2}e^3 2^{2/\epsilon} \frac{n}{2\gamma}$.

%
%

By taking into account the expected number of wasted evaluations is each generation, the total expected runtime is at most
$\left[ 2en^2\cdot(2^{2/\epsilon}+1) +(\epsilon^{(2/\epsilon)+1})^{-1} (1- \epsilon)^{-2}e^3 2^{2/\epsilon} \frac{n}{2\gamma}\right] \cdot(1+\gamma \log n)$. 
\end{proof}

For $\gamma=1/\log n$, as recommended herein for the \fastianp, 
the expected runtime is dominated by the term $2en^2 2^{2/\epsilon}$. 
Hence the upper bound is a linear factor smaller than that of the (1+1)~IA using traditional static HMP.
We remark that even though the \fastianp is a logarithmic factor slower at escaping from the local optima,  
a speed-up is still achieved because the dominating term is due to the expected time to hill-climb up to the local optima, a task at which
the FCM$_\gamma$ operator is considerably faster. Hence, this advantage dominates even in the \partition scenario where flipping approximately $n/2$ bits is essential to escape local optima via mutation and detrimental to the \fastianp. 

We remark that the complete \fastoptia{} can also solve the worst-case instance to optimality and identify the approximation ratios by either using the ageing operator to restart the search process when trapped on local optima (with optimisation time $O(n^2)$~\cite{CorusOlivetoYazdaniAIJ2019}) or by escaping them via hypermutation. Hence, the \fastoptia{} can take advantage of both hypermutations {\it and} ageing to efficiently overcome the local optima of \partition.

\subsection{\textsc{Vertex Cover}}\label{sec:vc}

In this section we will use the NP-Hard  \textsc{Vertex Cover} problem to rigorously prove that the \fastianp can take advantage of the \hypfcm operator to achieve considerable speed-ups compared to static HMP on another classic problem from combinatorial optimisation with numerous real-world applications~\cite{VertexCoverAppBookChapter} in, e.g.,  classification methods~\cite{VertexCoverApp2014}, computational biology~\cite{VertexCoverApp2008}, and electrical engineering~\cite{VertexCoverApp1993,VertexCoverApp1998}.

Given an undirected graph $G=(V,E)$, the \textsc{Vertex Cover} problem asks to find a minimal subset of vertices, $V' \subseteq  V$, such that every edge $e \in E$ is adjacent to one of the vertices in $V'$.
Any set of vertices such that all edges in the graph are adjacent to at least one vertex in the set is a feasible solution and is called a {\it cover}. The aim of the problem is to identify the cover of minimal size (i.e., the minimum vertex cover). While the problem is NP-Hard, hence no algorithm is expected to be able to efficiently identify the optimum of every instance, we will show that  the $(1+1)$~IA using the traditional HMP operator is particularly slow at identifying any cover and how the Fast HMP operators speed-up the algorithm by a linear factor when node-based representations are used.
In the next subsection, we will prove the same linear speed-up for identifying 2-approximations when edge-based representations are employed.

\subsubsection{ Node-Based Representation}
We will use the commonly applied fitness function over node-based representations~\cite{Khuri94anevolutionary,OlivetoHeYao2009,FriedrichHe2010cover}. Candidate solutions are bit-strings of length $|V|=n$, where each bit $x_i$ is associated to a node in the graph and is set to    
$1$ 
if the vertex $i$ is included in the cover set, and to $0$ otherwise.  
The fitness of  a  candidate solution is,
\begin{equation*}
f_v(x)= \sum_{i=1}^n \left(x_i+n(1-x_i)\sum_{j=1}^n (1-x_j)e_{i,j} \right),
\end{equation*}

where $e_{i,j}$ takes value 1 if there is an edge connection between vertex $i$ and vertex $j$ 
in the graph $G$.  This fitness function sums the number of vertices in the cover (the first term) and gives a large penalty to the number of uncovered edges (the second term). 

It is well-known that both the $(1+1)$~EA and RLS can find feasible covers in expected time $\Theta(n \log n)$.
The following theorem shows that the  $(1+1)$~IA  using the traditional static HMP operator is a linear factor slower.

\begin{theorem}\label{th:IAanyVC}
The expected time until the $(1+1)$~IA finds a vertex cover using the node-based representation and $f_v$ is $\Theta(n^2 \log n)$.
\end{theorem}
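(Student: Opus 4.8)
The plan is to prove the upper bound $O(n^2\log n)$ for \emph{every} graph by a multiplicative-drift argument carried out at the level of hypermutation operations, and then to establish a matching lower bound $\Omega(n^2\log n)$ on the complete graph $K_n$ by reducing the task of finding a feasible cover to \onemax.

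First I would rewrite the fitness as $f_v(x)=|x|_1+2n\,U(x)$, where $U(x)$ is the number of uncovered edges: every uncovered edge has both endpoints set to $0$ and is counted twice by the double sum, so it contributes exactly $2n$ to the penalty, an amount that dominates the vertex term $|x|_1\le n$. Hence any solution with strictly fewer uncovered edges has strictly smaller $f_v$, and since the $(1+1)$~IA is elitist, $U_t$ is non-increasing along the run and the first hitting time of $U_t=0$ is exactly the time to obtain a cover. For the upper bound I would track $U_t$ over hypermutation operations and, pessimistically, credit progress only to improvements discovered at the \emph{first} bit flip. Let $V_0$ be the set of $0$-bit vertices and, for $v\in V_0$, let $d_{unc}(v)$ count the uncovered edges incident to $v$; flipping such a $v$ to $1$ covers precisely $d_{unc}(v)$ edges, which is an improvement (so the static HMP stops immediately), and the first flip selects $v$ with probability $1/n$. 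Because each uncovered edge is counted at both of its $0$-endpoints, $\sum_{v\in V_0} d_{unc}(v)=2U_t$, so the expected decrease of $U$ from first-flip improvements alone is at least $2U_t/n$. Intermediate improvements and the returned complement can only decrease $U$ further among accepted outcomes and never increase it, yielding $E(U_{t+1}\mid U_t)\le(1-2/n)U_t$. Taking $g(x)=U(x)\in\{0\}\cup\mathbb{Z}_{\ge 1}$ with $g_{max}\le\binom{n}{2}$, Theorem~\ref{th:multidrift} (with $\delta=\Theta(1/n)$) gives $O(n\log n)$ hypermutation operations in expectation. Since the static HMP evaluates the fitness at most once per bit flip and therefore costs at most $n$ evaluations per operation deterministically, the expected number of fitness evaluations is $O(n^2\log n)$ for any graph.

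For tightness I would instantiate $G=K_n$, where $U(x)=\binom{|x|_0}{2}$, so a solution is feasible if and only if it has at most one $0$-bit. For any $x$ with $|x|_0\ge 2$, flipping a $0$-bit to $1$ strictly decreases $f_v$ whereas flipping a $1$-bit to $0$ strictly increases it; the accepted dynamics are thus exactly those of \onemax on the number of $0$-bits, and reaching feasibility coincides with reaching $|x|_0\le 1$. The $(1+1)$~IA with static HMP is known to require $\Theta(n^2\log n)$ evaluations on \onemax~\cite{CorusOlivetoYazdani2019TCS}; since from a uniform start $|x|_0=\Theta(n)$ with overwhelming probability and $\Theta(n)$ levels with $|x|_0=\Theta(n)$ must be traversed, stopping at $|x|_0\le 1$ instead of $0$ removes only a lower-order term and leaves the bound at $\Omega(n^2\log n)$, so the worst-case runtime is $\Theta(n^2\log n)$.

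I expect the upper bound to be the main obstacle. The artificial-fitness-levels conversion (the factor-$n$ method of~\cite{CorusOlivetoYazdani2019TCS}, cf.\ Theorem~\ref{th:aflk}) is too lossy here, because $U$ ranges over $\Theta(n^2)$ distinct values while only a single improving Hamming neighbour is guaranteed at each level, so AFL alone would give only $O(n^3)$. The crux is therefore to argue the drift directly at the operation level, using the identity $\sum_{v\in V_0} d_{unc}(v)=2U_t$ to recover the multiplicative $1-\Theta(1/n)$ contraction that the naive level counting conceals, combined with the facts that elitism keeps $U$ monotone and that each operation costs at most $n$ evaluations. A minor technical point to handle carefully is verifying that neither the intermediate improvements nor the complement returned by the FCM mechanism can ever raise $U$ on an accepted step, which is immediate from $f_v=|x|_1+2n\,U$ and elitism.
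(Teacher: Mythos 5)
Your upper bound is essentially the paper's own argument: both perform a multiplicative drift analysis on the number of uncovered edges at the level of hypermutation operations, credit progress only to improvements found at the first bit flip, and charge $n$ wasted evaluations to each failed operation before invoking Theorem~\ref{th:multidrift}. Your version is in fact slightly more explicit than the paper's --- the decomposition $f_v(x)=|x|_1+2n\,U(x)$ and the handshake identity $\sum_{v\in V_0}d_{unc}(v)=2U_t$ make rigorous what the paper states more loosely (``each accepted offspring decreases the number of uncovered edges by $u/k$ in expectation''), and you check that accepted steps never increase $U$, which the paper leaves implicit. The lower bound is where you genuinely diverge: the paper stays on $K_n$ and computes directly, via the Ballot theorem, that an improving hypermutation from a solution with $i$ zero-bits occurs with probability at most $2i/n$, that FCM forces at least $n/3$ separate improving hypermutations (each constructive mutation reduces $|x|_0$ by exactly one), and that each failing generation costs $n$ evaluations, giving $\sum_{i=1}^{n/3}\bigl(\tfrac{n}{2i}\cdot 1+(\tfrac{n}{2i}-1)\cdot n\bigr)=\Omega(n^2\log n)$. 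You instead reduce to the cited $\Theta(n^2\log n)$ bound for the \oneoneiahype{} on \onemax; the reduction itself is sound, since on $K_n$ the fitness is a function of unitation that is strictly increasing in $|x|_0$ for $|x|_0\ge 1$, so accepted moves and FCM stopping coincide with those on \onemax everywhere the process visits before feasibility.

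The one weak point is your justification that truncating the target at $|x|_0\le 1$ ``removes only a lower-order term.'' The reason you give --- that $\Theta(n)$ levels with $|x|_0=\Theta(n)$ must be traversed --- points at the wrong part of the runtime: those levels contribute only $\Theta(n^2)$ evaluations in total (each costs $\Theta(n^2/i)=\Theta(n)$), whereas the $\log n$ factor comes from the levels with \emph{few} zero-bits, i.e.\ $\sum_{i=2}^{n/3}n^2/(2i)=\Omega(n^2\log n)$. Consequently the claim cannot be obtained by citing the \onemax{} result as a black box: conceivably (as far as the black-box statement tells you) the bulk of its $n^2\log n$ could sit in the very last improvement, which your process never has to make. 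To close this you must open up the level-sum structure of the \onemax{} lower bound --- or simply rerun the paper's Ballot-theorem sum starting from $i=2$ --- at which point you have reproduced the paper's computation anyway. This is a fixable imprecision rather than a wrong approach, but as written the final sentence of your lower-bound argument does not follow from what you cite.
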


\begin{proof}
To prove the upper bound, we use multiplicative drift proof idea of Theorem~1  in~\cite{FriedrichHe2010cover} 
for the \oneoneea{} and RLS. 
In particular, we will perform a drift analysis on the number of uncovered edges in the current solution.

Let $k$ denote the number of vertices that are incident to at least one uncovered edge and $u$ be  the total number of uncovered edges. 
The optimisation goal is to find the expected time until the number of uncovered edges is  $u=0$.

Looking at only the first bit-flip, the probability of improvement is at least $k/n$ and each accepted offspring decreases the number of uncovered edges by $u/k$ in expectation. The reason is that, on average, each of the $k$ vertices are connected to $u/k$ uncovered edges in expectation, hence, at the end of each improving step, the expected number of uncovered edges is at most $u_{t+1}:=(u_{t}-\frac{k}{nk}u_{t})=(u_{t}-\frac{1}{n}u_{t})=u_t(1-\frac{1}{n})$.
Hence, the drift i.e., the expected decrease of the number if uncovered edges in one step, is $\delta:=1/n$. Now, we can use the multiplicative drift theorem
(i.e., Theorem~\ref{th:multidrift})
 to compute the expected time until all the edges are covered (i.e., $u=0$). Assuming $g$ to be the number of uncovered edges, $g_{max}$ is $n(n-1)/2$ for a complete graph. Hence, we get an expected runtime of $E(T) \leq \frac{1}{\delta} (1+\ln (n\cdot(n-1)/2)\leq O(n \log n)$ to cover all edges. By pessimistically assuming that in the case of a failure at improving in the first mutation step, $n$ fitness function evaluations are wasted, the overall expected runtime is $O(n^2 \log n)$.

To prove a lower bound on the expected time to find a  cover, we assume that the given graph is complete (i.e., the number of edges is $n(n-1)/2$). The size of a cover for such a graph  is $n-1$. By Chernoff bounds, we know that w.o.p. the initialised solution includes at most $2n/3$ vertices (i.e., the number of 1-bits). To compute the expected time until all $n-1$ vertices are selected, we use the Ballot theorem. Considering the number of 0-bits as $i = q$ and the number of 1-bits as $n-i = p$, the probability of an improvement is at most $1-(p -q)/(p + q) = 1 - (n - 2i)/n = 2i/n$ by the Ballot theorem (Theorem~\ref{thm:ballot}).  
Since the operator stops at the first constructive mutation, it is necessary that at least $n/3$ improving hypermutations occur.
Hence, the expected runtime for the cover to be  identified is at least 
$\sum_{i=1}^{n/3} (\frac{n}{2i}\cdot 1+ (\frac{n}{2i}-1) \cdot n) = \Omega(n^2 \log n)$ where the second term takes into account that in the generations where an improvement is not identified $n$ fitness function evaluations are wasted.
\end{proof}

We now prove that the \fastianp is a linear factor faster.
\begin{theorem}
The expected time until the \fastianp finds a vertex cover using the node-based representation and $f_v$ is  $\Theta( n \log n \cdot  (1+\gamma \log n))$.
\end{theorem}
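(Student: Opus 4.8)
The plan is to prove matching upper and lower bounds by recycling the multiplicative drift and Ballot-theorem machinery already used for the static operator in Theorem~\ref{th:IAanyVC}, and replacing the deterministic $n$ wasted evaluations per generation by the $\Theta(1+\gamma\log n)$ wastage quantified in Lemma~\ref{lem:wald}. The heuristic is that each \hypfcm{} call mimics an RLS step through its first bit-flip, but registers it only with probability $1/e$ while spending far fewer evaluations when it fails.

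For the upper bound I would repeat the multiplicative drift argument of Theorem~\ref{th:IAanyVC} on the number $u$ of uncovered edges, now accounting for the fact that \hypfcm{} evaluates its first bit-flip only with probability $1/e$. With $k$ the number of vertices incident to an uncovered edge, flipping such a vertex as the first bit and evaluating it gives an accepted improvement with probability at least $(k/n)(1/e)$, and conditioned on this the expected number of uncovered edges contracts by a factor $(1-1/(en))$. Hence the drift coefficient is $\delta=\Theta(1/n)$, and since $g_{\max}=n(n-1)/2$, Theorem~\ref{th:multidrift} yields $O(n\log n)$ generations until $u=0$. By Lemma~\ref{lem:wald} each generation spends $O(1+\gamma\log n)$ evaluations in expectation (and fewer in the rare improving ones), so by Wald's equation the expected number of evaluations is $O(n\log n\,(1+\gamma\log n))$.

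For the lower bound I would take the complete graph on $n$ vertices, whose minimum cover has size $n-1$, and note that a uniformly random start carries at least $n/3$ zero-bits with overwhelming probability by Chernoff bounds. Because the penalty term dominates $f_v$, every accepted move must strictly decrease the number $i$ of zero-bits, and by the Ballot theorem (Theorem~\ref{thm:ballot}) the probability that a single \hypfcm{} call produces any such solution is at most $2i/n$, with the probability of reducing the zeros by $k$ at once at most $(2i/n)^k$; since \hypfcm{} evaluates only a subset of the sampled prefixes, these bounds dominate those of the static operator. For the small values of $i$ generating the harmonic term, jumps of size larger than one are thus exponentially unlikely, so the process must pass through essentially every level, spending $\Omega(n/i)$ generations there and giving $\sum_{i=1}^{n/3}\Omega(n/i)=\Omega(n\log n)$ generations in expectation. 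Since at most $O(n)$ of these generations can be improving, the vast majority are non-improving and, by Lemma~\ref{lem:wald}, waste $\Theta(1+\gamma\log n)$ evaluations in expectation; a Wald-type argument then delivers $\Omega(n\log n\,(1+\gamma\log n))$, matching the upper bound.

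The routine part is the upper bound, essentially a transcription of the static drift analysis with the extra $1/e$ evaluation factor and the improved wastage term from Lemma~\ref{lem:wald}. The main obstacle is the lower bound: unlike static FCM, whose stop-at-first-constructive-mutation rule forces each improving hypermutation to reduce the zero-count by exactly one (so that $n/3$ improving steps are provably required), \hypfcm{} may skip the evaluation of the first improving prefix and collapse several levels in a single accepted step. The crux is therefore to argue that such multi-level jumps are too rare at small $i$ to erode the harmonic $\sum 1/i$ contribution, which is precisely where the geometric tail $(2i/n)^k$ from the Ballot theorem is needed, and then to combine the generation count with the per-generation wastage of Lemma~\ref{lem:wald} through Wald's equation so as to recover the exact $(1+\gamma\log n)$ factor rather than a constant.
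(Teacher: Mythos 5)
Your proposal is correct and follows the same route as the paper: the upper bound is the multiplicative drift argument of Theorem~\ref{th:IAanyVC} with the improvement probability scaled by the $1/e$ evaluation probability and the per-generation wastage replaced by the $O(1+\gamma\log n)$ of Lemma~\ref{lem:wald}, and the lower bound is the complete-graph/Ballot-theorem harmonic sum with the same wastage substitution.

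One remark worth making: your treatment of the lower bound is actually \emph{more} careful than the paper's. The paper simply declares the lower-bound proof ``identical'' to that of Theorem~\ref{th:IAanyVC}, but the step it borrows---``since the operator stops at the first constructive mutation, at least $n/3$ improving hypermutations must occur''---is no longer literally valid for \hypfcm{}, which stops at the first \emph{evaluated} constructive mutation and can therefore skip evaluation of the first improving prefix and accept a solution several levels better, exactly the multi-level collapse you flag as the crux. Your patch---bounding the probability of a $k$-level jump by $(2i/n)^k$ via Theorem~\ref{thm:ballot} (each improvement restarting a ballot game at greater disadvantage, the same device the paper itself deploys in the \textsc{HiddenPath} proof) and concluding that for $i$ bounded away from $n/2$ the skipping probability of any fixed level is $O(i/n)$, so each level contributing to the harmonic sum is still visited with constant probability---is precisely what is needed to make the $\Omega(n\log n)$ generation count rigorous for the stochastic evaluation scheme. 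To fully close it you should state the union bound over levels $j>i$ explicitly, and note that conditioning a generation on being non-improving does not reduce its expected evaluation count below $\Omega(1+\gamma\log n)$, since evaluations at non-improving prefixes are independent of that event; both are routine given your outline.
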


\begin{proof}
The proof of both upper and lower bounds is similar to that of Theorem~\ref{th:IAanyVC}. We start with the upper bound first. For the \fastianp,  the probability of improvement in the first bit-flip is $k/(ne)$ which does not change the runtime asymptotically. However, in case of failure at improving the fitness, the \fastianp wastes at most $O(1+\gamma \log n)$ fitness function evaluations in expectation. This yields an expected runtime of $E(T)=O((1+\gamma \log n) \cdot n \ln n)$. 

Regarding the lower bound, the proof  is identical to that of Theorem~\ref{th:IAanyVC} except that the expected  wastage when failing to find an improvement is $\Omega(1+\gamma \log n)$, which makes the expected runtime larger than $\sum_{i=1}^{n/3} (\frac{n}{2i}\cdot 1+ (\frac{n}{2i}-1) \cdot (1+\gamma \log n))=\Omega( n\log n  \cdot(1+\gamma \log n))$.
\end{proof}
%

\subsubsection{ Edge-Based Representation}
It is well understood that using the node-based representation of the previous subsection, RLS and EAs may get stuck on arbitrarily bad approximations for the \textsc{Vertex Cover} problem~\cite{OlivetoHeYao2009,FriedrichHe2010cover}. In~\cite{JansenOlivetoZargesFOGA2013}, it was shown that 2-approximations may be guaranteed by these algorithms if an edge-based representation is employed, such that if an edge is selected, then both its endpoints are included in the cover. For the approximation to be guaranteed, it is necessary to give a large penalty to adjacent edges, i.e., the fitness decreases considerably if adjacent edges are deselected.
Given a graph $G= (V,E)$ with $|V|=n$ and $|E|=m$ and an edge-based representation where solutions are bit-strings of length $m$, the fitness function is,
\begin{align*} 
f_e (x) = f_v (x)& + (|V|+1) \cdot (m + 1)\\ 
&\cdot |\{ (e,e') \in E(x) \times E(x) | e \neq e',  e \cap e' \neq \emptyset \}|.
\end{align*}
We will now prove that while with this representation the $(1+1)$~IA with traditional static HMP requires super-quadratic expected runtime in the number of edges to find a 2- approximation in the worst-case,  the \fastianp guarantees 2-approximations in expected time $O(m \log m)$ for any instance of the problem.

\begin{theorem}
Using the edge-based representation and fitness function $f_e$, the $(1+1)$~IA has an expected runtime of $\Omega(m^2 \log m)$ to find a 2-approximation for vertex cover.
The \fastianp finds a 2-approximation within $O((m \log m) \cdot  (1+\gamma \log m))$ expected fitness function evaluations.
\end{theorem}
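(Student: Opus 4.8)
The plan is to prove the two halves separately: the $\Omega(m^2\log m)$ lower bound for the traditional \oneoneiahype{} on a tailored worst-case instance, and the general $O((m\log m)(1+\gamma\log m))$ upper bound for the \fastianp{} by a two-phase multiplicative-drift argument that mirrors the node-based Theorem~\ref{th:IAanyVC}. For the lower bound I would take $G$ to be the disjoint union of $m$ edges (a perfect matching on $n=2m$ vertices). On this graph no two edges are adjacent, so the adjacency penalty of $f_e$ vanishes identically, and since each edge is the only edge incident to its endpoints, the unique feasible cover — hence the unique $2$-approximation — is the all-ones string $1^m$. Writing $s$ for the number of selected edges, one checks $f_e=f_v=nm+s(2-n)$, which is strictly decreasing in $s$; thus the fitness order coincides with the Hamming-weight order and the \oneoneiahype{} behaves on this instance exactly as on \textsc{OneMax} over $m$ bits. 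Since static HMP is known to take $\Theta(m^2\log m)$ on \textsc{OneMax}~\cite{CorusOlivetoYazdani2019TCS}, the lower bound follows; alternatively it can be re-derived directly by the Ballot-theorem counting of Theorem~\ref{th:IAanyVC} (Theorem~\ref{thm:ballot}), summing the $\Omega(m)$ evaluations wasted per failed hypermutation over the $\Theta(m\log m)$ improving steps required near $1^m$.

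For the upper bound the structural key is that the coefficient $(|V|+1)(m+1)$ strictly exceeds the maximum value $nm$ of the uncovered-edge penalty in $f_v$, so removing even a single adjacent pair improves $f_e$ no matter how many edges it uncovers; minimisation of $f_e$ is therefore lexicographic in (number $a$ of ordered adjacent selected-edge pairs, then number $u$ of uncovered graph-edges, then cover size), and any offspring with $a\ge 1$ is strictly worse than any matching and is rejected by elitism. I would split the run accordingly and apply Theorem~\ref{th:multidrift} twice. In the first phase ($a>0$) every selected edge incident to an adjacency can be deselected by a single bit-flip, reducing $a$; since $\sum_e\deg_{\mathrm{adj}}(e)=a$, flipping and evaluating the first bit (probability $1/(em)$ per bit) yields expected progress at least $2a/(em)$, so with $\delta=\Omega(1/m)$ and $a_{\max}=O(m^2)$ a matching is reached in $O(m\log m)$ generations. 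In the second phase ($a=0$) every uncovered edge is itself a free unselected edge whose selection lowers $u$ by at least one without creating any adjacency, giving at least $u$ improving single flips and expected decrease of $u$ at least $u/(em)$, hence again $O(m\log m)$ generations to reach a maximal matching, i.e. a $2$-approximation.

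Two monotonicity facts make the drift theorem directly applicable and I would verify them carefully. First, in phase two an accepted improvement must remain a matching, since any offspring with $a\ge1$ incurs penalty $\ge C>\max f_v$ and is rejected; second, within matchings a strict decrease of $f_v=2s+nu$ with $u$ increasing would force $s$ to drop by more than $n/2$, which is impossible because a matching has at most $n/2$ edges and $u\ge1$ already implies $s<n/2$. Hence $u$ (resp.\ $a$) is non-increasing along accepted moves and the condition $E(g(X_{t+1})\mid g(X_t))<(1-\delta)g(X_t)$ holds. Combining the $O(m\log m)$ generations with the $O(1+\gamma\log m)$ fitness evaluations wasted per hypermutation (Lemma~\ref{lem:wald}) gives the claimed $O((m\log m)(1+\gamma\log m))$ bound, the constant-factor $1/e$ loss in per-step improvement probability being absorbed into $\delta$; the whole step is routine exactly as in Theorem~\ref{th:IAanyVC}.

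The main obstacle is precisely this upper-bound drift analysis: porting the $O(m\log m)$ argument of~\cite{JansenOlivetoZargesFOGA2013} to the stochastic-evaluation operator requires establishing that an improving single bit-flip always exists while $a>0$ or $u>0$, that selecting a free edge never re-introduces an adjacency, and that the elitist acceptance rule keeps the chosen potential monotone across the phase boundary so that Theorem~\ref{th:multidrift} genuinely applies. Once the phase structure and the two drift bounds are secured, the lower bound and the fast-HMP wastage factor are comparatively immediate.
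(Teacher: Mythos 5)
Your proof is correct, and on the upper bound it takes essentially the paper's route: the paper uses the very same two-phase decomposition (first eliminate all adjacent selected edges, then select edges incident to uncovered vertices), but it dispatches each phase by citing the structural analysis of Theorem~11 in~\cite{JansenOlivetoZargesFOGA2013} together with its \textsc{OneMax} corollary for \hypfcm{}, whereas you re-derive the two hillclimbing bounds from scratch via Theorem~\ref{th:multidrift}. Your explicit verification that the penalty coefficient makes $f_e$ lexicographic in $(a,u,\text{cover size})$ --- hence that $a$ and $u$ are monotone under elitist acceptance and a single improving bit-flip always exists --- is precisely the content the paper imports by citation, so your version is more self-contained at the cost of length (note your justification ``the coefficient exceeds $nm$'' needs the factor-of-two bookkeeping: the ordered-pair count makes one adjacency cost $2(|V|+1)(m+1)$, while the doubly-counted uncovered-edge penalty is at most $2nm+n$, so the domination still holds). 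The genuine difference is the lower-bound instance: the paper uses a star graph, where all edges are pairwise adjacent and all but one must be \emph{deselected} --- a \textsc{ZeroMax}-like process analysed with the Ballot theorem exactly as in Theorem~\ref{th:IAanyVC} --- while you use a perfect matching, where no two edges are adjacent and all edges must be \emph{selected}, making the fitness-level structure literally isomorphic to \textsc{OneMax} on $m$ bits so that the known $\Theta(m^2\log m)$ bound for the \oneoneiahype{} applies verbatim. Both constructions are sound; yours buys a cleaner reduction with no new calculation, the paper's reuses its own node-based machinery. One small slip in your fallback derivation: on the matching instance there are $\Theta(m)$ improving steps and $\Theta(m\log m)$ total generations, so the wasted-evaluation sum $\sum_{i}(m/(2i)-1)\cdot m$ ranges over improvements indexed by the number of still-unselected edges, not over ``$\Theta(m\log m)$ improving steps''; this does not affect your primary route.
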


\begin{proof}
For the first statement we consider  a star graph with $|V|=n$ and $|E|=m=n+1$. All nodes but one are connected to the central one with exactly one edge. We follow the proof idea for the lower bound of Theorem~\ref{th:IAanyVC}. 
By Chernoff bounds, w.o.p. the algorithm is initialised with at least $m/3$ selected edges and they all have to be deselected except for one since all the edges are adjacent. Only then will the resulting cover be a 2-approximation. By the Ballot theorem, given that $i$ edges still need to be deselected, the probability that by the end of a hypermutation an improvement is found is $2i/m$. 
The statement follows by considering that in the $m/(2i) -1$ expected generations where no improvement occurs, $m$ fitness evaluations will be wasted, and by summing these evaluations up for the $1 < i < m/3$ necessary improvements due to stopping after each constructive mutation.

The proof of the upper bound follows directly the proof for RLS and the (1+1)~EA of Theorem 11 in~\cite{JansenOlivetoZargesFOGA2013}.
Two \textsc{OneMax}-like phases suffice to guarantee that a 2-approximation is found: the first one removes all the adjacent edges, thus removing the largest penalty term completely, and the second one adds any edges connecting uncovered nodes, thus removing the penalty term of $f_v$. Since by Corollary~3 the \fastianp optimises \textsc{OneMax} in expected time $O(m \log m) \cdot  (1+\gamma \log m)$, and two such phases suffice, the proof is concluded.
\end{proof}

As long as the ageing parameter $\tau$ is set to be asymptotically larger than the expected waiting time of the improvement with smallest probability, all the results proven for \textsc{Vertex Cover} can easily been shown to also hold for the \fastoptia{}  by multiplying the upper bounds  with the population and clone sizes.

\section{Optimal Probability Distributions} \label{sec:optimalhyp}
In the following subsection we compare the advantages and disadvantages of our proposed {\it fast} HMP operators to other '{\it fast} mutation' operators from the literature. In the subsequent subsection we draw on the gained insights to provide the best parameter settings for the {\it fast} HMP operators for black box scenarios where limited problem knowledge is available.

\subsection{Comparison with Fast Evolutionary Algorithms }
While high mutation rates are typical of an immune system response, they do not occur naturally in Darwinian evolutionary processes.
Indeed, low mutation rates are essential in traditional generational evolutionary and genetic algorithms to avoid exponential runtimes on any function of practical interest~\cite{LehrTEVC2018}. 
However,  increasing evidence is mounting that higher mutation rates than standard are beneficial to steady-state GAs both for exploitation (i.e., hillclimbing)~\cite{CorusOlivetoTEVCsteadyGA2017,CorusOlivetoGecco2019} and exploration (i.e., escaping from local optima)~\cite{JumpTEVC2017}. 
These high mutation rates are possibile by taking advantage of the artificially introduced elitism in steady-state EAs~\cite{Whitley1989}.
Such insights have recently been exploited in the evolutionary computation community in the design of so-called {\it fast} EAs that use
{\it heavy tailed} mutation operators to allow a larger number of bit flips more often than the standard bit mutations (SBM) traditionally used in EAs and GAs.
 By using higher mutation rates, {\it fast} EAs can provably escape from local optima more efficiently than the traditional SBM. 
 Since these analogies are very similar to the insights gained in this paper, and in previous works regarding AISs, in this section we compare the performance of the {\it fast} HMP operator to those of the {\it fast} EAs. 
 
 Two heavy tailed mutation-based EAs for discrete optimisation have been recently introduced. In the first one, which we call Fast (1+1)~EA$_\beta$, the tail of the probability distribution follows a power law~\cite{Doerretal2017} (i.e., the probability that larger number of bits flip decreases slower than in SBM). 
 In the second one, which we call Fast (1+1)~EA$_{\textsc{unif}}$, the tail is uniformly distributed~\cite{FQWGECCO18}.
To illustrate their advantages over SBM at escaping from local optima, these works have naturally used the \textsc{Jump}$_d$ function, just like traditionally for AISs and in this paper. Thus, we will start by comparing their performance versus that of the \fastianp for \textsc{Jump}$_d$.
We begin with the latter algorithm, as the analysis of the former will motivate the optimal settings for the hypermutation distribution that we will present in the next subsection for typical black box scenarios  where minimal problem knowledge is assumed.

\subsubsection{Uniform Heavy Tailed Mutations~\cite{FQWGECCO18}} 
The Fast (1+1)~EA$_{\textsc{unif}}$ uses the following distribution:
\begin{align}
\label{prob}
p_i= 
\begin{cases}
		p & \text{for}\; i=1, \\  
        (1-p)/(n-1) & \text{for}\; 1<i \leq n.\\
\end{cases}
\end{align}
where $p_i$ is the probability that $i$ bits flip and $p=\Theta(1)$ is a constant, e.g., $1/e$.

This operator has a very similar behaviour to the original static HMP operator with FCM since over  
$n$ fitness function evaluations both operators evaluate the same expected number of solutions at Hamming distance $k$ (for any $k\neq 1$) except for a factor of $(1-p)$. 

Just like the (1+1)~IA and the \fastianp, the Fast (1+1)~EA$_{\textsc{unif}}$   
can easily explore the opposite side of the search space and can even obtain polynomial expected runtimes if the 
jump size is in the order of $n-O(1)$. However, just like for the traditional HMP operator, the drawback of this approach is that it is slower than the \fastianp for jump sizes $d<n/\log{n}$ and $d>n-n/\log{n}$. 
The intuition is that the  Fast (1+1)~EA$_{\textsc{unif}}$ assigns a constant probability $p$ only to 1-bit flips while assigning a probability in the order of $\Omega(1/n)$ to all others. 
Hence, similarly to the traditional static HMP operator, a solution at the correct distance $d$ to the 
parent is only sampled once every $n$ fitness function evaluations resulting in the same asymptotic 
performance for all possible $d>2$. 
In particular, while for small and large $d$ (where efficient performance is achievable), the detriment in performance is as large as a factor of $n$, for other values of $d$, the difference of performance is in favour of the Fast (1+1)~EA$_{\textsc{unif}}$ by at most a factor of $\Theta(\log n)$. This, however, has no realistic influence on the applicability of the algorithm, since the expected runtime to perform such jumps is exponential in the problem size in any case. 
Hence, the superiority of the \fastianp at escaping local optima, while both algorithms display the same hillclimbing performance (i.e., they both flip and evaluate exactly one bit with constant probability $p=\Theta(1)$). 

%
%


\subsubsection{Power Law Heavy Tailed Mutations~\cite{Doerretal2017}}
The (1+1)~EA$_\beta$ uses a heavy tailed standard bit mutation operator (i.e., it flips each bit with probability $\chi / n$). The mutation rate $\chi$ is sampled in each step with probability, 
\[
p(\chi) = \frac{\chi^{-\beta}}{\sum^{n/2}_{i=1}i^{-\beta}}
\]
where the parameter $\beta$ is assumed to be a constant strictly greater than $1$ to ensure 
that the sum $\sum^{n/2}_{i=1}i^{-\beta}$ is in the order of $O(1)$. 

The optimal expected runtime for \textsc{Jump}$_d$ is $\frac{n^n}{d^d (n-d)^{n-d}}$, which is achieved by using the optimal mutation rate $d/n$ which can only be applied if the jump size $d$ is known in advance.
Naturally, 
in a black-box scenario this parameter of the problem is not known to the algorithm. The above mutation operator was explicitly designed 
to  have an adequate compromised performance over all possible values of $d$.

The Fast (1+1)~EA$_\beta$ has an expected runtime 
of  $\Theta\left(d^{\beta}\binom{n}{d}\right)$ on the \textsc{Jump}$_d$ 
function which differs from the best possible expected runtime by at most a factor of $\Theta(d^{\beta-0.5})$. 
The \fastianp evaluates a solution at Hamming distance $d$ with probability $\gamma/d$ in each hypermutation, and wastes the remaining 
$\Theta(\gamma\log{n})$ expected evaluations, resulting in an expected waiting time of 
$\Theta( d\log{n} \binom{n}{d})$. Thus, 
the \fastianp has an extra $\Theta(\log{n})$ 
factor in its runtime for constant jump sizes. 
In particular, since the Fast (1+1)~EA$_\beta$ uses a power law distribution, for any jump of size 
$d=\Theta(1)$, the probability that the operator picks the mutation rate $d/n$ which gives the 
highest improvement probability is in the order of $d^{-\beta}=\Theta(1)$ when $d=\Theta(1)$. 

However, the algorithm struggles with larger jump sizes compared to the \fastianp.
This is particularly critical for very large jumps i.e., $d=n-O(1)$, where the \fastianp has polynomial expected runtime 
$O( d\log{n} \binom{n}{n-d})$ while 
 the Fast (1+1)~EA$_\beta$ has exponential runtime because it flips bits with probability at most $\chi/n= 1/2$  by design (as a larger mutation rate was deemed unnecessary in the original work). 
If the cap on the maximum mutation rate is removed (as was recently considered in~\cite{FGQWPPSN18}), the resulting operator can also achieve polynomial expected runtimes for extremely large jump sizes.  However, due to the power law distribution, the probability of flipping $n-O(1)$ bits is in the 
order of $O(n^{-\beta})$ which results in a polynomially slower expected performance to that of the \fastianp. 
This is due to the symmetric sampling distribution of the \hypfcm operator  around $n/2$ that allows considerably larger probabilities of evaluating offspring  at  distance $n-O(1)$. 

Overall, the \fastianp is asymptotically faster at escaping from local optima for all super-logarithmic jump sizes and is at most a $\Theta(\log{n})$ factor slower for small constant jumps.
In the next subsection we will show how to reduce the logarithmic factor in the \fastianp to just a constant while maintaining its advantage in the settings where it has better performance.

Nevertheless, we now show that the Fast (1+1)~EA$_\beta$ can still be very efficient in practice at escaping from local optima with large basins of attraction.
In particular, just like the \fastianp, it has an $O(n^2)$ expected runtime to find arbitrarily good constant approximations for the 
 \textsc{Partition} problem considered in Section~\ref{sec:part}. 
\begin{theorem}\label{th:part-doerr}
The \foea finds a $(1+\epsilon)$ approximation for any \partition instance in 
$2(C_{n/2}^\beta)en^2\cdot(2^{2/\epsilon}+1)+(C_{n/2}^\beta) (n( 
\epsilon-\epsilon^2))^{\beta}\cdot\epsilon\cdot(\epsilon-\epsilon^2)^{-2/\epsilon}
$ expected fitness function evaluations. (for any $\epsilon=\omega(1/\sqrt{n})$). 
\end{theorem}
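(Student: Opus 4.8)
The plan is to mirror the proof of Theorem~\ref{thm:approxais} (the \fastianp{} analysis) and of Theorem~6 in~\cite{CorusOlivetoYazdaniAIJ2019}, replacing each probability that refers to the \hypfcm{} operator by the corresponding quantity for the power-law standard bit mutation of~\cite{Doerretal2017}. The normalisation constant $C_{n/2}^\beta=\sum_{i=1}^{n/2}i^{-\beta}=\Theta(1)$ will appear as a multiplicative penalty whenever a specific mutation rate $\chi$ must be drawn, since $\chi$ is sampled with probability $\chi^{-\beta}/C_{n/2}^\beta$; this single feature is what separates the bound from the \fastianp{} bound of Theorem~\ref{thm:approxais}. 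Note also that, unlike \fastianp{}, the \foea{} performs exactly one fitness evaluation per generation, so no additional $(1+\gamma\log n)$ wastage factor (cf. Lemma~\ref{lem:wald}) is incurred and generations coincide with evaluations.

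First I would prove the analogue of Lemma~\ref{lem:local}: from any non-locally-optimal $x$ with $f(\ell_i)>f(x)\ge f(\ell_{i+1})$, the \foea{} reaches a solution at least as good as $\ell_{i+1}$ in at most $2(C_{n/2}^\beta)en^2$ expected generations. The argument is identical to that of Lemma~\ref{lem:local} except for the probability of a single improving move: the operator must draw the rate $\chi=1$, which happens with probability $1/C_{n/2}^\beta$, and then flip exactly one of the $k\ge 1$ movable small jobs, which happens with probability at least $\tfrac{k}{n}(1-\tfrac1n)^{n-1}\ge \tfrac{k}{en}$; together this is a factor $C_{n/2}^\beta$ smaller than for \fastianp{}, which explains the extra $C_{n/2}^\beta$ in the bound. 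Since at most $2^{2/\epsilon}$ local optima of distinct fitness fail to be $(1+\epsilon)$ approximations (Property~2.4 in~\cite{CorusOlivetoYazdaniAIJ2019}), summing this cost over the at most $2^{2/\epsilon}+1$ fitness bands bounds the expected number of generations with $X_t\notin\mathcal{L}$ by the first term $2(C_{n/2}^\beta)en^2(2^{2/\epsilon}+1)$.

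Next I would bound the generations spent on local optima, under the worst-case weight assumption $\sum_{i=s+1}^n p_i<\tfrac12\sum_{i=1}^n p_i$ of~\cite{CorusOlivetoYazdaniAIJ2019} (otherwise every local optimum is already a $(1+\epsilon)$ approximation). The crucial estimate is that from any $X_t\in\mathcal{L}$ a single mutation lands on a $(1+\epsilon)$ approximation with probability at least $p(\chi^*)\cdot q$, where $p(\chi^*)=(\chi^*)^{-\beta}/C_{n/2}^\beta$ with $\chi^*=\Theta(n(\epsilon-\epsilon^2))$ is the probability of drawing the near-optimal rate, so that $1/p(\chi^*)=C_{n/2}^\beta (n(\epsilon-\epsilon^2))^\beta$, and $q=\Omega\!\big(\epsilon^{-1}(\epsilon-\epsilon^2)^{2/\epsilon}\big)$ is the conditional probability of flipping a suitable rebalancing set of small jobs. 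I would obtain $q$ by re-deriving the combinatorial counting of Theorem~6 in~\cite{CorusOlivetoYazdaniAIJ2019} for a simultaneous multi-bit flip (where the rebalancing configuration is reached in one shot) rather than for the bit-by-bit \hypfcm{} sampling (which the original proof analyses via the Ballot theorem, Theorem~\ref{thm:ballot}). Inverting $p(\chi^*)\cdot q$ and, exactly as in Theorem~\ref{thm:approxais}, multiplying by the at most $2^{2/\epsilon}$ local optima yields the second term $(C_{n/2}^\beta)(n(\epsilon-\epsilon^2))^\beta\cdot\epsilon\cdot(\epsilon-\epsilon^2)^{-2/\epsilon}$; adding the two phase bounds gives the claimed total.

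The hard part is the escape estimate. Because standard bit mutation flips a whole set of bits at once, I cannot reuse the Ballot-theorem walk; instead I must (i) verify that the near-optimal rate $\chi^*=\Theta(n\epsilon)$ lies inside the support $\{1,\dots,n/2\}$ of the power law and is drawn with the stated probability---this is where the hypothesis $\epsilon=\omega(1/\sqrt n)$ enters, to guarantee $\chi^*\ge 1$ and to keep the number of moved jobs concentrated---and (ii) show that, conditioned on this rate, a $(1+\epsilon)$ rebalancing configuration is produced with probability $\Omega(\epsilon^{-1}(\epsilon-\epsilon^2)^{2/\epsilon})$ while no penalising extra moves spoil it. Carefully transferring the $(\epsilon-\epsilon^2)^{2/\epsilon}$ counting from the incremental \hypfcm{} process to a one-shot multi-bit flip is the only genuinely new calculation; everything else follows the template of Theorem~\ref{thm:approxais}.
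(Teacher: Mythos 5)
Your overall template is the same as the paper's: adapt Lemma~\ref{lem:local} by inserting the $1/(C_{n/2}^\beta e)$ probability of a single-bit flip (giving the $2(C_{n/2}^\beta)en^2\cdot(2^{2/\epsilon}+1)$ term), then bound the time on local optima via the probability of drawing the rate $\chi^*=n(\epsilon-\epsilon^2)$, which is $(n(\epsilon-\epsilon^2))^{-\beta}/C_{n/2}^\beta$. Up to that point your proposal matches the paper. However, your treatment of the escape phase has two genuine problems.

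First, your conditional probability $q=\Omega\bigl(\epsilon^{-1}(\epsilon-\epsilon^2)^{2/\epsilon}\bigr)$ appears to be reverse-engineered from the theorem statement rather than derivable. The actual argument (both here and in Theorem~6 of~\cite{CorusOlivetoYazdaniAIJ2019}) is: conditioned on mutation rate $\epsilon-\epsilon^2$, all of the at most $2/\epsilon$ large jobs match the target configuration $H$ with probability at least $(\epsilon-\epsilon^2)^{2/\epsilon}$, and the weight of small jobs transferred to the fuller machine is acceptable with probability at least $\epsilon$ by a Markov-type argument on the expected transferred weight $(\epsilon-\epsilon^2)W/2$. This yields $q\geq \epsilon\cdot(\epsilon-\epsilon^2)^{2/\epsilon}$; no re-derivation of the counting for a one-shot flip produces the factor $\epsilon^{-1}$ you posit, so this step of your plan would fail. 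Second, and more importantly, you multiply the waiting time by $2^{2/\epsilon}$ ``exactly as in Theorem~\ref{thm:approxais}.'' The paper's proof explicitly does \emph{not} do this, and this is the one genuinely new ingredient of its argument: for the \foea{} all mutating bits flip simultaneously, so a prior improvement cannot interrupt the mutation the way FCM truncates a hypermutation, and therefore the success probability $p(\chi^*)\cdot q$ holds uniformly in every generation spent on any local optimum, with no need to restart the waiting-time argument for each of the $2^{2/\epsilon}$ local-optima fitness levels. With your extra $2^{2/\epsilon}$ factor (and with either version of $q$), your own arithmetic does not reproduce the second term of the claimed bound, so the derivation is internally inconsistent. (In fairness, the paper itself has an $\epsilon$ versus $\epsilon^{-1}$ mismatch between its statement and its proof of this theorem, which may be what led you to the incorrect $q$; but the missing ``no-interruption, no-repetition'' observation is a substantive gap in your proposal, not a typographical one.)
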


\begin{proof}
We can adapt Lemma~\ref{lem:local} to the \foea by modifying the expected runtime between local optima 
to reflect the probability of flipping exactly $1$-bits. The corresponding expected runtime between local 
optima for \foea would be at most $2 (C_{n/2}^\beta) e n^2$ since it flips a single 
bit with probability at least $1/(C_{n/2}^\beta e)$.

Thus, the expected time that the \foea spends where $X_t \notin \mathcal{L}$ is 
$2C_{n/2}^\beta en^2\cdot(2^{2/\epsilon}+1)$. Now, we will bound from below the probability that the
\foea finds an approximation in time $t$  given that $X_t \in \mathcal{L}$. 
Similarly to the proof of Theorem~\ref{thm:approxais}, we will refer to Property 2.2 in~\cite{CorusOlivetoYazdaniAIJ2019} 
and establish that if $X_t \in \mathcal{L}$ then the fuller machine has no small jobs assigned to 
it unless it is already a $(1+\epsilon)$ approximation. With probability $C_{n/2}^\beta (n( 
\epsilon-\epsilon^2))^{-\beta}$ the \foea will apply standard bit mutation with mutation rate 
$\epsilon-\epsilon^2$.
Using the same notation of Theorem~\ref{thm:approxais} and assuming 
$\epsilon<1/2$, with probability at least $C_{n/2}^\beta (n( 
\epsilon-\epsilon^2))^{-\beta}(\epsilon-\epsilon^2)^{2/\epsilon}$ all large jobs will 
be assigned according to their configuration in $H$. Since all small jobs are on the same machine 
in the parent solution and each bit is flipped with probability $\epsilon-\epsilon^2$, the expected 
total weight transferred from the emptier machine to the  fuller machine is at most 
$\frac{(\epsilon-\epsilon^2)\cdot W}{2}$. The rest of the analysis is identical except for the part 
where we consider the possibility that, even though a successful hypermutation is bound to happen, 
 a prior improvement prevents the $n(\epsilon-\epsilon^2)$ bit-flips to occur. 
This scenario does not take place for the \foea because all mutating bits flip simultaneously, thus with probability 
$C_{n/2}^\beta (n(\epsilon-\epsilon^2))^{-\beta}(\epsilon-\epsilon^2)^{2/\epsilon}\cdot 
\epsilon$ an approximation is obtained and we 
do not have to pessimistically repeat this argument for all $x \in \mathcal{L}$ as we have to for the 
\fastia{}. 
\end{proof}

Even though the \foea is slower at jumping over large basins of attraction, its expected runtime for \partition is dominated by the expected time spent in the hillclimbing phases.
Indeed the bounds on the expected runtimes during exploitation of the \foea and the \fastianp are asymptotically the same
(i.e., they differ in the former having an extra
$C_{n/2}^\beta =\Theta(1)$ factor and the latter an
 extra factor of $O(1+\gamma\log{n})$ which 
is $O(1)$ for $\gamma=1/\log{n}$). 
Concerning the terms related to the expected times to escape from local optima, the \fastianp has an asymptotically smaller term of 
$2^{2/\epsilon}\cdot \frac{n}{\gamma}$ 
compared to the $(n( \epsilon-\epsilon^2))^{\beta}$ term for some constant $\beta>1$ for the \foea. We should note here that the 
$2^{2/\epsilon}$ factor (i.e., exponential in $1/\epsilon$) may appear to possibly make a crucial difference for small constant approximations in practice. However,
on one hand, this is likely to be overly pessimistic since it assumes that whenever the hypermutation is 
about to find an approximation, another improvement prevents it from flipping the necessary number 
of bits. On the other hand, the exponential factor nevertheless appears for both algorithms in the dominating term related to the hillclimbing phases.

We now highlight a huge advantage of the \foea over the {\it fast} HMP operators when escaping local optima in conjunction with ageing by accepting solutions of lower quality.
In Section \ref{sec:fastoptia} we proved that the \fastoptia optimises the Cliff$_d$ function efficiently, if the parameter $\gamma$ of the \hypfcm is set to extremely small values in the order of $\Theta(\gamma=1/n\log^2 n)$ (Theorem \ref{thm:cliff}). As a result the algorithm very rarely evaluates solutions where more than one bit is flipped i.e., it essentially does not hypermutate anymore. The following theorem shows how the \foea can optimise the function efficiently while still mutating many bits very often i.e., it  hypermutates. The result comes at the expense of slightly increasing the power law parameter to a constant $\beta > 2$ and at the expense of a square root term in the upper bound of the expected runtime instead of the logarithmic term that appears in the expected runtime of the \fastoptia with small $\gamma$. Nevertheless, although not optimal for \textsc{Jump}$_d$, with such a parameter setting the algorithm is only a constant factor slower for the \textsc{Jump}$_d$ instances for which it is very efficient (i.e., $d= \Theta(1)$).
%
\begin{theorem}\label{th:fmutcliff}
The \foea with hybrid ageing parameter $\tau= \Omega(n \log n)$  and $\beta\geq 2+\epsilon$ needs 
$O(\tau\cdot n^{3/2})$ fitness function evaluations in expectation to optimise \textsc{Cliff} with 
any linear $d \leq n(1/4-c)$ for any arbitrarily small positive constants $\epsilon$ and 
$c$. 
\end{theorem}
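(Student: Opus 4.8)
The plan is to adapt the escape analysis of Theorem~\ref{thm:cliff} to the power-law operator, replacing the role played there by the tiny $\gamma$ (which suppressed the \emph{evaluation} of large mutations) with a direct tail bound on the number of bits the \foea actually flips. First I would show that the algorithm reaches the local optimum (the top of the cliff, at $n-d$ one-bits) in $O(n\log n)$ expected generations: since the \foea flips a single bit with constant probability and, on the first \textsc{OneMax} slope, such a flip improves with probability $\Theta(d/n)=\Theta(1)$ for linear $d$, while $\tau=\Omega(n\log n)$ is large enough that ageing does not trigger beforehand with high probability, exactly as in Theorem~\ref{th:FoptOneMax}. This phase contributes only the $O(n\log n)$ term, which is dominated by the escape cost.

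The core of the proof is the ageing-driven escape, which I would organise as a sequence of independent attempts. In each attempt the population first equalises in age on the local optimum and, after at most $\tau$ generations, a controlled die-off leaves a single surviving post-cliff solution (one with more than $n-d$ one-bits), precisely as in the proof of Theorem~\ref{thm:cliff}: a worsening offspring crossing the cliff is created with probability $\Theta(d/n)=\Theta(1)$, it survives the simultaneous removal of the locally optimal individuals with at least a $\Theta(1/(\mu\,dup))$ probability, and it resets its age by improving in the next generation. The genuinely new ingredient, absent from the \fastoptia analysis because there large mutations were essentially never evaluated, is to prove that such a young post-cliff individual then climbs the second slope to the global optimum rather than falling back over the cliff.

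The key estimate, and the main obstacle, is to bound the probability of a harmful ``jump-back''. Writing $\ell$ for the number of one-bits in excess of $n-d$, any mutation that returns to a pre-cliff solution of higher fitness must effect a net loss of at least $\ell$ one-bits, hence must flip at least $\ell$ bits in a single step. For the power-law distribution the probability of flipping at least $k$ bits is $O(k^{-(\beta-1)})$, so the per-step fall-back probability at level $\ell$ is $O(\ell^{-(\beta-1)})$. Here $\beta\geq 2+\epsilon$ is used decisively: the tail sum $\sum_{\ell}\ell^{-(\beta-1)}$ converges, so once the individual reaches distance $\ell=\Omega(\sqrt n)$ from the cliff the total probability of ever jumping back before reaching $1^n$ is $O(n^{(2-\beta)/2})=o(1)$, while the dangerous near-cliff band $1\le\ell\le\sqrt n$ is crossed by improving single-bit flips (each of probability $\Theta(1)$) that outrun the harmful jumps with at least constant conditional probability. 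Combining the per-attempt success probability with the $\Theta(\tau)$ cost of waiting for ageing in each attempt, and bounding the number of attempts pessimistically, yields the stated $O(\tau\,n^{3/2})$ bound; the $\sqrt n$-type factor is exactly the price of handling the near-cliff band without the evaluation suppression that a minuscule $\gamma$ afforded the \fastoptia.

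Finally I would attend to two subtleties. Unlike the sequential \hypfcm of Theorem~\ref{th:nfcmneg}, the \foea flips all selected bits simultaneously, so the Ballot-theorem argument used there is unavailable; the fall-back bound must instead rest on the power-law tail together with a concentration argument (Serfling or Chernoff) for the signed bit changes, using that the expected net change of a rate-$\chi/n$ mutation on a post-cliff string is negative since $d<n/4$, which is what forces large jumps to overshoot back across the cliff. I must also verify that ageing does not prematurely kill the climbing individual: at each level the expected waiting time for an improving single-bit flip is $O(n)\ll\tau$, so with high probability every improvement resets the age before the threshold is reached and the climb proceeds uninterrupted.
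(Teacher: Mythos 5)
Your overall strategy matches the paper's: the same phase structure (hill-climb to the local optimum in $O(n\log n)$, ageing-driven escape attempts as in Theorem~\ref{thm:cliff}, then a race on the second slope between constant-probability single-bit improvements and jump-backs), and the same key estimate, namely that a fall-back from $\ell$ one-bits above the cliff requires a mutation of size at least $\ell$, whose probability under the power law is $O(\ell^{1-\beta})$, with $\beta\geq 2+\epsilon$ ensuring summability. Where you genuinely differ is the per-attempt bookkeeping. The paper first proves that the conditional probability of improving before a harmful jump is monotone increasing in the level, and then uses a doubling argument: each doubling of the distance to the cliff succeeds with probability at least $1/e$, and $\log(n/4)$ doublings are needed, so one attempt succeeds with probability $\Omega(n^{-3/2})$ --- this doubling argument is exactly, and only, where the $n^{3/2}$ in the theorem comes from. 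You instead sum conditional failure probabilities directly, splitting at $\ell=\sqrt{n}$; since $\sum_{\ell}\ell^{1-\beta}$ converges for $\beta>2$, this correctly yields a \emph{constant} per-attempt success probability, so your route, carried through, proves the stronger bound $O(\tau+n\log n)$, which a fortiori implies the stated one.

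This, however, exposes an internal inconsistency in your write-up: you cannot simultaneously claim that the near-cliff band is crossed with constant conditional probability and that ``the $\sqrt{n}$-type factor is exactly the price of handling the near-cliff band.'' Under your own estimates the band costs only a constant factor, the expected number of attempts is $O(1)$, and no step of your argument produces $n^{3/2}$; the sentence deriving the stated bound from your estimates is a non sequitur (harmless for an upper bound, but the attribution of the $n^{3/2}$ factor is wrong). Two smaller points. First, the Serfling/Chernoff concentration you invoke in the last paragraph is unnecessary: both the paper and your own key estimate pessimistically count \emph{every} mutation of size at least $\ell$ as a potential fall-back, which suffices for the upper bound, so nothing needs to replace the Ballot theorem here. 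Second, your $O(n^{(2-\beta)/2})$ bound on the total jump-back probability beyond the band implicitly assumes the improvement probability stays $\Omega(1)$; near the top of the slope it decays to $\Theta(1/n)$, which adds a contribution of order $O(n^{2-\beta}\log n)$ --- still $o(1)$ for $\beta\geq 2+\epsilon$, so your conclusion stands, but the step should be stated.
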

\begin{proof}
 The process until the cliff point is sampled for the first time is identical to the previously 
analysed algorithms with ageing. We will now consider the probability that the \foea applies a 
mutation with size at least $k$ while it is $k$ ahead of the cliff point, \emph{i.e.}, it has $d-k$ 
$0$-bits. In particular we will consider the probability that  this mutation occurs before an 
improvement is found for each $k$. We will then divide the runtime into two cases according to 
whether $d-k$ is in the order of $o(n)$ or $\Omega(n)$. For the $\Omega(n)$ regime, we will 
establish the probability of decreasing the number of $0$-bits from $d-k$ to $d-2k$ to obtain a 
lower bound on the probability of leaving the $\Omega(n)$ regime by doubling the current best 
solution's distance to the cliff edge $\log{n}$ times. Finally, we will bound the conditional 
improvement probability with the assumption $d-k=o(n)$.

We first focus on the the power law distribution of the mutation size $m$,
\allowdisplaybreaks
\begin{align*}
Pr \{ m\geq k\}&=\sum_{i=k}^{n}i^{-\beta}/C^{\beta}_{n}\\
&\leq \frac{1}{C^{\beta}_{n}}\int_{k-1}^{n}x^{-\beta}\cdot dx = 
\frac{x^{1-\beta}}{1-\beta}|^{n}_{k-1}\\
&\leq\frac{1}{\beta-1} \left( (k-1)^{1-\beta}-n^{1-\beta} \right)\\
&\leq \frac{(k-1)^{1-\beta}}{\beta-1} 
\end{align*}
We now consider the conditional probability that for the current solution $x$ with $d-k=\Omega(n)$ 
$0$-bits to apply a mutation with size larger than $k$ before it improves. The improvement 
probability is at least $\frac{d-k}{n\cdot C^{\beta}_{n}}=:p_k=\Omega(1)$ and for $k>1$, the conditional probability of improving before a large mutation is:
\begin{align*}
&\frac{p_k}{p_k+\frac{(k-1)^{1-\beta}}{\beta-1} } 
=1-\frac{\frac{(k-1)^{1-\beta}}{\beta-1}}{p_k+\frac{(k-1)^{1-\beta}}{\beta-1} } \\
&\geq 1-\frac{(k-1)^{1-\beta}}{p_k\cdot (\beta-1)} \\
\end{align*}
We will now consider probability that there will be $k$ improvements starting from a current 

solution with $d-k$ $0$-bits, but first we need to show that the above conditional probability is increasing  with $k$.

\begin{align*}
\left(1-\frac{(i)^{1-\beta}}{p_{i+1}\cdot (\beta-1)} 
\right)-\left(1-\frac{(i-1)^{1-\beta}}{p_i\cdot (\beta-1)} 
\right)\\
=\frac{(i-1)^{1-\beta}}{p_i\cdot (\beta-1)}-\frac{(i)^{1-\beta}}{p_{i+1}\cdot (\beta-1)} \\
=\frac{n\cdot C_{n}^{\beta}}{\beta-1}\cdot\left(\frac{(i-1)^{1-\beta}}{d-i}-\frac{(i)^{1-\beta}}{d-i-1} \right)
\end{align*}
Since the $\frac{n\cdot C_{n}^{\beta}}{\beta-1}$ term is positive, we are only interested in the sign of the remaining part being positive.

\begin{align*}
\allowdisplaybreaks
0&\leq\left(\frac{(i-1)^{1-\beta}}{d-i}-\frac{(i)^{1-\beta}}{d-i-1} \right)\\
\frac{i^{1-\beta}}{d-i-1}&\leq\frac{(i-1)^{1-\beta}}{d-i}\\
\frac{i^{1-\beta}}{(i-1)^{1-\beta}}&\leq\frac{d-i-1}{d-i}\\
\left(\frac{i-1}{i}\right)^{\beta-1}&\leq\frac{d-i-1}{d-i}\\
\left(1-\frac{1}{i}\right)^{\beta-1}&\leq1-\frac{1}{d-i}\\
\left(1-\frac{1}{i}\right)^{(\beta-1)\cdot \frac{i}{i}}&\leq \left(1-\frac{1}{d-i}\right)^{\frac{d-i}{d-i}}\\
e^{-\frac{\beta-1}{i}} &\leq e^{-\frac{1}{d-i}}\\
\frac{\beta-1}{i} &\geq \frac{1}{d-i}\\
\beta &\geq 1+ \frac{i}{d-i}.
\end{align*}
Thus, for $k=o(n)$ and $\beta \geq 1+\epsilon$ for an arbitrarily small constant $\epsilon$, the conditional probability of improving before flipping at least $k$ bits increases with $k$. The probability that there will be $k$ improvements starting from a current 
solution with $d-k$ $0$-bits is therefore  at least $\left(1-\frac{(i-1)^{1-\beta}}{p_k\cdot (\beta-1)} 
\right)^k \geq 1/e$ for $\beta>2-\log_{k}{p_k}$ (We can exclude any constant number of first steps 
which improves successfully with probability $\Omega(1)$ while allowing $-\log_{k}{p_k}$ to be 
arbitrarily small.). Thus, for  $d-k=o(n)$ we double the current improvement with respect to the 
cliff edge before losing our current best solution  with probability $1/e$. Since we cannot double 
our improvement more than $\log{(n/4)}$ times before $d-k=o(n)$, with probability at least 
$(1/e)^{\log{(n/4)}}=\Omega(n^{-3/2})$ we obtain a solution with $d-k=o(n)$. 

For $d-k=o(n)$, the conditional probability of improving before losing progress is:
$1-\frac{(k-1)^{1-\beta}}{p_k\cdot (\beta-1)}=1-O(n^{1-\beta})$ and for $\beta>2$ the algorithm 
climbs the OneMax slope in $O(n\log{n})$ iterations without losing progress with  probability at 
least $1-o(1)$. The only subconstant success probability after the process leaves the 
Cliff edge is $\Omega(n^{-3/2})$, thus the expected time can be bounded by $O(\tau\cdot n^{3/2})$.

%
%

 \end{proof}

\subsection{Power-Law Hypermutations}
In the previous subsection we highlighted two advantages of the power law heavy tailed mutation operator of the \foea over the {\it fast} HMP operator introduced in this paper.
Firstly, the former operator jumps out of local optima with small basins of attraction faster by a logarithmic factor at the expense of being slower for larger basins of attraction.
Secondly, it allows to escape local optima together with ageing by accepting solutions of lower fitness while still keeping quite high mutation rates i.e., the \fastoptia has to reduce it to at most that of SBM. These advantages are due to the capability of the power-law distribution of balancing well the number of large and 
small mutations.
In this subsection we will identify an ``optimal'' evaluation distribution for the {\it fast} HMP operator such that it can take advantage from the balancing capabilities of the power-law distribution while keeping its own advantages when larger basins of attraction have to be overcome.

In particular, considering the power-law distribution's poor performance for \textsc{Jump}$_d$
functions with gap sizes of $d =\Omega(\log^{\frac{1}{\beta-1}} n)$, and especially for $d=n(1-o(1))$, 
we keep the symmetry of the {\it fast} HMP operator around $n/2$ bit flips, but increase and decrease the evaluation probabilities away and to $n/2$ following a power-law. 
Just like in the Opt-IA literature, we will present variants with and without FCM and call the power-law HMPs FCM$_\beta$ and HMP$_\beta$, and the resulting algorithms Fast (1+1)~IA$_\beta$ and Fast Opt~IA$_\beta$ respectively, according to whether they use populations and ageing or not (we will see that the performance of FCM$_\beta$ and that of HMP$_\beta$ are approximately equivalent so we intentionally do not state whether the Fast (1+1)~IA$_\beta$ uses one operator or the other as it does not affect the results we present i.e., either can be used).


Recall that the parameter $\beta$ of the \foea is assumed to be a constant strictly greater than $1$ to ensure 
that the sum $\sum^{n/2}_{i=1}i^{-\beta}$ is in the order of $O(1)$. Thus, 
any  particular mutation rate $\chi$ has a probability of being picked in the order of 
$\Theta(\chi^{-\beta})$. 
Notice that if 
we were to set the parameter to 
$\beta=1$, the power-law mutation operator would have a very similar behaviour to that of the \fastianp. In particular, the resulting operator 
would pick a mutation rate $\chi$ with probability $1/(\chi\ln{n})$. 
%

Similarly, 
\hypfcm with $\gamma=1$ evaluates a solution with Hamming distance $k\neq 1$ away from the parent with 
probability $1/k$ and every call of the operator evaluates roughly $\ln{n}$ solutions in 
expectation. 
Thus, when compared over $\Theta(\log{n})$ consecutive fitness function evaluations, the 
expected number of offspring $k$ bits away from their  parent are in the same asymptotic order. 
 However, the parameter $\gamma$ of the \fastianp scales the frequency of evaluations at Hamming distance $k$ by 
the same multiplicative factor for all $k$, while the parameter $\beta$ of the \foea controls the 
emphasis on the smaller mutations. In particular, for $k \in \{2,\ldots,\frac{n}{2}-1\}$ changing 
$\beta$ changes the conditional probability of flipping $k$ bits given that either $k$ or $k+1$ 
bits are flipped, while changing $\gamma$ still conserves the ratio of sampled solutions with 
distance $k$ and $k+1$. 

These considerations lead us to believe that the ideal symmetric distribution for the HMP operator is a power-law one, where we move the probability mass further towards $\omega(1)$ bit flips, compared to the \foea:
\[
p_i := \frac{(\min \{i+1,n-i+1\})^{-\beta}}{\sum_{k=0}^{n}(\min\{k+1,n-k+1\})^{-\beta}}.
\]
Here the parameter should be set such that $\beta \geq 1$. 

With $\beta=1$, the probability distribution for  $i >1$ is identical to that of FCM$_\gamma$ for the parameter value we have used throughout the paper i.e., $\gamma= 1/\log n$. Notice that for $\beta=0$ the probability that $i$ bits flip is uniformly distributed at random i.e., the operator becomes very similar to that of the (1+1)~EA$_{\textsc{unif}}$. We have discussed why this is an inconvenient distribution in the previous section.
Note that the original heavy-tailed mutation operator first picks the mutation rate with which each bit position is flipped independently. Since we directly pick the number of bit-positions to be flipped, we assign a positive probability to not flipping any bits. This allows the operator to copy the best individuals and  plays a 
critical role in the performance of population based algorithms~\cite{Witt2006,CorusOlivetoGecco2019,LehrTEVC2018,CorusOlivetoTEVCsteadyGA2017,JumpTEVC2017}.

The operator behaviours, with and without FCM, 
are similar but not identical. While the HMP$_\beta$ operator evaluates exactly one new offspring per operation, 
the number of evaluated solutions  per hypermutation of the FCM variant, FCM$_\beta$, is randomly distributed with expectation $1$ (i.e., more than one evaluation - or zero - may occur in one hypermutation: the behaviour is exactly the same as in Definition 1 but using the power law distribution). A comparison between the power-law distributions of the mutation operators of the (1+1)~EA$_\beta$, the symmetric ones of the (1+1)~IA$_\beta$, the (1+1)~IA$_{\textsc{unif}}$ and the traditional SBM are shown in Figure~\ref{hp}.  Note that for the (1+1)~EA$_\beta$ we have extended the probability distribution range from~~\cite{Doerretal2017} to $n$ and considered the variant which flips exactly $k\in[n]$ bits after the mutation size is determined (similarly to what has been considered in~\cite{FGQWPPSN18}) rather than independently flipping all bit positions with probability $k/n$. 

\begin{figure}[t!]
\centering
  \includegraphics[width=.35\textwidth]{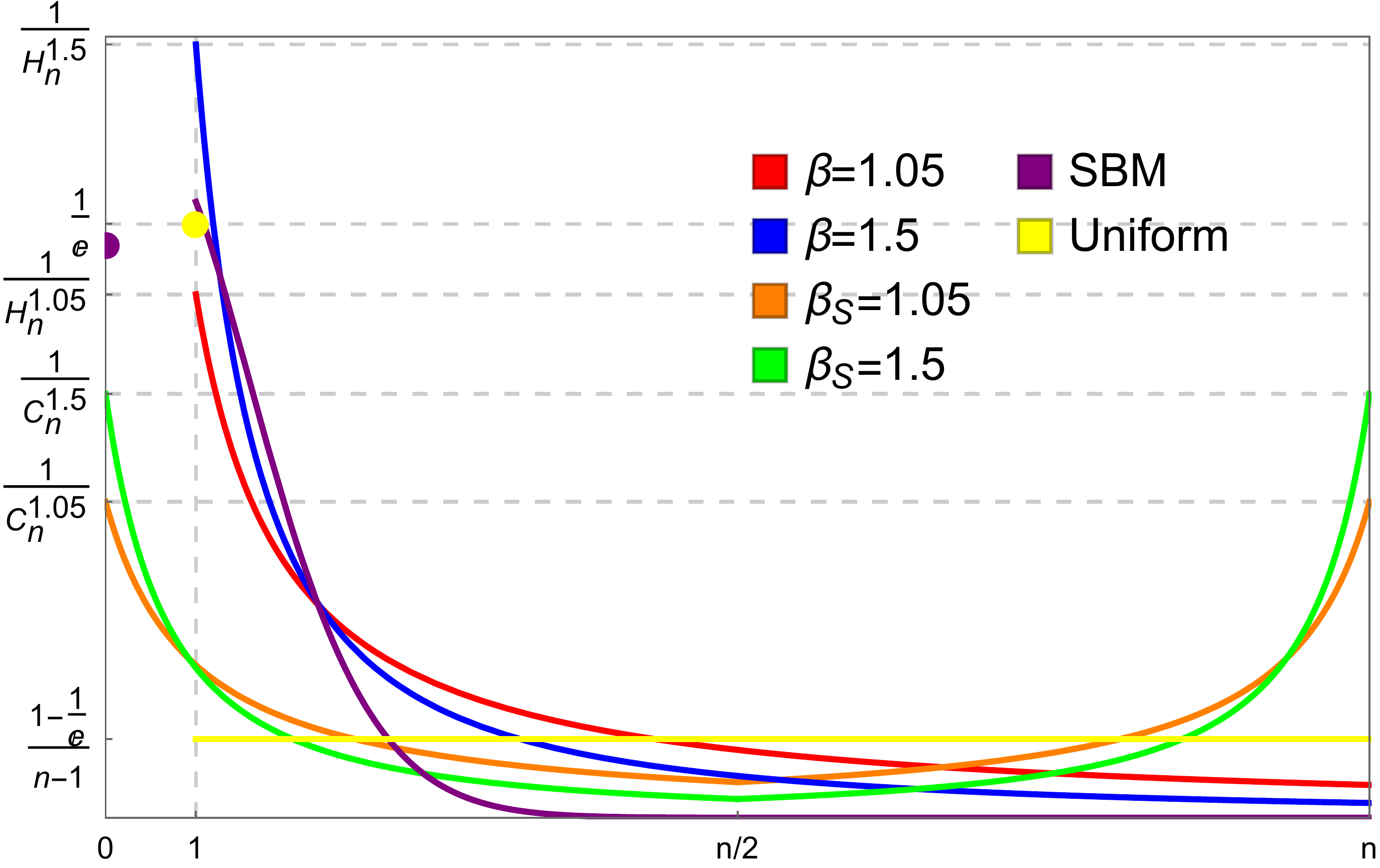}
 \caption{The probability of flipping exactly $k$ bits for the extended heavy-tailed mutation operator of Fast (1+1)~EA$_\beta$ (red and blue) and the symmetric heavy tailed mutation operator of Fast (1+1)~IA$_\beta$ (green and orange) for different $\beta$ values. The SBM used by standard EAs (purple) and the uniform heavy tailed mutation of Fast (1+1)~EA$_{\textsc{unif}}$~\cite{FQWGECCO18} with $p=1/e$ (yellow) are added for comparison. 
 The input size is set to $n=14$ for visualisation.}
 \label{hp} 
 \end{figure}

 \begin{figure}[t!]
 \centering
  \includegraphics[width=.3\textwidth]{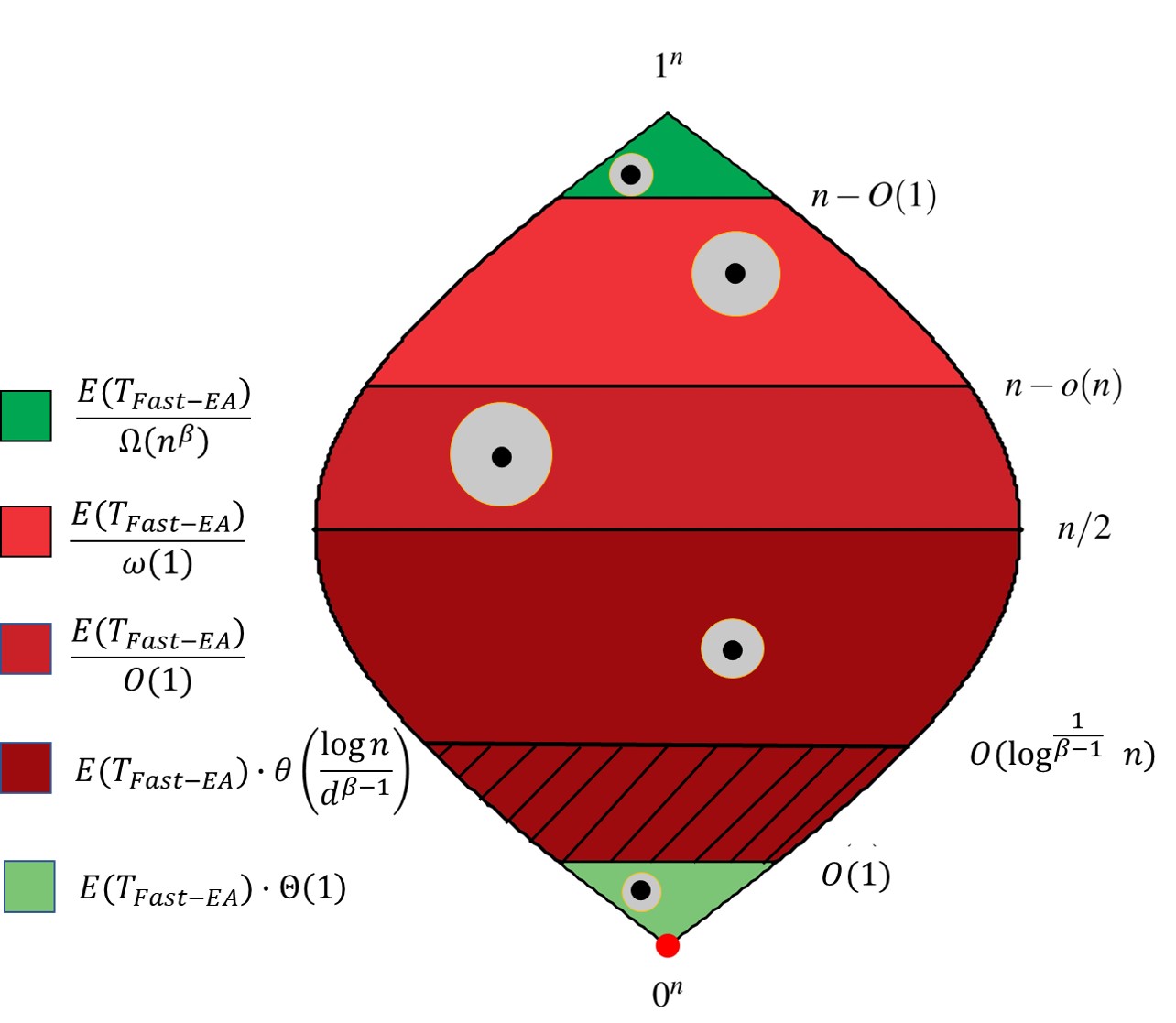}
 \caption{A description of the performance comparison of the Fast (1+1)~IA$_\beta$ and the Fast (1+1)~EA$_\beta$ at escaping from a local optimum placed on the
 hypercube at $0^n$ w.l.o.g. The global optima (and basins of attraction of any fitness quality) are located in example positions. For both algorithms the same $\beta>1$ holds for all regions except for the darkest red area. For the latter area, the Fast (1+1)~IA$_\beta$ uses the best possible parameter value $\beta=1$. For equal $\beta>1$, the Fast (1+1)~IA$_\beta$ would be a constant factor slower than the Fast (1+1)~EA$_\beta$. For both parameter setting cases, the Fast (1+1)~EA$_\beta$ asymptotically outperforms the  Fast (1+1)~IA$_\beta$ in the shaded area only.}
  \label{fig:hc}
 \end{figure}

Figure~\ref{fig:hc} shows a comparison of the expected runtimes of the (1+1)~IA$_\beta$ and the (1+1)~EA$_\beta$ to escape from local optima with different basins of attraction. Without loss of generality we assume that the local optimum is located at the $0^n$ bit-string (i.e., the red dot).
Let us denote with $y\in\{0,1\}^n$ the unique global optimum which has a higher fitness value than $x$ and $k:=HD(x,y)$. The black dots represent different potential positions in the search space for the global optimum.
The circles around the potential global optima represent basins of attraction which may or may not have higher fitness than the local optimum. These are nevertheless reachable via ageing by accepting lower quality solutions (as we have shown for \textsc{HiddenPath} and \clifff).

 Regardless of the mutation operator employed by the algorithm, the probability that $x$ is mutated into $y$ is at most $\binom{n}{k}^{-1}$ since for an unbiased mutation operator all individuals with distance $k$ to the parent have an equal probability to be sampled and $\binom{n}{k}$ is the number of individuals with Hamming distance $k$ to $x$. Note here that the binomial coefficients satisfy $\binom{n}{k}=\binom{n}{n-k}$ for all $k\leq n$. Thus, if both $k$ and $n-k$ are in the order of $\omega(1)$, the mutation probability is superpolynomially small and the jump from $x$ to $y$ has a superpolynomial expected time (i.e., the shades of red areas in the figure). Even if we relax our scenario such that the solution $y$ has a basin of attraction of a constant size, \emph{i.e.}, all individuals $z \in \{0,1\}^n$ with $HD(y,z)<d$ for some constant $d$ lead to $y$ by hillclimbing, the expected time to escape the local optima would still be super-polynomially large.  For this reason we modify the distribution over $[n]\cup\{0\}$ used to determine how many bits the heavy-tailed mutation operator will flip. We shift the probability mass from the middle to the extremities (\emph{i.e.}, from around $n/2$ to near $0$ and $n$): away from mutation sizes where a polynomial expected time is not possible.

Overall, for any $k=\Theta(1)$ the heavy-tailed mutation operator in \cite{Doerretal2017} is only faster by a constant factor than the newly suggested power-law symmetric operators at escaping the local optimum. Only for $k = O(\log^{\frac{1}{\beta -1}}n)$ (i.e., the shaded area in the figure), it is slightly asymptotically faster where both operators have super-polynomial expected runtime.  On the other hand, for all other distances of the basin of attraction of the global optimum, the symmetric power-law mutation operator is faster. In particular, the heavy-tailed operator is a polynomial factor slower than the symmetric one
when $n-k$ is in the order of $o(n)$, including for $n-k= O(1)$ where the expected runtimes of the  operators are polynomial. 
Hence, for ranges of $k$ where a polynomial expected waiting time is possible, the heavy-tailed operator of the (1+1)~EA$_\beta$ is either faster by only a constant factor than the symmetric one (i.e., when $k$ is constant) or slower by a polynomial factor (i.e., when $n-k$ is a constant). 
We point out that if in the ``super-polynomial space'' (i.e., the red areas in the figure) the basins of attraction were large enough to allow for polynomial expected waiting times, then the Fast (1+1)~IA$_\beta$ would still be faster than the \foea except for basins that fall into the diagonally shaded area.

Compared to the \fastianp, the Fast (1+1)~IA$_\beta$ is faster for all jump sizes for appropriate parameter settings 
(i.e., $\beta=1$ for $k<\log^{\frac{1}{\beta -1}}n$ and $\beta >1$ otherwise) at the expense of being a constant factor slower at hillclimbing for the suggested values of $\beta$ (i.e., close to $\beta=1$). 
In particular, the (1+1)~IA$_\beta$ is a logarithmic factor faster than the \fastianp for jumps in the ``polynomial space" (i.e., the green areas in the figure).
%

Naturally, the described above scenario also includes the behaviour on the \jumpf function. 
%
The behaviour of the \hypheavy operator for escaping local optima combined with ageing, by accepting solutions of inferior fitness, requires a more precise analysis.
Theorem~\ref{th:fmutcliff} regarding the (1+1)~EA$_\beta$ with ageing for \textsc{cliff}$_d$ relies on the distribution over the mutation rate to monotonically decrease. Since the 
distribution of the symmetric operator 
starts increasing for mutation sizes larger than $n/2$, 
the result does not transfer directly the (1+1)~IA$_\beta$. In particular, large mutation rates may lead the algorithm to jump back to the local optima once it has escaped.
Since the previous results hold for gap sizes $d \leq (1-c)n/4$ for any constant $c$, we
will show that bit flips in the order of $n(1-o(1))$ only produce solutions with smaller fitness than those observed on the second slope of the function, \emph{i.e.}, solutions with more than $n-d$ $1$-bits. Hence the operator is efficient for the function class coupled with ageing. 
The following theorem shows that \hypheavy (or HMP$_\beta$) are better suited than FCM$_\gamma$ to be used in the complete Opt-IA since they have high mutation rates (i.e., they hypermutate) and work well in harmony with ageing, 
as originally desired in the design of the Opt-IA.

\begin{theorem}\label{th:newhypecliff}
The Fast (1+1)~IA$_\beta$~ with hybrid ageing parameter $\tau= \Omega(n \log n)$  and $\beta\geq 2+\epsilon$ needs 
$O(\tau\cdot n^{3/2})$ fitness function evaluations in expectation to optimise \textsc{Cliff}$_d$ with 
any linear $d \leq n(1/4-c)$ for any arbitrarily small positive constants $c$ and 
$\epsilon$.
\end{theorem}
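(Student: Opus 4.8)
The plan is to follow the proof of Theorem~\ref{th:fmutcliff} for the \foea as closely as possible, since the symmetric operator (HMP$_\beta$ or FCM$_\beta$) shares with the heavy-tailed operator of the \foea the crucial feature that it flips exactly one bit with constant probability $p_1=\Theta(1)$, and more generally flips exactly $i\le n/2$ bits with probability $\Theta(i^{-\beta})$ (after dividing by the normalising constant $Z=\Theta(1)$). Hence the part of the argument that carries the algorithm from a random start up to the cliff edge, triggers ageing to remove the locally optimal population, and begins climbing the second \textsc{OneMax} slope by single-bit improvements is essentially word-for-word the same. The only genuinely new phenomenon is that, unlike the monotone distribution of the \foea, the symmetric distribution places a \emph{constant} probability mass on mutation sizes $m>n/2$; thus the \foea estimate $\Pr\{m\ge k\}=O(k^{1-\beta})$ fails, and one must rule out that these frequent large mutations drag the current solution back across the cliff once it has escaped.

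First I would establish the structural fact that makes the large mutations harmless, which is exactly where the hypothesis $d\le n(1/4-c)$ enters. Let the current solution $x$ be post-cliff, i.e. $\textsc{OneMax}(x)=n-d+k'$ with $1\le k'\le d$, so that $f(x)=n-2d+k'+1/2>n/2$. Flipping $m$ bits of $x$ yields an offspring whose number of $1$-bits is at most $2n-\textsc{OneMax}(x)-m=n+d-k'-m$ whenever $m$ exceeds the number of $0$-bits $d-k'$. For any $m\ge 3d$ (in particular for the typical large mutations of size $n(1-o(1))$) this is at most $n-2d-k'<n-2d+k'+1=f(x)$, so the offspring is a pre-cliff solution of strictly smaller fitness and is rejected. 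Consequently a backward jump across the cliff to an \emph{accepted} solution is possible only with a mutation whose size lies in the moderate range $k'\le m<3d<3n/4$.

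The remaining step is to bound this backward-jump probability and then reuse the \foea machinery. I would split $\sum_{m=k'}^{3d-1}p_m$ at $n/2$: the terms with $m\le n/2$ contribute the convergent tail $\sum_{m\ge k'}\Theta(m^{-\beta})=O(k'^{\,1-\beta})$, while each term with $n/2<m<3d$ has $n-m+1=\Omega(n)$ and hence $p_m=O(n^{-\beta})$, so the $O(n)$ such terms contribute only $O(n^{1-\beta})$, which is dominated. Thus the backward-jump probability is $O(k'^{\,1-\beta})$, exactly the form of the bound used in Theorem~\ref{th:fmutcliff}. The single-bit improvement probability remains $p_1(d-k')/n=\Theta((d-k')/n)$, so the conditional ``improve before a backward jump'' probabilities and their monotonicity in $k'$ (valid for $\beta\ge 2+\epsilon$ and $k'=o(n)$) are identical to those of the \foea. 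The doubling argument across the regime $d-k'=\Omega(n)$ then succeeds with probability $\Omega(n^{-3/2})$, and in the regime $d-k'=o(n)$ the backward probability $O(n^{1-\beta})=O(n^{-1-\epsilon})$ is negligible against the per-step improvement probability, so the slope is climbed to $1^n$ without losing progress with probability $1-o(1)$. Charging $O(n\log n)$ generations to each of the $O(n^{3/2})$ expected ageing cycles gives the claimed $O(\tau\cdot n^{3/2})$ fitness evaluations.

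The main obstacle, and the only place where genuinely new work is required relative to Theorem~\ref{th:fmutcliff}, is the second paragraph ruling out harmful large mutations: one must verify that the constant mass the symmetric operator puts on $m>n/2$ is entirely absorbed by the deep pre-cliff region $\{x:\textsc{OneMax}(x)<n-d\}$. I expect the subtlety to lie in checking the boundary of the moderate range $[k',3d)$ carefully, namely the interplay between $k'$, the number of $0$-bits $d-k'$, and the acceptance threshold $n-2d+k'+1$, rather than in any of the inherited drift and doubling estimates, which transfer verbatim.
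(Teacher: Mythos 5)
Your proof is correct, and at the top level it is organised exactly as the paper's: everything up to the ageing-triggered descent over the cliff, the doubling argument with success probability $\Omega(n^{-3/2})$, and the final $O(\tau\cdot n^{3/2})$ accounting is inherited from Theorem~\ref{th:fmutcliff}, and the only genuinely new work is showing that the constant probability mass which the symmetric distribution puts on mutation sizes $m>n/2$ cannot produce accepted solutions behind the cliff. Where you differ from the paper is in how this is shown. The paper argues probabilistically: for a post-cliff parent with $j<d$ zero-bits and a mutation of size $k=n(1-o(1))$, an improving offspring forces the number of flipped zero-bits to exceed its expectation $kj/n$ by $\Omega(n)$ (here $d\le n(1/4-c)$ gives $n(k-d)-2kj\ge \frac{n^2}{4}(1-o(1))$), which is exponentially unlikely by a concentration bound; the remaining sizes that are $\Omega(n)$ but not $n(1-o(1))$ are then excluded by a union bound showing that, for $\beta\ge 2+\epsilon$, none is even sampled during the $O(n\log n)$ generations of the climb. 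You instead argue deterministically: for every $m\ge 3d$ the offspring has at most $n+d-k'-m\le n-2d-k'$ one-bits, hence is pre-cliff with fitness strictly below the parent's $n-2d+k'+1/2$ and is rejected outright, so only the range $k'\le m<3d$ is dangerous, and its total probability is $O(k'^{1-\beta})$ --- exactly the form of the losing-progress bound in Theorem~\ref{th:fmutcliff} --- with the hypothesis $d\le n(1/4-c)$ entering only to guarantee $n-m+1=\Omega(n)$, hence $p_m=O(n^{-\beta})$, for $n/2<m<3d$. Your variant buys simplicity and a slightly stronger statement: no Serfling/Chernoff-type deviation bound, no union bound over generations, one crisp threshold $3d$ in place of the paper's looser split between sizes $n(1-o(1))$ and $\Omega(n)\setminus n(1-o(1))$, and zero acceptance probability for large mutations rather than merely exponentially small improvement probability. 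Two cosmetic slips affect nothing: the post-cliff fitness is $n-2d+k'+1/2$, not $n-2d+k'+1$, and each ageing cycle costs $O(\tau+n\log n)$ generations rather than $O(n\log n)$ --- both leave the claimed $O(\tau\cdot n^{3/2})$ intact since $\tau=\Omega(n\log n)$.
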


\begin{proof}
 The process until the cliff point is sampled for the first
time is identical to the previously analysed algorithms with ageing. We will now establish that, given that the parent solution has less than $n-d$ $1$-bits, a mutation of size $n(1-o(1))$ yields an improvement with exponentially small probability. Let $j$ be the number of $0$-bits in the parent solution $x_0$ of the \hypheavy operator and $X$ be the number of $0$-bits that has been flipped to a $1$-bit up to and including the $k$th bit-flip, which is geometrically distributed with expectation $\frac{k\cdot j}{n}$. The number of $0$-bits in the solution sampled after the $k$th bit-flip, $x_k$, is therefore, $j+k-2X$. For the $x_k$ to have a better fitness value than $x_0$, $j+k-2X$ has to be either between $d$ and $d+j$ or smaller than $j$. 

\begin{align*}
j+k-2X &\leq d+j \\
\frac{k-d}{2} &\leq X\\
\frac{k-d}{2}-\frac{k\cdot j}{n}&\leq X-\frac{k\cdot j}{n}\\
\frac{n \cdot (k-d)-2\cdot k\cdot j}{n} &\leq X-E[X]
\end{align*} 
For $n\cdot(k-d)-2\cdot k\cdot j>0$, 
\begin{align*}
&\prob{X-E[X] \geq \frac{n(k-d)-2\cdot k\cdot j}{n}}\\&\leq\exp\left(-\frac{\left(n(k-d)-2\cdot k\cdot j\right)^{2}}{n}\right)
\end{align*}
We will next bound the expression $n\cdot(k-d)-2\cdot k\cdot j$, using our assumptions $k=n(1-o(1))$ and $j<d<\frac{n}{4}-c\cdot n$.

\begin{align*}
&n\cdot\left(k-d\right)-2\cdot k\cdot j>\\&>n\cdot\left(k-\frac{n}{4}+c\cdot n\right)-2\cdot k\cdot \ \left(\frac{n}{4}-c\cdot n\right)\\
&>n\cdot\left(n\left(1-o\left(1\right)\right)-\frac{n}{4}+c\cdot n\right)\\&\quad-2\cdot n\left(1-o\left(1\right)\right)\cdot \ \left(\frac{n}{4}-c\cdot n\right)\\
&>\frac{3n^2}{4}-\frac{n^2}{2}(1-o(1))>\frac{n^2}{4}(1-o(1)).
\end{align*}
Thus, starting from a solution with less than $d$ $0$-bits any mutation of size in the order of $k=n(1-o(1))$ has an exponentially small probability of yielding a solution with better fitness. 

The rest of the proof follows the proof of Theorem~\ref{th:fmutcliff}. Given that $\beta>2+\epsilon$, in the $O(n\log{n})$ generations required to climb the second slope, we never observe a mutation size in the order of $\Omega(n)\setminus n(1-o(1))$ with probability $1-o(1)$. The probability of losing progress while the number of $0$-bits in the cliff solution is in the order of $n(1-o(1))$, \emph{i.e.} when it is close to the edge of the cliff follows the same steps as in the proof of Theorem~\ref{th:fmutcliff} since the probability of improving and the probability of flipping $k<n/2$ are both divided by $2(1-o(1))$ due to the symmetric distribution, which implies that the conditional probability of improving before losing progress stays the same.

 \end{proof}

The performance of the (1+1)~IA$_\beta$ on the other functions analysed in the previous sections is straightforward to bound. For the $\textsc{Partition}$ problem the expected runtime differs by at most a constant factor from that of the \foea if both algorithms use the same $\beta$ parameter. 

\begin{theorem}
Let $S_{n}^{\beta}:=\sum_{i=0}^{n}(\min{(i+1,n-i+1)})^{-\beta}$. Then, the  Fast (1+1)~IA$_\beta$ finds a $(1+\epsilon)$ approximation for any \partition instance in 
$2(S_{n}^{\beta})en^2\cdot(2^{2/\epsilon}+1)+(S_{n}^{\beta})^{-1} (n( 
\epsilon-\epsilon^2))^{\beta}\cdot\epsilon\cdot(\epsilon-\epsilon^2)^{-2/\epsilon}
$ expected fitness function evaluations. (for any $\epsilon=\omega(1/\sqrt{n})$). 
\end{theorem}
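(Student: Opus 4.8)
The plan is to mirror the proof of Theorem~\ref{th:part-doerr} (the \foea analysis of \partition), exploiting that on mutation sizes below $n/2$ the symmetric power-law operator of the Fast (1+1)~IA$_\beta$ coincides, up to its normalising constant, with the heavy-tailed standard bit mutation of the \foea. I would work with the no-FCM variant HMP$_\beta$, which flips all chosen bits simultaneously and evaluates exactly one offspring per operation; this is precisely the feature that, as in Theorem~\ref{th:part-doerr}, prevents a prior improvement from aborting an escaping mutation and so removes the additional $2^{2/\epsilon}$ factor that burdens the FCM-based \fastianp in Theorem~\ref{thm:approxais}. A preliminary observation I would record is that $S_n^\beta=\sum_{i=0}^n(\min(i+1,n-i+1))^{-\beta}$ is only a constant factor larger than $C_{n/2}^\beta=\sum_{i=1}^{n/2}i^{-\beta}$, since each half of the symmetric sum contributes $\Theta(C_{n/2}^\beta)$; hence every constant in the \foea analysis stays in the same asymptotic order under the substitution $C_{n/2}^\beta\mapsto S_n^\beta$.

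For the hill-climbing phase I would adapt Lemma~\ref{lem:local}. The operator flips exactly one bit with probability $p_1=2^{-\beta}/S_n^\beta=\Theta(1/S_n^\beta)$, so the expected number of generations needed to reach a makespan no worse than that of the next local optimum is $O(S_n^\beta n^2)$, i.e. the factor $e$ of Lemma~\ref{lem:local} is replaced by $\Theta(S_n^\beta)$. Invoking the bound of at most $2^{2/\epsilon}+1$ distinct makespan values among local optima that are not $(1+\epsilon)$ approximations (Property~2.4 in~\cite{CorusOlivetoYazdaniAIJ2019}), the total expected time spent with $X_t\notin\mathcal{L}$ is $2S_n^\beta e n^2(2^{2/\epsilon}+1)$, which is the first term.

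For the escaping phase I would follow Theorem~\ref{th:part-doerr} step by step. When $X_t\in\mathcal{L}$ is not yet a $(1+\epsilon)$ approximation, Property~2.2 in~\cite{CorusOlivetoYazdaniAIJ2019} guarantees that the fuller machine carries no small jobs, and a successful escape requires flipping about $n(\epsilon-\epsilon^2)$ bits so that the large jobs attain the configuration $H$ and the weight transferred back is at most $(\epsilon-\epsilon^2)W/2$. The symmetric operator flips exactly $n(\epsilon-\epsilon^2)$ bits with probability $\Theta\big((S_n^\beta)^{-1}(n(\epsilon-\epsilon^2))^{-\beta}\big)$; crucially, for $\epsilon<1/2$ we have $n(\epsilon-\epsilon^2)<n/2$, so this probability is governed by the monotone-decreasing branch of the distribution and matches the \foea probability up to the normaliser. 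Multiplying by the placement probability $(\epsilon-\epsilon^2)^{2/\epsilon}\cdot\epsilon$ gives a per-iteration success probability of order $(S_n^\beta)^{-1}(n(\epsilon-\epsilon^2))^{-\beta}(\epsilon-\epsilon^2)^{2/\epsilon}\epsilon$, whose reciprocal is the second term; adding the two phases and noting that HMP$_\beta$ evaluates a single offspring per operation (so no wasted-evaluation factor is appended) yields the claim.

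The step I expect to be the main obstacle is controlling the symmetry of the distribution, which is exactly what separates this operator from the \foea. I would need to argue that (i) the escaping mutation size $n(\epsilon-\epsilon^2)$ stays strictly below $n/2$, so that the increasing branch above $n/2$ never competes with the intended escape, and (ii) the large mutations to which the symmetric tail assigns positive probability cannot derail the hill-climbing; the latter is benign because the algorithm is elitist and simply rejects any oversized, makespan-worsening offspring, so such steps only cost wasted iterations already counted. The side condition $\epsilon=\omega(1/\sqrt{n})$ is inherited from the underlying \partition analysis of~\cite{CorusOlivetoYazdaniAIJ2019}, where it guarantees the concentration of the transferred small-job weight that underlies the placement probability.
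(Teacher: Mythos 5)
Your proposal is correct and follows essentially the same route as the paper, whose entire proof is the observation that the argument of Theorem~\ref{th:part-doerr} carries over verbatim once the normalising constant $C_{n/2}^{\beta}$ is replaced by $S_{n}^{\beta}$, the single-bit-flip probability by $\Theta\left((S_{n}^{\beta})^{-1}\right)$, and the probability of the escaping mutation of size $n(\epsilon-\epsilon^{2})$ by $\Theta\left((S_{n}^{\beta})^{-1}(n(\epsilon-\epsilon^{2}))^{-\beta}\right)$. Your additional observations (that $n(\epsilon-\epsilon^{2})\leq n/4<n/2$ keeps the escape on the decreasing branch of the symmetric distribution, and that the one-evaluation-per-operation behaviour of HMP$_\beta$ is what avoids the extra $2^{2/\epsilon}$ repetition factor of Theorem~\ref{thm:approxais}) are exactly the details the paper leaves implicit.
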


\begin{proof}
The proof is identical to that of Theorem~\ref{th:part-doerr} except for the probability of implementing a single bit-flip, which is at least $(S_{n}^{\beta})^{-1}e^{-1}$ for the (1+1)~IA$_\beta$ and the minimum probability of flipping $k$ bits for any $k\in[n]\cup\{0\}$ which is $(S_{n}^{\beta})^{-1}n^{-\beta}$. 
\end{proof}


\begin{table*}[t]
\caption{
{\color{black}
Expected runtimes of the standard (1+1)~EA versus various hypermutation based algorithms. The best asymptotic expected runtimes for each problem are in {\bf bold} font.
\\$^*$: The asymptotic expected runtimes for obtaining a $(1+\epsilon)$ approximation for any constant $\epsilon$.\\ 
$^{**}$: The expected time to find a feasible vertex cover using the node based representation.
\\$^{***}$: The expected time to find a $2$-approximation for vertex cover using the edge representation on a graph with $m$ edges.
\\$^\dagger$ : Holds only for gap sizes in the order of $\Omega(n)$ and at most $n(\frac{1}{4}-\epsilon)$ for some constant $\epsilon>0$. \\
$^{\ddagger}$: The expected runtime is obtained for $dup=1$ and $\gamma=\Omega(1/\log{n})$.\\
$^{\mathsection}$: Optimal runtime obtained when $\gamma=1/(n\log^2{n}$)}. The same expected runtime can be obtained by the $\beta$-algorithms for $\beta=\Omega(\log {n})$.\\
$^{\mathsection\mathsection}$: The expected runtime for the variant of the algorithm which implements hybrid ageing.
} 

 \label{table:afl}
\begin{center}
\begin{tabular}{ |l l l l l H H| }
 \hline
 Function&(1+1)~ EA&\oneoneiahype & \fastianp & Opt-IA &Fast Opt-IA$_\gamma$ &\foea
\T\B \\ \hline
 \textsc{OneMax}& \boldmath{$\Theta(n \log{n})$} \cite{DrosteJansenWegener2002} & $\Theta(n^2 \log{n})$ \cite{CorusOlivetoYazdani2019TCS}& \boldmath{$\Theta\left(n 
\log{n}\left(1+\gamma \log{n}\right)\right)$} &$O\left(\mu \cdot dup \cdot  n^2 \log{n}\right)$ &$O\left(\mu \cdot dup \cdot  n \log{n} (1+\gamma\cdot\log{n}) \right)$ &\textbf{$O(n \log{n})$}
\T\B \\ 
 \textsc{LeadingOnes}& \boldmath{$\Theta(n^{2})$} \cite{DrosteJansenWegener2002} & $\Theta(n^3)$ \cite{CorusOlivetoYazdani2019TCS}& \boldmath{$\Theta\left(n^2 
\left(1+\gamma \log{n}\right)\right)$} &$O\left(\mu \cdot dup \cdot  n^3\right)$ &$O\left(\mu \cdot dup \cdot  n^2 (1+\gamma\cdot\log{n}) \right)$ &$O(n^2)$ \T\B \\
 \textsc{Trap}& ${\Theta(n^n)}$ \cite{DrosteJansenWegener2002}& $\Theta(n^2 \log{n})$ \cite{CorusOlivetoYazdani2019TCS} 
& \boldmath{$\Theta\left(n \log{n}\left(1+\gamma \log{n}\right)\right)$}
 &$O\left(\mu \cdot dup \cdot  n^2 \log{n}\right)$ &$O\left(\mu \cdot dup \cdot n \log{n} (1+\gamma\cdot\log{n})\right)$ &$O(n^{\beta}) $\T\B \\
 $\textsc{Jump}_{d>1}$ 
& ${\Theta(n^d)}$ \cite{DrosteJansenWegener2002}
& $O(n\binom{n}{d})$ \cite{CorusOlivetoYazdani2019TCS} 
&$O\left(\left(\min{\{d,n-d\}}/\gamma\right) \cdot \left(1+\gamma 
\log{n}\right) \cdot \binom{n}{d}\right)$ 
&$O\left(\mu \cdot dup \cdot n \cdot \binom{n}{d}\right)$ &$O\left(\mu \cdot dup \cdot\left(\min{\{d,n-d\}}/\gamma\right) \cdot \left(1+\gamma \log{n}\right) \cdot \binom{n}{d}\right)$ 
&$O(d^{\beta} \binom{n}{d})$\T\B \\
 $\textsc{Cliff}_{d>1}$ & ${\Theta(n^d)}$ \cite{Jorge2015}& $O(n\binom{n}{d})$ \cite{CorusOlivetoYazdani2019TCS} & 
$O\left(\left(d/\gamma\right) \cdot \left(1+\gamma 
\log{n}\right) \cdot \binom{n}{d}\right)$ &$O(n\cdot \binom{n}{d})$ &? &$O(\tau\cdot n^{3/2})$\T\B \\
 $\textsc{HiddenPath}$& $n^{\Omega(n)}$\cite{CorusOlivetoYazdani2019TCS} & $n^{\Omega(\log{n})}$ \cite{CorusOlivetoYazdani2019TCS} & 
$n^{\Omega(\log{n})}$  &$O(\tau\mu n+\mu n^{7/2})^{\ddagger}$\cite{CorusOlivetoYazdani2019TCS} &$O(\tau\mu n+\mu n^{5/2}\log{n})$& $n^{\Omega(\log{n})}$ \cite{}\T\B \\
$\textsc{Partition}^{*}$
& $n^{\Omega(n)}$ \cite{Witt2005}
& $O(n^3)$ \cite{CorusOlivetoYazdaniAIJ2019} 
& \boldmath{$O(n^2 \cdot (1+\gamma \log{n}))$ }& $O(\mu \cdot dup \cdot n^3)$& ?& $O(n^2)$\T\B  \\
$\textsc{Vertex Cover}^{**}$&  \boldmath{$\Theta(n \log{n})$} {\color{black}\cite{JansenOlivetoZargesFOGA2013}} & $\Theta(n^2 \log{n})$ & \boldmath{$\Theta\left(n 
\log{n}\left(1+\gamma \log{n}\right)\right)$} &$O(\mu \cdot dup \cdot n^2 \log{n})$  &? &$\Theta(n \log{n})${\color{black}\cite{}}\T\B  \\
$\textsc{Vertex Cover}^{***}$&  \boldmath{$\Theta(m \log{m})$} \cite{JansenOlivetoZargesFOGA2013} & $\Theta(m^2 \log{m})$ & \boldmath{$\Theta\left(m 
\log{m}\left(1+\gamma \log{m}\right)\right)$} &$O(\mu \cdot dup \cdot m^2 \log{m})$  &? &$\Theta(m \log{m})${\color{black}\cite{}}\T\B  \\
\hline
 \end{tabular}
\end{center}
\end{table*}

\begin{table*}
\begin{center}
\begin{tabular}{ |l H H H H l l l| }
 \hline
 Function&(1+1)~ EA&\oneoneiahype & \fastia & Opt-Ia &Fast Opt-IA$_\gamma$ &\foea & Fast (1+1)~IA$_\beta$
\T\B \\ \hline
 \textsc{OneMax}& $\Theta(n \log{n})$ \cite{DrosteJansenWegener2002} 
& $\Theta(n^2 \log{n})$ \cite{CorusOlivetoYazdani2019TCS}
& $\Theta\left(n 
\log{n}\left(1+\gamma \log{n}\right)\right)$ 
&$O\left(\mu \cdot dup \cdot  n^2 \log{n}\right)$ 
&\boldmath{$O\left(\mu \cdot dup \cdot  n \log{n} (1+\gamma\cdot\log{n}) \right)$ }
&\boldmath{$O(n \log{n})$}
&\boldmath{$O(n \log{n})$}
\T\B \\ 
 \textsc{LeadingOnes}& \boldmath{$\Theta(n^{2})$} \cite{DrosteJansenWegener2002} 
& $\Theta(n^3)$ \cite{CorusOlivetoYazdani2019TCS}
&\boldmath{ $\Theta\left(n^2 \left(1+\gamma \log{n}\right)\right)$}
&$O\left(\mu \cdot dup \cdot  n^3\right)$ 
&\boldmath{$O\left(\mu \cdot dup \cdot  n^2 (1+\gamma\cdot\log{n}) \right)$} 
&\boldmath{$O(n^2)$}&\boldmath{$O(n^2)$} \T\B \\
 \textsc{Trap}
& ${\Theta(n^n)}$ \cite{DrosteJansenWegener2002}
& $\Theta(n^2 \log{n})$ \cite{CorusOlivetoYazdani2019TCS} 
& $\Theta\left(n \log{n}\left(1+\gamma \log{n}\right)\right)$ 
&$O\left(\mu \cdot dup \cdot  n^2 \log{n}\right)$ 
&\boldmath{$O\left(\mu \cdot dup \cdot n \log{n} (1+\gamma\cdot\log{n})\right)$ }
&$O(n^{\beta}) $&\boldmath{$O(n \log{n})$}\T\B \\
 $\textsc{Jump}_{d>1}$ & ${\Theta(n^d)}$ \cite{DrosteJansenWegener2002}& $O(n\binom{n}{d})$ \cite{CorusOlivetoYazdani2019TCS} 
&$O\left(\left(d/\gamma\right) \cdot \left(1+\gamma 
\log{n}\right) \cdot \binom{n}{d}\right)$ &$O\left(\mu \cdot dup \cdot n \cdot \binom{n}{d}\right)$ &$O\left(\mu \cdot dup \cdot\left(\min{\{d,n-d\}}/\gamma\right) \cdot \left(1+\gamma 
\log{n}\right) \cdot \binom{n}{d}\right)$ &$O(d^{\beta} \binom{n}{d})$  &{\color{black}\boldmath{$O\left(\left(\min{\{d,n-d\}}\right)^{\beta} \binom{n}{d}\right)
$}}\T\B \\
 $\textsc{Cliff}_{d>1}$ 
& ${\Theta(n^d)}$ \cite{Jorge2015}
& $O(n\binom{n}{d})$ \cite{CorusOlivetoYazdani2019TCS} 
& $O\left(\left(d/\gamma\right) \cdot \left(1+\gamma 
\log{n}\right) \cdot \binom{n}{d}\right)$ 
&$O(n\cdot \binom{n}{d})$\cite{CorusOlivetoYazdani2019TCS}  
&\boldmath{$O\left(\frac{\mu\cdot dup \cdot \tau\cdot n^2}{d^2}+n\log{n}\right)^{\mathsection}$} 
&$O(\tau\cdot n^{3/2})^{\dagger\mathsection\mathsection}$.
&$O(\tau\cdot n^{3/2})^{\dagger\mathsection\mathsection}$.\T\B \\
 $\textsc{HiddenPath}$& $n^{\Omega(n)}$\cite{CorusOlivetoYazdani2019TCS} & $n^{\Omega(\log{n})}$ \cite{CorusOlivetoYazdani2019TCS} & 
$n^{\Omega(\log{n})}$  &$O(\tau\mu n+\mu n^{7/2})$\cite{CorusOlivetoYazdani2019TCS} &$O(\tau\mu n+\mu n^{5/2}\log{n})^{\ddagger}$& \boldmath{$O(\tau n+ n^{5/2})^{\mathsection\mathsection}$ }&  \boldmath{$O(\tau n+ n^{5/2})^{\mathsection\mathsection}$ }\T\B \\
$\textsc{Partition}^{*}$& $n^{\Omega(n)}$ \cite{Witt2005}& $O(n^3)$ \cite{CorusOlivetoYazdaniAIJ2019} &\boldmath{ $O(n^2 \cdot (1+\gamma \log{n}))$} &$O(\mu \cdot dup \cdot n^3)$&\boldmath{ $O(\mu \cdot dup \cdot n^2 (1+\gamma \log{n}))$}&\boldmath{ $O(n^2)$}
& \boldmath{$O(n^2)$}
\T\B  \\
$\textsc{Vertex Cover}^{**}$&  $\Theta(n \log{n})$ {\color{black}\cite{JansenOlivetoZargesFOGA2013}} & $\Theta(n^2 \log{n})$ & $\Theta\left(n 
\log{n}\left(1+\gamma \log{n}\right)\right)$ &$O(\mu \cdot dup \cdot n^2 \log{n})$ &\boldmath{$\Theta(\mu \cdot dup \cdot n \log{n}(1+\gamma \log{n}))$}&\boldmath{$O(n \log{n})$}&\boldmath{$\Theta(n \log{n})$}\T\B  \\
$\textsc{Vertex Cover}^{***}$&  \boldmath{$\Theta(m \log{m})$} {\color{red}\cite{}} & $\Theta(m^2 \log{m})$ & \boldmath{$\Theta\left(m 
\log{m}\left(1+\gamma \log{m}\right)\right)$} &$O(\mu \cdot dup \cdot m^2 \log{m})$  &\boldmath{$O(\mu \cdot dup \cdot m \log{m}(1+\gamma \log{m}))$} &\boldmath{$\Theta(m \log{m})${\color{black}}}&\boldmath{$\Theta(m \log{m})$}\T\B  \\
\hline
 \end{tabular}
\end{center}
\end{table*}

Thus, the expected runtime of the (1+1)~IA$_\beta$ is in the order of $O(n^2)$ for $1<\beta\leq 2$. 
For $\textsc{OneMax}$ and $\textsc{LeadingOnes}$, its expected runtime asymptotically matches the best possible achievable by unbiased unary randomised search heuristics due to the constant probability of flipping a single bit for any constant $\beta>1$.   
If coupled with ageing, a logarithmic factor may be shaved off from the upper bound on the expected runtime of the the (1+1)~IA$_\beta$ for the $\textsc{HiddenPath}$ function compared to that of the \fastoptia. 
This is due to the higher probability of the (1+1)~IA$_\beta$ of performing 2-bit flips on the slope leading to the hidden path.
The only advantage of \hypfcm over the symmetric power law operator appears for the $\textsc{Cliff}_d$ function which the former can optimise in expected $O(n \log{n})$ fitness evaluations if  is used with ageing, while we could only bound the expected runtime for the (1+1)~IA$_\beta$ by  $O(\tau\cdot n^{3/2})$.
Recall that for the $O(n \log{n})$ bound, a very small $\gamma$ value is required, effectively reducing the hypermutation operator  \hypfcm to perform single bit flips most of the time.
A similar behaviour may be achieved by the (1+1)~IA$_\beta$ by increasing its parameter value to $\beta=\Omega(\log n)$. However the same drawbacks as for the (1+1)~IA$_\gamma$ would be obtained i.e., the algorithm would rarely flip more than one bit.

 Apart for $\textsc{Cliff}_d$, where its upper bound matches that of the \foea, the new symmetric heavy-tailed operator performs asymptotically better, or at least as well as all the alternative operators discussed in this paper while allowing a more robust behaviour for escaping from the local optima of the $\textsc{Jump}$ function compared to the \fastoptia. A summary of the performance of all the considered operators and algorithms is provided in Table~\ref{table:afl}.
 
\section{Conclusion}
Due to recent analyses of increasingly realistic evolutionary algorithms,
higher mutation rates than 
previously recommended, or than those used as a rule of thumb, are gaining significant 
interest in the evolutionary computation community~\cite{OlivetoLehreNeumann2009,Doerretal2017,CorusOlivetoTEVCsteadyGA2017,JumpTEVC2017}.

Such high mutation rates are naturally present in artificial immune systems.
However, previous work has highlighted serious drawbacks of the hypermutation operators traditionally used in the AIS field.
Firstly, while they allow to escape from local optima faster than the standard bit mutations (SBM) used by evolutionary algorithms, they do so at the expense of often being a linear factor slower at hillclimbing in the exploitation phases of the search~\cite{CorusOlivetoYazdani2019TCS,JansenZargesTCS2011,JansenOlivetoZarges2011vertex}. Secondly, the `hypermutations with mutation potential' (HMP) operators used in Opt-IA cancel out the power of the ageing operator to escape from local optima by accepting solutions of lower quality.
We have presented an alternative HMP operator, \hypfcm,  that provably removes these drawbacks and we have rigorously shown, 
for several significant benchmark problems from the literature and for classical problems from combinatorial optimisation, that it
maintains the exploration characteristics of the traditional operators while outperforming them by up to linear factor speed-ups in the exploitation phases.
These speed-ups at hillclimbing allow them to quickly provide feasible solutions, and high quality approximations for the NP-Hard \partition and \textsc{Vertex cover} problems a linear factor faster than the HMP operators traditionally used in the literature. A careful comparison with other {\it fast} mutation operators from the literature confirms the validity of our proposed {\it fast} hypermutation operators.

The main modification that allows to achieve the presented improvements over the standard static HMP with FCM is to sample the solution after the $i$th bit-flip stochastically 
rather than deterministically
with probability one. 
Importantly, by using a symmetric power-law distribution, we have also shown how it is possible to avoid using the FCM mechanism altogether and just evaluate one search point per hypermutation. This was probably the originally desired behaviour for the hypermutation operator of Opt-IA. However, the standard static HMP is inefficient for any function with up to a polynomial number of optima without the use of FCM~\cite{CorusOlivetoYazdani2019TCS}.
Furthermore, the power-law distribution allows the {\it fast} HMP operators to work in harmony with ageing to escape from local optima by accepting solutions of inferior quality. This behaviour was not possible with the original static HMP, thus considerably limiting the power of the Opt-IA algorithm where both operators are employed.

We point out that while the presented operators naturally fit within AISs, there is no reason to believe that they should not also be effective if employed within any randomised search heuristic, including EAs.

Since the optimal values for the distribution parameters $\gamma$ and $\beta$ are different in the exploitation  and the exploration phases,
future work may consider an adaptation of the parameters 
to automatically allow them to increase and decrease throughout the run~\cite{DLOW2018,DoerrDoerr2018,WarwickerAAAI20}.
Furthermore, the performance of the proposed operators should be evaluated experimentally for classical combinatorial optimisation problems, complementing the theoretical analyses of the worst-case performance, and for real-world applications.

\bibliographystyle{unsrt}
\bibliography{mybib2} 

\ifCLASSOPTIONcaptionsoff
  \newpage
\fi

\begin{IEEEbiography}{Michael Shell}
Biography text here.
\end{IEEEbiography}
%
\begin{IEEEbiographynophoto}{John Doe}
Biography text here.
\end{IEEEbiographynophoto}
%
%
\begin{IEEEbiographynophoto}{Jane Doe}
Biography text here.
\end{IEEEbiographynophoto}




\end{document}